\newtheorem{assumption}{Assumption}{\bfseries}{\itshape}
\def\bs{\mathbf{s}}
\def\bc{\mathbf{c}}
\def\bX{\mathbf{X}}
\def\bx{\mathbf{x}}
\def\bZ{\mathbf{Z}}
\def\bY{\mathbf{Y}}
\def\bI{\mathbf{I}}
\DeclareMathOperator*{\argmin}{arg\,min}
\begin{document}

\title{A Statistical Learning View of Simple Kriging}


\author{Emilia Siviero\inst{1} \and
Emilie Chautru\inst{2} \and
Stephan Clémençon\inst{1}
}
\authorrunning{Emilia Siviero}

%
%
\institute{LTCI, Télécom Paris, Institut Polytechnique de Paris, France\\
\email{emilia.siviero@telecom-paris.fr}\\ \and
Centre de Geosciences, Mines ParisTech, France
}
\maketitle              
\thispagestyle{plain}\pagestyle{plain}


\begin{abstract}
    In the Big Data era, with the ubiquity of geolocation sensors in particular, massive datasets exhibiting a possibly complex spatial dependence structure are becoming increasingly available. In this context, the standard probabilistic theory of statistical learning does not apply directly and guarantees of the generalization capacity of predictive rules learned from such data are left to establish. We analyze here the \textit{simple Kriging} task, the flagship problem in Geostatistics, from a statistical learning perspective, \textit{i.e.} by carrying out a nonparametric finite-sample predictive analysis. Given $d\geq 1$ values taken by a realization of a square integrable random field $X=\{X_s\}_{s\in S}$, $S\subset \mathbb{R}^2$, with unknown covariance structure, at sites $s_1,\; \ldots,\; s_d$ in $S$, the goal is to predict the unknown values it takes at any other location $s\in S$ with minimum quadratic risk. The prediction rule being derived from a training spatial dataset: a single realization $X'$ of $X$, independent from those to be predicted,  observed at $n\geq 1$ locations $\sigma_1,\; \ldots,\; \sigma_n$ in $S$. Despite the connection of this minimization problem with kernel ridge regression, establishing the generalization capacity of empirical risk minimizers is far from straightforward, due to the non independent and identically distributed nature of the training data $X'_{\sigma_1},\; \ldots,\; X'_{\sigma_n}$ involved in the learning procedure. In this article, non-asymptotic bounds of order $O_{\mathbb{P}}(1/\sqrt{n})$ are proved for the excess risk of a \textit{plug-in} predictive rule mimicking the true minimizer in the case of isotropic stationary Gaussian processes, observed at locations forming a regular grid in the learning stage. These theoretical results, as well as the role played by the technical conditions required to establish them, are illustrated by various numerical experiments, on simulated data and on real-world datasets, and hopefully pave the way for further developments in statistical learning based on spatial data.
\end{abstract}



\section{Introduction}\label{sec:intro}
In recent years, a variety of statistical learning techniques --- including boosting methods, support vector machines, neural networks among others --- have been successfully developed for performing various tasks such as classification, regression or clustering. Machine learning techniques are now supported by a very sound probabilistic theory, see \textit{e.g.} \citep{probabilistictheory, boucheron2013concentration}, guaranteeing the generalization capacity of empirically learned predictive rules under mild assumptions. However, the validity framework of statistical learning remains mainly confined to the case of independent and identically distributed (i.i.d.) training data.
At the same time, spectacular progress has been made in the collection, management and warehousing of massive datasets for scientific, engineering, medical or commercial purposes, relying on modern technologies, such as satellite imagery or geophysical tomography. These data tend to exhibit complex dependence structures, resulting in the violation of the i.i.d. assumption, under which the generalization ability of predictive rules --- constructed from training examples by means of statistical learning techniques --- is established in general.
Whereas the case of time series, which can rely on concentration results for ergodic processes, is receiving increasing attention (see \textit{e.g.} \citep{Steinwart09, ChristStein2009, KM14, Hanneke2017, CCB19}), that of spatial data is in contrast poorly documented  in the statistical learning literature. The analysis of such structured data finds applications in various fields, ranging from mining exploration to agronomy through epidemiology.
Generally viewed as a realization of a second-order random field $X=(X_s)_{s\in S}$ taking its values in $\mathbb{R}$ and indexed by a spatial set $S\subset \mathbb{R}^2$, they are traditionally analyzed in Geostatistics by stipulating a rigid and parsimonious parametric form for the underlying law, adapted to the physical phenomenon of interest. Once the parameters are assessed by means of standard $M$-estimation techniques such as MLE, the estimated model is used for various (predictive) tasks, see \textit{e.g.} \citep{gaetan2009spatial}.

\par The massive character of spatial datasets now available suggests resorting to more flexible, nonparametric, approaches to analyze spatial observations. However, classic tools are simply not designed to cope with data that exhibit such a strong dependence structure. New (non-asymptotic) results must be developed, in order to establish generalization guarantees. This article aims at contributing to the design and the study of statistical learning methods applied to spatial data, by investigating \textit{Kriging}, the flagship problem in Geostatistics, introduced by \citep{krige1951statistical} and later in the seminal work of \citep{matheron62}. In the standard Kriging setup, the spatial process $X$ is observed at $d\geq 1$ sites $s_1,\; \ldots,\; s_d$ in the domain $S$. Based on a (generally non i.i.d.) training dataset, a single realization $X'$ of $X$ observed at $n\geq 1$ locations $\sigma_1,\; \ldots,\; \sigma_n$ in $S$, the goal is to build a map $f:S\times \mathbb{R}^d\rightarrow \mathbb{R}$ in order to predict $X$ at all unobserved sites $s\in S$ with minimum mean squared error (MSE). In \textit{simple Kriging}, the mean of $X$ is supposed to be known and the goal is to search for a predictive map $f(s)=f(s,\; (X_{s_1},\; \ldots,\; X_{s_d}) )$ that is linear in $\mathbf{X}(\mathbf{s}_d):=(X_{s_1},\; \ldots,\; X_{s_d})$. In this article, we start with recalling that the optimal predictor of this type (which can be derived by ordinary least squares) has the same form as a kernel ridge regressor (KRR), once the Gram matrix is replaced with the true covariance matrix of the random vector $\mathbf{X}(\mathbf{s}_d)$. From a nonparametric statistical perspective, a predictive map can be built based on the spatial observations $X'_{\sigma_1},\; \ldots,\; X'_{\sigma_n}$ by means of a plug-in strategy, that can also be viewed as empirical risk minimization (ERM),  under the hypothesis of stationarity (combined with appropriate conditions on the decorrelation rate). Then, the covariance function can be estimated in a frequentist manner under additional mild smoothness assumptions and an empirical version of the optimal linear predictor is obtained by replacing the (unknown) covariance with the estimator in the KRR type formula.
Assuming classically that the observed sites $\sigma_1,\; \ldots,\; \sigma_n$ form a (regular) grid of $S$, denser and denser as $n$ grows (\textit{i.e.} placing ourselves in the \textit{in-fill} asymptotic setting), and that $X$ is a Gaussian spatial process, we establish non-asymptotic bounds for the excess risk of the predictive function thus constructed. These generalization bounds rely in particular on recent concentration results for sums of Gamma random variables, see \textit{e.g.} \citep{bercu2015concentration, Wang_Ma}. To the best of our knowledge, this is the only theoretical analysis of this nature documented in the statistical and machine learning literature. In order to illustrate the generalization capacity of the nonparametric predictive approach analyzed in this article and the impact of the conditions stipulated to guarantee it, a number of numerical experiments are carried out.

\par The article is structured as follows. In Section \ref{sec:background}, the main notations are introduced and the basic concepts pertaining to the theory of Kriging in Geostatistics --- involved in the subsequent analysis --- are briefly recalled, together with some key results related to kernel ridge regression. The main results of the paper are stated in Section \ref{sec:main}, where the connection between KRR and simple Kriging is preliminarily highlighted and excess risk bounds for the empirical version of the simple Kriging rule are established under appropriate assumptions, discussed at length. Numerical results, on both simulated and real data, are displayed in Section \ref{sec:num} for illustration purposes, while some concluding remarks, concerning the possible extension of the present study, are collected in Section \ref{sec:concl}. The proofs of the main results are displayed in the Appendix, together with additional comments and additional experimental results.

\section{Theoretical Background}\label{sec:background}

Here and throughout, the indicator function of any event $\mathcal{E}$ is denoted by $\mathbb{I}\{\mathcal{E}\}$, the Dirac mass at any point $x$ by $\delta_x$, the transpose of any matrix $M$ by $M^{\top}$, the trace of any square matrix $A$ by $Tr(A)$. Vectors of finite dimension $d \in \mathbb{N}^*$ are classically viewed as column vectors, the Euclidean norm in $\mathbb{R}^d$ is denoted by $\|.\|$ and the corresponding inner product by $\langle.,.\rangle$.
By $Cov(Z_1,\; Z_2)$ is meant the covariance of any pair $(Z_1, Z_2)$ of square integrable real-valued random variables defined on a same probability space, by $Var(Z)$ the covariance matrix of any square integrable random vector $Z$ and by $\vert E \vert$ the cardinality of any finite set. The operator norm of any $d\times d$ matrix $A$ is denoted by $\vert\vert\vert A\vert\vert\vert=\sup\{\vert\vert Av\vert\vert:\; v\in\mathbb{R}^d,\;  \vert\vert v\vert\vert=1\}$.

\subsection{Statistical Kriging}\label{subsec:kriging}
Let $S\subset \mathbb{R}^2$ be a (Borel measurable) spatial set and $X$ be a second-order random field on $S$ with $\mathbb{R}$ as state space, \textit{i.e.} a collection $X=\{X_s:\; s\in S\}$ of real valued square integrable random variables (r.v.) defined on the same probability space, $(\Omega, \mathcal{F}, \mathbb{P})$ say, indexed by $s\in S$. We denote by $\mu:s\in S\mapsto \mathbb{E}[X_s]$ its mean and by $C:(s,t)\in S^2\mapsto C(s,t)=Cov(X_s,\; X_t)$ its covariance functions. As formulated in the seminal contribution \citep{matheron62}, the problem of ordinary \textit{Kriging} can be described as follows. Based on the observation of the spatial process at a finite number of points $s_1,\; \ldots,\; s_d$ in the spatial set $S$, the goal pursued is to build a predictor $\widehat{X}_s$ of $X_s$ at a given unobserved site $s\in S$. The accuracy of the prediction can be measured by the mean squared error
\begin{equation}\label{eq:MSE}
L\left(s,\widehat{X}_s\right)=\mathbb{E}_X\left[\left(\widehat{X}_s-X_s\right)^2\right].
\end{equation}
 
 \noindent{\bf Simple Kriging.} In Kriging in its simplest form, the mean $\mu(\cdot)$ is supposed to be known and, rather than recentering it, it is assumed that the random field $X$ is centered. 
\begin{assumption}\label{hyp:simple}
The random field $X$ is centered: $\mu\equiv 0$.
\end{assumption} 
 One also assumes that the predictor $\widehat{X}_s$ is of the form of a \textit{linear} combination of the $X_{s_i}$'s
 \begin{equation}\label{eq:pred_linear}
 \widehat{X}_{s,\Lambda_d(s)}=\lambda_{1}(s)X_{s_1}+\ldots+\lambda_{d}(s)X_{s_d},
 \end{equation}
  where $\Lambda_d(s)=(\lambda_1(s),\; \ldots,\; \lambda_d(s))\in \mathbb{R}^d$. The optimal predictor of this form regarding the expected prediction error can be deduced from a basic variance computation, it is described below.
  \begin{lemma}\label{lem:opt_kriging}
  For $d\geq 1$, let $\bs_d=(s_1,\; \ldots,\; s_d)$, $\bX(\bs_d)=(X_{s_1},\; \ldots,\; X_{s_d})$, $\Sigma(\bs_d)=Var(\bX(\bs_d))$ and define $\bc_d(s)=(Cov(X_s,X_{s_1}),\; \ldots,\; Cov(X_s, X_{s_d}))$. Suppose that the matrix $\Sigma(\bs_d)$ is positive definite. Then, the solution of the minimization problem
    \begin{equation*}
  \min_{\Lambda_d(s) \in \mathbb{R}^d}L\left(s, \widehat{X}_{s, \Lambda_d(s)}\right)
  \end{equation*}
  is unique and given by
  \begin{equation}\label{eq:theo_krig_def}
    \Lambda_d^*(s)=\; \Sigma(\bs_d)^{-1}\bc_d(s).
  \end{equation}
  In addition, the minimum is equal to
  \begin{equation*}
 L\left(s, \widehat{X}_{s, \Lambda_d^*(s)}\right)=Var(X_s)-\; \bc_d(s)^{\top}\Sigma(\bs_d)^{-1}\bc_d(s).
  \end{equation*}
  \end{lemma}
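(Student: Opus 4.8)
The plan is to exploit the fact that, under Assumption~\ref{hyp:simple}, the mean squared error \eqref{eq:MSE} of the linear predictor \eqref{eq:pred_linear} is a quadratic function of the coefficient vector $\Lambda_d(s)$, and then to minimize it by elementary convex optimization (equivalently, to invoke the orthogonal projection theorem in $L^2(\Omega,\mathcal{F},\mathbb{P})$).

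First I would write $\widehat{X}_{s,\Lambda_d(s)}=\Lambda_d(s)^{\top}\bX(\bs_d)$ and, using $\mathbb{E}[X_s]=0$ and $\mathbb{E}[\bX(\bs_d)]=0$ (Assumption~\ref{hyp:simple}), expand the square to get
\begin{equation*}
L\left(s,\widehat{X}_{s,\Lambda_d(s)}\right)=\mathbb{E}_X\left[\left(\Lambda_d(s)^{\top}\bX(\bs_d)-X_s\right)^2\right]=\Lambda_d(s)^{\top}\Sigma(\bs_d)\Lambda_d(s)-2\,\Lambda_d(s)^{\top}\bc_d(s)+Var(X_s).
\end{equation*}
Since $\Sigma(\bs_d)$ is positive definite, the right-hand side is a strictly convex quadratic in $\Lambda_d(s)$, hence it has a unique global minimizer, characterized by the vanishing of the gradient $2\Sigma(\bs_d)\Lambda_d(s)-2\bc_d(s)=0$, that is $\Lambda_d^*(s)=\Sigma(\bs_d)^{-1}\bc_d(s)$ (the inverse existing precisely because $\Sigma(\bs_d)$ is positive definite). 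Substituting this value back into the expansion and simplifying then yields $L(s,\widehat{X}_{s,\Lambda_d^*(s)})=Var(X_s)-\bc_d(s)^{\top}\Sigma(\bs_d)^{-1}\bc_d(s)$.

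An equivalent, more geometric route I could take is the following: the linear span $\mathcal{H}_d=\mathrm{span}(X_{s_1},\ldots,X_{s_d})\subset L^2(\Omega,\mathcal{F},\mathbb{P})$ is finite-dimensional, hence closed, so $X_s$ admits a unique orthogonal projection onto $\mathcal{H}_d$; writing it as $\Lambda_d(s)^{\top}\bX(\bs_d)$ and imposing orthogonality of the residual to each coordinate, $\mathbb{E}[(X_s-\Lambda_d(s)^{\top}\bX(\bs_d))X_{s_i}]=0$ for $i=1,\ldots,d$, gives exactly the normal equations $\Sigma(\bs_d)\Lambda_d(s)=\bc_d(s)$, while Pythagoras' identity delivers the value of the minimum. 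Positive definiteness of $\Sigma(\bs_d)$ guarantees that the representation $\Lambda_d(s)\mapsto\Lambda_d(s)^{\top}\bX(\bs_d)$ is injective, so that the minimizing coefficient vector itself, and not merely the projection, is unique.

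There is essentially no serious obstacle here: the only points requiring (mild) care are the use of Assumption~\ref{hyp:simple} to discard the first-order terms arising from the means when expanding the square, and the two-fold invocation of positive definiteness of $\Sigma(\bs_d)$ — once to ensure the quadratic is strictly convex (so that the critical point is the unique minimizer rather than a saddle point or a non-unique flat minimum) and once to ensure $\Sigma(\bs_d)^{-1}$ is well defined. The evaluation of the residual variance is then a one-line substitution.
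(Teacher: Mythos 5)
Your proof is correct and follows essentially the same route as the paper: expanding the mean squared error as a quadratic form $Var(X_s)+\Lambda_d(s)^{\top}\Sigma(\bs_d)\Lambda_d(s)-2\,\bc_d(s)^{\top}\Lambda_d(s)$ (using the centering assumption), solving the resulting linear system to get $\Lambda_d^*(s)=\Sigma(\bs_d)^{-1}\bc_d(s)$, and substituting back for the minimal error, with the $L^2$ orthogonal-projection viewpoint also invoked in the paper. Your added remarks on strict convexity and injectivity make the uniqueness claim slightly more explicit than the paper's terse argument, but the method is the same.
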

  
  \begin{proof}
  The optimality derives from the basic properties of orthogonal projection in the $L_2$ space and the closed analytical form for the minimizer of
\begin{multline}
    L\left(s, \widehat{X}_{s, \Lambda_d(s)}\right) = Var\left( \Lambda_d(s)^\top \bX(\bs_d) - X_s\right)\\ = Var(X_s) + \Lambda_d(s) ^\top \Sigma(\bs_d) \Lambda_d(s) - 2 \bc_d(s)^\top \Lambda_d(s),
    \label{eq:lem1_L}
\end{multline}
is obtained by solving a linear system: $\Lambda_d^*(s)=\; \Sigma(\bs_d)^{-1}\bc_d(s)$. The minimal mean squared error is then:
\begin{multline*}
L\left(s, \widehat{X}_{s, \Lambda_d^*(s)}\right)  = Var(X_s) + \bc_d(s)^\top\Sigma(\bs_d)^{-1} \Sigma(\bs_d) \left(\Sigma(\bs_d)^{-1}\right)^\top \bc_d(s)\\  \qquad - 2 \bc_d(s)^\top\Sigma(\bs_d)^{-1} \bc_d(s) = Var(X_s) - \bc_d(s)^\top\Sigma(\bs_d)^{-1}\bc_d(s).
\end{multline*}
\end{proof}
  
\begin{remark}{\sc (Alternative to ERM)} We point out that, in order to avoid the restrictions stemmimg from the sub-Gaussianity assumptions, an alternative to ERM in regression, relying on a tournament method combined with the median-of-means (MoM) estimation procedure, has recently received much attention in the statistical learning literature, see \citep{lugosi2016risk} for further details. Extension of the MoM approach to Kriging will be the subject of a future work.
\end{remark}
  
  The proof above is well-known folklore in Statistics: it is exactly the same as that of the result giving the optimal solution  in linear regression.  The major difference between the two frameworks is of statistical nature. If one tries to predict $X_s$ by a linear combination of the components of the observed random vector $\bX(\bs_d)$ in simple Kriging, statistical fitting does not rely on the observation of $n\geq 1$ independent copies of the pair output-input $(X_s, \bX(\bs_d))$ but on the observation of a single realization $X'$ of the random field $X$ at certain sites $\sigma_1,\; \ldots,\; \sigma_n$ solely and some appropriate structural (possibly parametric) assumptions regarding the second-order structure of the random field $X$, see \textit{e.g.} \citep{Chiles.etal1999}. Before analyzing simple Kriging from a statistical learning perspective, a few remarks are in order.
  
  \begin{remark} {\sc (Gaussian random fields)} Notice that, in the case where the random field $X$ is Gaussian, we have:
  $  \widehat{X}_{s,\Lambda_d^*(s)}=\mathbb{E}[X_s \mid \bX(\bs_d)]$.
  Hence $ \widehat{X}_{s,\Lambda_d^*(s)}$ is the minimizer of the quadratic error $L(s,\widehat{X}_s)$ over the set of \textit{all} predictors $\widehat{X}_s$.
  \end{remark}
  \begin{remark}\label{rem:exact_interpolator} {\sc (Exact interpolation)} We point out that, in contrast to the least squares solution in a linear regression problem in general, the solution of the simple Kriging problem is an exact interpolator, in the sense that $\widehat{X}_{s_i,\Lambda_d^*(s_i)}=X_{s_i}$ for all $i\in\{1,\; \ldots,\; d\}$, see \textit{e.g.} \citep[p. 129]{Cressie_1993}.
  \end{remark}
  
  \begin{remark}{\sc (Alternative framework)}\label{rem:alt_framework}
  The spatial prediction problem has been investigated in a different statistical framework, much more restrictive regarding practical applications, assuming the observation of $N\geq 1$ independent copies of $\bX(\bs_d)$, in \citep{8645258}, where a non-asymptotic analysis is carried out as $N$ increases.
  We also point out that, instead of the classic \textit{in-fill} setting considered in subsection \ref{subsec:nonpar_cov_est} (stipulating that the grid $\sigma_1,\; \ldots,\; \sigma_n$ formed by the observed sites in $S$ in the learning/estimation stage is denser and denser, while $S$ is fixed), the \textit{out-fill} framework can be considered alternatively (prediction accuracy is then analyzed as the spatial domain $S$ becomes wider and wider) or combined with the \textit{in-fill} model in a hybrid fashion, see \textit{e.g.} \citep{Hall} and the references mentioned in the Appendix.
  \end{remark}

 \begin{remark}{\sc (Unknown constant mean)}
 In this paper, focus is on \textit{simple Kriging} (\textit{cf} Assumption \ref{hyp:simple}, see Chapter 3 in \cite{Cressie_1993}). However, when the mean is unknown but supposedly constant equal to $\mu \in \mathbb{R}$, one may naturally recenter the spatial process by a consistent estimate $\hat{\mu}$ of $\mu$, without damaging the learning rates established in Section \ref{sec:main} provided its accuracy is high enough.
\end{remark}
  
  As recalled in the subsection below, in spite of the major difference regarding the statistical setup, the Kriging and regression problems share similarities in the optimization approach considered to solve them.

\subsection{Kernel Ridge Regression}
In the most basic and usual regression setup, one has a system consisting of a square integrable random output $Y$ taking its values in $\mathbb{R}$ and a random vector $\bZ=(Z_1,\; \ldots,\; Z_d)$ concatenating a set of random input/explanatory random variables. Based on a training sample $\{(Y_1,\bZ_1),\; \ldots,\; (Y_N, \bZ_N)\}$ composed of independent copies of the random pair $(Y,\bZ)$, the goal pursued is to build a predictor $\widehat{F}_N(z)$ mimicking the mapping $F^*(z)=\mathbb{E}[Y \mid \bZ ]$ that minimizes the MSE
\begin{equation}\label{eq:MSE_reg}
L(F)=\mathbb{E}\left[ \left( Y-F(\bZ) \right)^2 \right]
\end{equation}
over the ensemble of all possible measurable mappings $F:\mathbb{R}^d\rightarrow \mathbb{R}$ that are square integrable with respect to the distribution of $\bZ$. Many practical approaches are based on the Empirical Risk Minimization (ERM) principle, which consists in replacing in the optimization problem the criterion \eqref{eq:MSE_reg}, unknown just like $(Y,\bZ)$'s distribution, by a statistical version computed from the training sample available, in general
$$
\widehat{L}_N(F)=\frac{1}{N}\sum_{i=1}^N\left( Y_i -F(\bZ_i)\right)^2,
$$
and restricting minimization to a class $\mathcal{F}$ of predictors of controlled complexity, see \textit{e.g.} \citep{GKKW02}. Under the assumption that the random variables $Y$ and $\{F(\bZ):\; F\in \mathcal{F}\}$ have sub-Gaussian tails, the order of magnitude of the fluctuations of the maximal deviations $\sup_{F\in \mathcal{F}}\vert \widehat{L}_N(F)-L(F)\vert $ can be estimated and generalization bounds for the MSE of empirical risk minimizers can be established, see \textit{e.g.} \citep{LecMend16}. In \textit{linear regression}, the class considered is that composed of all linear functionals on $\mathbb{R}^d$, namely $\mathcal{F}=\{\langle \zeta, \cdot \rangle:\; \zeta\in \mathbb{R}^d \}$, where $\langle \cdot, \cdot \rangle$ denotes the usual inner product on $\mathbb{R}^d$. In linear ridge regression (LRR), one thus minimizes the empirical error plus a quadratic penalty term to avoid overfitting
$$
\frac{1}{2}\sum_{i=1}^N\left( Y_i-\; \zeta^\top\bZ_i \right)^2+\frac{\eta}{2}\vert\vert \zeta\vert\vert^2,
$$
where $\eta\geq 0$ is a tuning parameter that rules the trade-off between complexity penalization and goodness-of-fit (generally selected via cross-validation in practice). It yields 
\begin{equation}\label{eq:LRR}
\widehat{\zeta}_{\eta}=\; \bY_N^\top \mathcal{Z}_N \left( \eta \bI_d + \mathcal{Z}_N\; \mathcal{Z}_N^\top  \right)^{-1},
\end{equation}
where $\bI_d$ is the $d\times d$ identity matrix,  $\bY_N=(Y_1,\; \ldots,\; Y_N)$ and $\mathcal{Z}_N$ is the $d\times N$ matrix with $\bZ_1,\; \ldots,\; \bZ_N$ as column vectors, as well as the predictive mapping 
$
\widehat{F}_N(z)=\; \widehat{\zeta}_{\eta}^\top z$, $z\in \mathbb{R}^d$. The regularization term ensures that the matrix inversion involved in \eqref{eq:LRR} is always well-defined by bounding the smallest eigenvalues away from zero. In kernel ridge regression (KRR), one applies LRR in the feature space, \textit{i.e.} to the data $(Y_1,\Phi(\bZ_1)),\; \ldots,\; 
(Y_N, \Phi(\bZ_N))$, where $\Phi:\mathbb{R}^d\rightarrow \mathcal{H}$ is a \textit{feature map} taking its values in a reproducing kernel Hilbert space (RKHS) $(\mathcal{H},\; \langle \cdot,\cdot\rangle_{\mathcal{H}})$ associated to a (positive definite) kernel $K$ such that $\forall (z,z')\in \mathbb{R}^d\times \mathbb{R}^d$, $K(z,z')=\langle \Phi(z),\Phi(z')\rangle_{\mathcal{H}}$. By means of the kernel trick, the predictive mapping, linear in the feature space, can be written as 
\begin{equation}\label{eq:opt0}
\widetilde{F}_N(z)= \; \bY_N^\top(\eta \bI_N +\mathcal{K}_N)^{-1}\kappa_N(z),
\end{equation}
where $\bI_N$ is the $N\times N$ identity matrix, $\mathcal{K}_N$ is the Gram matrix with entries $K(\bZ_i,\bZ_j)=\langle \Phi(\bZ_i),\Phi(\bZ_j)\rangle_{\mathcal{H}}$ for $1\leq i,\; j\leq N$ and $\kappa_N(z)=(K(\bZ_1,z),\; \ldots,\; K(\bZ_N,z))\in \mathbb{R}^N$. One may refer to \citep[Chapter 9]{steinwart} for more details on support vector machines for regression.

\section{Main Results}\label{sec:main}

This section states the main results of the paper. As a first go, we highlight the similarity between KRR and the dual Kriging problem, the latter being viewed as a multitask learning problem with an infinite number of tasks. See also \citep{kanagawa2018gaussian} for a detailed discussion about the connections between Gaussian processes and kernel methods. We next explain how a solution to Kriging with statistical guarantees in the form of non-asymptotic learning rate bounds can be derived from an accurate estimator of the covariance under appropriate conditions.

\subsection{Viewing Dual Kriging as a KRR Problem}\label{subsec:connect}
The purpose of \textit{simple Kriging} can be more ambitious than the construction of a pointwise prediction for the random field $X$ at an unobserved site $s\in S$ through linear interpolation of the discrete observations available as formulated in subsection \ref{subsec:kriging}. The goal pursued may consist in building a decision function $f:S\times \mathbb{R}^d\to \mathbb{R}$ in order to predict $X$ over all $S$ based on the observation of the spatial process at a finite number of points $s_1,\; \ldots,\; s_d$ in the spatial set $S$.
The global accuracy of the predictive map $f$ over the entire set $S$ can be measured by the
 integrated mean squared error
\begin{equation}\label{eq:IMSE}
L_S(f)=\int_{s\in S}L(s,f(s))\,ds=
\mathbb{E}_X\left[ \int_{s\in S}\left( f(s,\; \bX(\bs_d))-X_s \right)^2\, ds \right].
\end{equation}
Hence, the objective is to predict an infinite number of output variables $X_s$, with $s\in S$, based on the input variables $X_{s_1},\; \ldots,\; X_{s_d}$. It can be viewed as a multitask predictive problem with an infinite number of tasks.
In the ordinary formulation, the prediction $f(s)$ at any point $s\in S$ is assumed to be a linear combination of the $X_{s_i}$'s
 \begin{equation}\label{eq:type}
 f_{\Lambda_d}(\cdot,\; \bX(\bs_d))=\lambda_{1}(\cdot)X_{s_1}+\ldots+\lambda_{d}(\cdot)X_{s_d},
 \end{equation}
  where $\Lambda_d:s\in S\mapsto (\lambda_1(s),\; \ldots,\; \lambda_d(s))$ is a measurable function valued in $\mathbb{R}^d$. In this case, we have:
  \begin{equation}\label{eq:error}
L_S(f_{\Lambda_d})=\int_{s\in S}\left(Var(X_s)+\; \Lambda_d(s)^\top\Sigma(\bs_d)\Lambda_d(s)-2\; \bc_d(s)^\top\Lambda_d(s)  \right)ds.
  \end{equation}
 The optimal predictive rule of this form regarding the expected prediction error can be straightforwardly deduced from Lemma~\ref{lem:opt_kriging}. It is described in the result stated below, the proof of which is omitted.
  \begin{lemma}\label{lem:opt_min_error} Suppose that the hypotheses of Lemma \ref{lem:opt_kriging} are fulfilled. Define the predictive mapping: $\forall s\in S$,
\begin{equation}\label{eq:opt2}  
  f_{\Lambda_d^*}(s, \bX(\bs_d))=\langle\Lambda_d^*(s), \bX(\bs_d) \rangle=\; \bX(\bs_d)^\top \Sigma(\bs_d)^{-1}\bc_d(s).
  \end{equation}
  We have
    \begin{equation}
  f_{\Lambda_d^*}=\argmin_{f}L_S(f).
  \end{equation}
  In addition, the minimum global error is
  $$
  L_S(f_{\Lambda_d^*})=\int_{s\in S}\left(Var(X_s)-\; \bc_d(s)^\top\Sigma(\bs_d)^{-1}\bc_d(s)  \right)\, ds.
  $$
  \end{lemma}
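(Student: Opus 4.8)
The plan is to reduce the minimization of the integrated risk $L_S$ over predictive maps of the form \eqref{eq:type} to a family of \emph{independent, pointwise} minimizations, each of which has already been solved in Lemma~\ref{lem:opt_kriging}. Starting from \eqref{eq:error}, I would first record that the integrand there is exactly the pointwise mean squared error appearing in \eqref{eq:lem1_L}, so that
\[
L_S(f_{\Lambda_d})=\int_{s\in S}L\!\left(s,\widehat{X}_{s,\Lambda_d(s)}\right)ds,
\]
i.e. $L_S$ is an integral over $s$ of a quantity depending on $\Lambda_d$ only through the single vector $\Lambda_d(s)\in\mathbb{R}^d$. This is the structural observation that makes the problem separable in $s$.

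Next, I would fix $s\in S$ and invoke Lemma~\ref{lem:opt_kriging}: the map $\lambda\in\mathbb{R}^d\mapsto Var(X_s)+\lambda^{\top}\Sigma(\bs_d)\lambda-2\,\bc_d(s)^{\top}\lambda$ attains its unique minimum at $\lambda=\Sigma(\bs_d)^{-1}\bc_d(s)=\Lambda_d^*(s)$, with minimal value $Var(X_s)-\bc_d(s)^{\top}\Sigma(\bs_d)^{-1}\bc_d(s)$. Hence, for every admissible $\Lambda_d$ and every $s$, the integrand at $\Lambda_d$ dominates the integrand at $\Lambda_d^*$; integrating this pointwise inequality over $S$ gives $L_S(f_{\Lambda_d})\geq L_S(f_{\Lambda_d^*})$, which is the claimed optimality, and evaluating the integrand at $\Lambda_d^*$ produces the announced closed form for the minimum. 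Uniqueness of the minimizer (up to a $ds$-null set) follows because $\Sigma(\bs_d)\succ 0$ renders each pointwise problem strictly convex.

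Two points require a little care, and together they constitute the only genuinely delicate — though here mild — step: the legitimacy of interchanging ``minimize'' and ``integrate''. First, the argument establishes optimality of $f_{\Lambda_d^*}$ only if $f_{\Lambda_d^*}$ is itself an admissible predictor, i.e. if $s\mapsto\Lambda_d^*(s)$ is measurable; this holds since $\Sigma(\bs_d)^{-1}$ is a fixed matrix and $s\mapsto\bc_d(s)=(C(s,s_1),\dots,C(s,s_d))$ is measurable by measurability of the covariance function $C$. Second, for the statement to be non-vacuous one needs $L_S(f_{\Lambda_d^*})<\infty$; since $0\leq Var(X_s)-\bc_d(s)^{\top}\Sigma(\bs_d)^{-1}\bc_d(s)\leq Var(X_s)$ — the middle quantity being a minimal MSE, hence nonnegative, by Lemma~\ref{lem:opt_kriging} — it suffices that $s\mapsto Var(X_s)$ be integrable over $S$, which is guaranteed under the standing assumptions. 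Because the pointwise minimizer $\Lambda_d^*$ thus assembles into a bona fide measurable, integrable global rule, there is no gap between the infimum of the integral and the integral of the pointwise infima, and the proof is complete.
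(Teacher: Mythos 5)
Your proposal is correct and follows the route the paper intends: the paper omits the proof precisely because, as you show, $L_S$ is the integral over $s\in S$ of the pointwise MSE already minimized in Lemma~\ref{lem:opt_kriging}, so the pointwise optimality of $\Lambda_d^*(s)$ integrates directly to the global claim and the closed-form minimum. Your added remarks on measurability of $s\mapsto\Lambda_d^*(s)$ and integrability of the pointwise minimal error are sound refinements of the same argument, not a departure from it.
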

  
  Observe that the mapping \eqref{eq:opt2} at $s\in S$ has the same form as that of the kernel ridge regressor at $z\in \mathbb{R}^d$ in \eqref{eq:opt0}, except that the regularized Gram matrix $\eta\bI_N+\mathcal{K}_N$ is replaced by $\Sigma(\bs_d)$, the vector $\bY_N$ by $\bX(\bs_d)$ and $\kappa_N(z)$ by $\bc_d(s)$.\\
  
  \medskip
  
  \noindent {\bf Plug-in predictive rules.} The quantities $\Sigma(\bs_d)$ and $\bc_d(s)$ are unknown in practice just like the risk \eqref{eq:error} and must be replaced by estimators in order to form an estimator $\widehat{\Lambda}_d$ of $\Lambda_d^*$ (or an empirical version of \eqref{eq:error}). For this reason, establishing rate bounds that assess the generalization capacity of the resulting predictive map $f_{\widehat{\Lambda}_d}$ is far from straightforward. It is the aim of the subsequent analysis to develop a non-asymptotic and non-parametric framework for simple Kriging with statistical guarantees, based on a preliminary finite-sample study of the performance of a covariance estimator $\widehat{C}(s,t)$, requiring no prior parametric modelling. The angle embraced here is thus different from that usually adopted in the traditional Kriging literature, often calling forth the use of MLE methods (see \textit{e.g.} section 5.3.3 in \cite{gaetan2009spatial}). In contrast, it is akin to that of statistical learning, particularly relevant when the availability of large training datasets permit to consider flexible techniques, avoiding the specification of a parametric class of probability laws.
    \begin{remark} {\sc (Gaussian random fields, bis)} We point out that, in the case where the random field $X$ is Gaussian, the mapping $ f_{\Lambda_d^*}$ is a minimizer of the global error $L_S$ over the set of all predictive rules $f(s,\; \bX(\bs_d))$ such that the integrated MSE \eqref{eq:error} is well-defined.
  \end{remark}

\begin{remark} {\sc (Worst case error \textit{vs} integrated error)}
Rather than integrating the pointwise MSE over the spatial domain $S$ to define global accuracy of a predictive map $f$ (see \eqref{eq:IMSE}), one may naturally consider the supremum of the MSE over $S$, namely $\sup_{s\in S} L(s,\; f(s))$.  Notice that, under the assumptions stipulated, the rate bound results obtained in the subsequent analysis obviously remain valid when substituting the IMSE with it.
\end{remark}

\subsection{Nonparametric Covariance Estimation}\label{subsec:nonpar_cov_est}
As mentioned above, the estimation of the covariance function, which the plug-in predictive approach considered here fully relies on, is based on a `large` number $n\geq 1$ of observations $\bX(\sigma_n) := (X_{\sigma_1},\; \ldots,\; X_{\sigma_n})$ exhibiting a certain dependence structure and cannot rely on independent realizations of the random field $X$, in contrast to the usual statistical learning setup. The \textit{in-fill} asymptotic stipulates that the number of observations inside the fixed and bounded domain $S$ increases, the latter forming a denser and denser grid as $n\rightarrow +\infty$. 
\medskip

\noindent {\bf Regular grids.} The observed sites are supposed to be equispaced, forming a regular grid. Since the goal of this paper is to explain the main ideas rather than dealing with the problem in full generality, we assume for simplicity that the spatial set $S$ is equal to the unit square $[0,1]^2$ and the observed sites are the points of the dyadic grid at scale $J\geq 1$ (see Figure \ref{fig:dyadic_grid}):
\begin{equation}\label{eq:grid}
\mathcal{G}_J=\left\{(k2^{-J},l2^{-J}):\;\;0\leq k,\; l\leq 2^{J}  \right\}.
\end{equation}
In this case, we have $n=(1+2^J)^2$. The gridpoints are indexed using the lexicographic order on $\mathbb{R}^2$ by assigning index $i=k(1+2^J)+(l+1)$ to point $(k2^{-J},l2^{-J})$, which is denoted by $\sigma_i$.
We point out that the regularity of the grid formed by the observed sites is key to the present analysis, to control the spectrum of the covariance matrix of the sampled points in a non-asymptotic fashion namely (see also Assumption \ref{hyp:bounded} below), so as to define an unbiased semi-variogram estimator at the observed lags with provable accuracy (\textit{cf} Proposition \ref{prop:CI_var_lag}). Proving that such a control still holds true for irregular grids (under specific assumptions unavoidably, remaining to be formulated precisely) is a great mathematical challenge (even in the 1-d time series case, see e.g. \cite{BDbook}) and is the subject of an ongoing research, as explained at length in the Appendix section. Extending the present theoretical study to observations on irregular grids is of importance undeniably, insofar that it may cover various situations in practice, but will be the subject of further research. However, beyond technical barriers, one should pay attention to the fact that measurements at equispaced spatial sites are extremely common and of great interest in practice, since they are precisely those produced by numerous image-producing systems, in Geophysics for instance.  For this reason, the Geostatistical literature (see \textit{e.g.} \citep[Section 2.4]{Cressie_1993}) has mainly focused on the regular case, while the irregular case is in contrast poorly documented and no dedicated statistical study of the (even non-asymptotic) behavior of the variogram/covariance estimator has been carried out yet.

\begin{figure}
\centering
    \includegraphics[scale = 0.17]{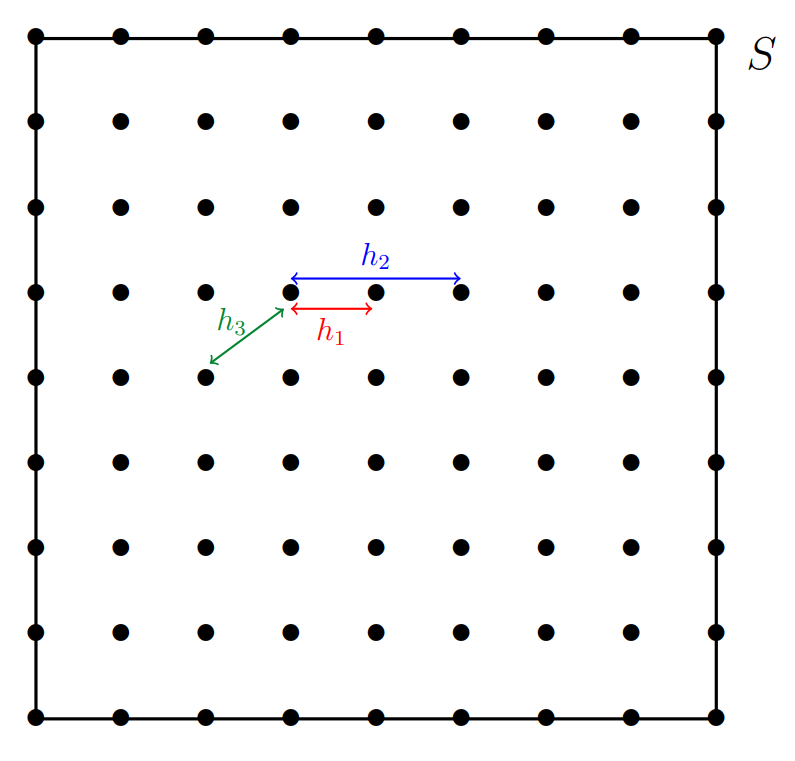}
    \caption{Dyadic grid at scale $J = 3$ ($n = 81$). Depicted lags: $h_1 = 2^{-J}$ (in red), $h_2 = 2/2^J$ (in blue) and $h_3 = \sqrt{2}/2^J$ (in green).}
    \label{fig:dyadic_grid}
\end{figure}
\medskip

\noindent {\bf Gaussianity, stationarity and isotropy.} The assumption of (second-order) stationarity is required so that a frequentist approach can be successful in such a nonparametric framework. In the subsequent analysis, we suppose that the following hypotheses are fulfilled.

\begin{assumption}\label{hyp:stat_iso}
The centered random field $X$ is stationary in the second-order sense and isotropic w.r.t. the Euclidean norm, \textit{i.e.}  there exists $c:\mathbb{R}_+\rightarrow \mathbb{R}$ s.t. $c(\vert\vert t-s\vert\vert )=C(s,t)$ for all $(s,t)\in S^2$.
\end{assumption}
\begin{assumption}\label{hyp:decorr}
There exists a known integer $j_1\geq 1$ s.t. $c(h)=0$ as soon as $h\geq \sqrt{2}-2^{-j_1}$.
\end{assumption}

\begin{remark}{\sc (On Assumption~\ref{hyp:decorr})}
    We point out that many popular covariance models fulfil Assumption~\ref{hyp:decorr}. This is the case of the truncated power law, the cubic and the spherical covariance models, used to generate the datasets analyzed in the experiments presented in Section \ref{sec:num} (see also the Appendix section). On the contrary, the Gaussian, the exponential and the Matern covariance models do not satisfy this hypothesis. However, as shown in the Appendix, the prediction methodology still performs satisfactorily in these situations, provided that the decorrelation rate is fast enough.
\end{remark}

\begin{assumption}\label{hyp:gaussian}
The random field $X$ is Gaussian with positive definite covariance function.
\end{assumption}
Under Assumption \ref{hyp:gaussian}, the weak stationarity guaranteed by Assumption \ref{hyp:stat_iso} is of course equivalent to strong stationarity (\textit{i.e.} shift-invariance of the finite dimensional marginals) insofar as Gaussian laws are fully characterized by the mean and covariance functions. Notice also that Assumption \ref{hyp:gaussian} ensures in particular that $\Sigma(\bs_d)$ is invertible for any pairwise distinct sites $s_1,\; \ldots,\; s_d$, so that the Kriging predictor $f_{\Lambda_d^*}$ is well-defined. Under Assumptions \ref{hyp:simple}-\ref{hyp:stat_iso}, we have $C(s,t)=\mathbb{E}[X_tX_s]=c(h)$ for all $(s,t)\in S^2$ s.t. $h=\vert\vert s-t\vert\vert$. Supposing in addition that Assumption \ref{hyp:decorr} is fulfilled (the function $c$ can be then extended to $\mathbb{R}^2$ by setting $c(h)=0$ for all $h>\sqrt{2}$), a natural estimator $\widehat{c}$ of the covariance function is then defined by $\widehat{c}(h)=0$ if $h\geq \sqrt{2}-2^{-j_1}$ and otherwise by
\begin{equation}\label{eq:emp_cov}
    \widehat{c}(h)=\frac{1}{n_h}\sum\limits_{(\sigma_i, \sigma_j) \in N(h)}X_{\sigma_i}X_{\sigma_j},
\end{equation}
where $N(h) = \left\{(\sigma_i, \sigma_j), \, \|\sigma_i - \sigma_j\| = h, \, (i, j) \in \llbracket 1, n \rrbracket^2\right\}$ is the set of pairs of sites that are at distance $h$ from one another and $n_h = \vert N(h)\vert$ denotes its cardinality (notice that $n_0=n$). Equipped with these notations, notice that a pair of sites $(\sigma_i, \sigma_j) \in N(h)$ at distance $h$ from one another is taken into account twice in the set $N(h)$, since $\| \sigma_i - \sigma_j \| = \| \sigma_j - \sigma_i \| = h$ and so $(\sigma_j, \sigma_i) \in N(h)$. Define $\mathcal{H}_n = \{\vert\vert \sigma_i-\sigma_j\vert\vert:\; (i,j)\in\llbracket 1, n \rrbracket^2\}$ the set of observed lags, which are all less than $\sqrt{2}$. The lemma stated below shows that $n_h$ is of order $n$ for observed lags $h<\sqrt{2}-2^{-j_1}$ as soon as $n>(\sqrt{2}-2^{-j_1})^2$, thus ensuring that the number of terms averaged in \eqref{eq:emp_cov} is large enough, of order $n$ namely. Refer to the Appendix for the technical proof.
\begin{lemma}\label{lem:nh_n}
Suppose that Assumption \ref{hyp:decorr} is fulfilled and let $n>(\sqrt{2}-2^{-j_1})^2$. Then, there exists a constant $0 < \nu \leq 1$ depending on $j_1$ only such that: $\forall h \in \mathcal{H}_n$, s.t. $h < \sqrt{2} - 2^{-j_1}$, $n_h >\nu n$.
\end{lemma}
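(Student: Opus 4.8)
The plan is to reduce $n_h$ to the number of pairs of gridpoints sharing a single, well-chosen displacement vector, and then to use the decorrelation threshold to keep that displacement a fixed proportion away from the edge of the grid.

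First I would dispose of the trivial lag $h=0$, for which $n_0=n$ so that any $\nu\le 1$ works, and concentrate on $0<h\in\mathcal H_n$ with $h<\sqrt2-2^{-j_1}$. Recall that $n=(1+2^J)^2$ and that the points of $\mathcal G_J$ are parametrised by pairs $(k,l)\in\{0,\dots,2^J\}^2$. Since $h$ is the distance between two points of $\mathcal G_J$, there exist integers $0\le a,b\le 2^J$ with $a^2+b^2=(h\,2^J)^2$ (reflecting the grid across the coordinate axes, which leaves $\mathcal G_J$ invariant, lets us take both coordinates of the displacement nonnegative). The key elementary identity is that the number of ordered pairs $(\sigma_i,\sigma_j)$ with $\sigma_i-\sigma_j=(a\,2^{-J},b\,2^{-J})$ equals exactly $(1+2^J-a)(1+2^J-b)$, because the ``lower-left'' endpoint must lie in the sub-grid $\{0,\dots,2^J-a\}\times\{0,\dots,2^J-b\}$; every such pair belongs to $N(h)$, hence
\begin{equation*}
  n_h\;\ge\;(1+2^J-a)(1+2^J-b).
\end{equation*}

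Next I would bound the two displacement coordinates. From $a^2+b^2=(h\,2^J)^2$ one gets $\max(a,b)\le h\,2^J$, and the decorrelation bound $h<\sqrt2-2^{-j_1}$ then forces $a,b<(\sqrt2-2^{-j_1})\,2^J$. Setting $\rho:=1-\sqrt2+2^{-j_1}$, this gives $1+2^J-a>1+\rho\,2^J\ge\rho\,(1+2^J)$ (using $\rho\le 1$), and likewise for $b$; combined with the previous display,
\begin{equation*}
  n_h\;\ge\;\rho^2\,(1+2^J)^2\;=\;\nu\,n,\qquad \nu:=\rho^2,
\end{equation*}
a constant depending on $j_1$ only. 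The hypothesis $n>(\sqrt2-2^{-j_1})^2$ (equivalently, $J$ not too small) guarantees that the set of admissible lags below $\sqrt2-2^{-j_1}$ is non-empty and that the bounds above are not vacuous.

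The step I expect to be the main obstacle is the control of the displacement coordinates: one needs \emph{every} lag strictly below the decorrelation radius to be realisable by a displacement whose coordinates stay a fixed fraction below $2^J$. This is exactly why the threshold has to be compared with the side length $1$ of $S$, not merely with its diameter $\sqrt2$ --- otherwise a lag such as $h=1$ would be carried only by the axis displacements $(\pm2^J,0)$ and $(0,\pm2^J)$, whose corresponding factor collapses to $1$ and yields $n_h$ only of order $\sqrt n$. Beyond this point the remaining work is routine bookkeeping of integer parts and of the degenerate cases $a=b$ and $b=0$ to pin down the explicit value of $\nu(j_1)$; no further idea is needed once the displacement-counting identity is in hand.
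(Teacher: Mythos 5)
Your proposal follows the same displacement-counting strategy as the paper's own proof: write $h=\sqrt{a^2+b^2}\,2^{-J}$, lower bound $n_h$ by the number of translates of the single displacement $(a,b)$, namely $(1+2^J-a)(1+2^J-b)$, and use the decorrelation threshold to keep $a,b$ away from $2^J$. The gap is in your final multiplication. From $a,b<(\sqrt2-2^{-j_1})\,2^J$ you get $1+2^J-a>\rho\,(1+2^J)$ with $\rho=1-\sqrt2+2^{-j_1}$, but $\rho>0$ only when $2^{-j_1}>\sqrt2-1$, i.e.\ only for $j_1=1$. For every $j_1\ge 2$ one has $\rho\le 0$, so the two one-sided bounds are bounds by a nonpositive quantity and cannot be multiplied to give $n_h>\rho^2(1+2^J)^2$ (from $x>\rho\,(1+2^J)$ and $y>\rho\,(1+2^J)$ with $\rho\le0$ nothing follows about $xy$). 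No bookkeeping of integer parts repairs this: as written your argument establishes the lemma only in the case $j_1=1$, equivalently for lags below $\sqrt2-\tfrac12<1$.

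The obstacle you flag at the end is, however, genuine and not specific to your write-up. For $j_1\ge2$ the threshold $\sqrt2-2^{-j_1}$ exceeds $1$; the lag $h=1$ belongs to $\mathcal{H}_n$ and lies below the threshold, and since $2^{2J}$ admits no representation as a sum of two squares other than $(\pm 2^J,0)$ and $(0,\pm 2^J)$, one gets exactly $n_h=4(1+2^J)=4\sqrt{n}$, so no constant $\nu$ depending on $j_1$ alone can satisfy $n_h>\nu n$: the statement fails in that regime. The paper's own proof stalls at the very same point: it bounds $n_h\ge 4(n_x-d)(n_x-q)\ge 4(n-2n_xh')$ with $h'=h2^J$ and $n_x=1+2^J$, a quantity that is nonpositive as soon as $h\ge n_x/2^{J+1}\approx\tfrac12$, and then asserts ``for $n$ large enough we finally obtain $n_h>\nu n$'' without justification; it therefore genuinely covers only lags bounded away from $\tfrac12$, just as yours covers lags bounded away from $1$. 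Your closing diagnosis --- that the threshold must be compared with the side length $1$ of $S$ rather than its diameter $\sqrt2$ --- correctly identifies the defect shared by the lemma and the paper's argument, and your proof is valid precisely in the regime where the lemma itself is sound.
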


\begin{figure}
    \centering
    \subfigure[Grid at scale $J = 2$.]{\includegraphics[width=45mm]{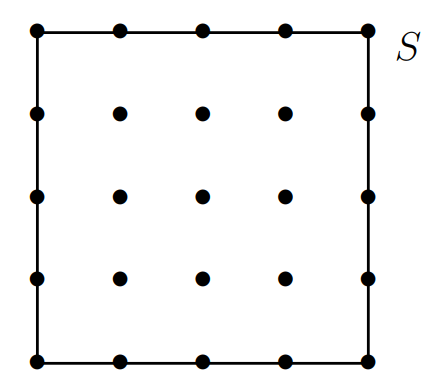}
        \label{fig:gridh_sub1}}
    \subfigure[$h = 2^{-J}$ and $n_h = 80$.]{\includegraphics[width=45mm]{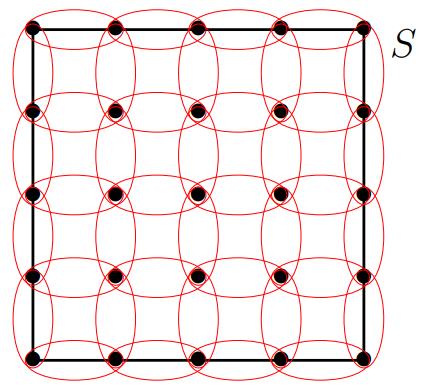}
        \label{fig:gridh_sub2}}\\
    \subfigure[$h = 1$ and $n_h = 20$.]{\includegraphics[width=45mm]{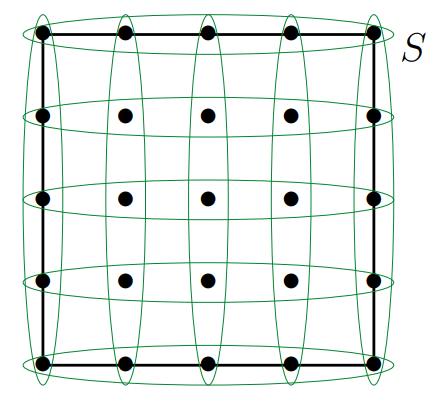}
        \label{fig:gridh_sub3}}
    \subfigure[$h = \sqrt{2}$ and $n_h = 4$.]{\includegraphics[width=45mm]{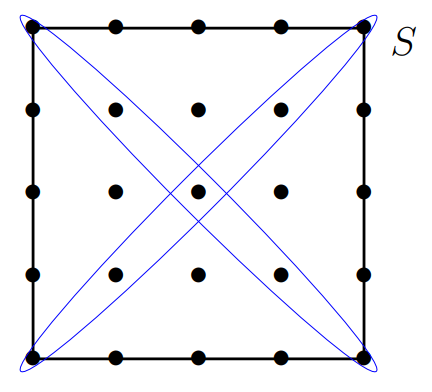}
        \label{fig:gridh_sub4}}
    \caption{Changes in the value of the number $n_h$ of pairs of sites that are at distance $h$ from one another, for different values of the lag $h$, on a dyadic grid at scale $J = 2$ ($n = 25$).}
    \label{fig:gridh}
\end{figure}

Figure \ref{fig:gridh_sub1} depicts the case of a dyadic grid at scale $J = 2$ for illustration purpose. For small values of $h$ (see Figure \ref{fig:gridh_sub2}), the number $n_h$ is large, so that a sufficient number of pairs of sites can be involved in the computation of $\widehat{c}(h)$, \textit{cf} \eqref{eq:emp_cov}. As $h$ grows, the number $n_h$ decreases (Figure \ref{fig:gridh_sub3}). In the extreme situation, \textit{i.e.} when $h = \sqrt{2}$, the number of pairs is reduced to $4$, see Figure \ref{fig:gridh_sub4}.
Hence, Assumption \ref{hyp:decorr}, stipulating that the covariance function vanishes for lags exceeding the threshold $\sqrt{2}-2^{-j_1}$, guarantees that a sufficient number of pairs of observations can be used to estimate the non zero values taken by $c(\cdot)$ on the set of lags formed by the gridpoints. Of course, as explained in the Appendix section, this hypothesis can be relaxed, by assuming a specific decay rate for $c(h)$ as $h$ tends to $\sqrt{2}$ (or equivalently for $c(\sqrt{2}-2^{-j_1})$ as $j_1$ tends to $\infty$). For the sake of simplicity, in order to avoid an excessive number of parameters involved in the problem statement, the inference and statistical learning results are established under Assumption \ref{hyp:decorr} and extensions are left to the reader. As will be seen below, beyond the number of pairs over which one averages to compute the statistic \eqref{eq:emp_cov}, the Gaussian hypothesis, Assumption \ref{hyp:gaussian}, plays a crucial role to describe (the concentration properties of) its distribution.

\medskip

\noindent {\bf Covariance \textit{vs} semi-variogram.} In Geostatistics, rather than the covariance, one uses the semi-variogram to characterize the second-order dependence structure of the observations, namely
\begin{equation}\label{eq:var}
    \gamma(h) = \frac{1}{2} Var(X_{s+h} - X_s) = \frac{1}{2} \mathbb{E}[(X_{s+h} - X_s)^2].
\end{equation}
Its main advantage lies in the fact that its computation does not require the knowledge of the (supposedly constant) mean. It is linked to the covariance by the equation $\gamma(h) = c(0) - c(h)$. Observe that, for any lag $h\in \mathcal{H}_n$, unbiased estimators of $\gamma(h)$ based on the values of the random field $X$ at the observed sites are:
\begin{equation}\label{eq:estvar1}
    \forall h < \sqrt{2} - 2^{-j_1}, \quad \widehat{\gamma}(h) = \frac{1}{2 n_h} \sum\limits_{(\sigma_i, \sigma_j) \in N(h)} (X_{\sigma_i} - X_{\sigma_j})^2,
\end{equation}
and $\widehat{\gamma}(h) = 0$ otherwise. Observe also that
\begin{equation}\label{eq:estvar2}
\forall h < \sqrt{2} - 2^{-j_1}, \quad \widehat{c}_h(0):=\widehat{\gamma}(h)+\widehat{c}(h)=\frac{1}{n_h}\sum_{i=1}^nn_h(i)X_{\sigma_i}^2,
\end{equation}
and $\widehat{c}_h(0) = 0$ otherwise, where $n_h(i)=\vert\{j\in \{1,\; \ldots,\; n\}:\; (\sigma_i,\sigma_j)\in N(h)\}  \vert$ for $i\in\{1,\; \ldots,\; n\}$.
Indeed, one may write: $\forall h \geq 0$,
\begin{multline*}
     \widehat{\gamma}(h) = \frac{1}{2 n_h} \sum\limits_{(\sigma_i, \sigma_j) \in N(h)} (X_{\sigma_i}^2 + X_{\sigma_j}^2) - \frac{1}{n_h} \sum\limits_{(\sigma_i, \sigma_j) \in N(h)} X_{\sigma_i} X_{\sigma_j}\\  = \frac{1}{2 n_h} \sum\limits_{(\sigma_i, \sigma_j) \in N(h)} (X_{\sigma_i}^2 + X_{\sigma_j}^2) - \widehat{c}(h),
\end{multline*}
and
\begin{multline*}
    \frac{1}{2 n_h} \sum\limits_{(\sigma_i, \sigma_j) \in N(h)} (X_{\sigma_i}^2 + X_{\sigma_j}^2) = \frac{1}{2 n_h} \sum\limits_{i=1}^n X_{\sigma_i}^2 \sum\limits_{j=1}^n \mathbb{I}\{(\sigma_i, \sigma_j) \in N(h)\}\\  + \frac{1}{2 n_h} \sum\limits_{j=1}^n X_{\sigma_j}^2 \sum\limits_{i=1}^n \mathbb{I}\{(\sigma_i, \sigma_j) \in N(h)\}  = \frac{1}{n_h} \sum\limits_{i=1}^n X_{\sigma_i}^2 n_h(i).
\end{multline*}

Under Assumption \ref{hyp:gaussian}, the distributions of the estimators \eqref{eq:estvar1}-\eqref{eq:estvar2} can be classically made explicit, as revealed by the result stated below, see \textit{e.g.} \citep{gaetan2009spatial, Cressie_1993}.

\begin{proposition}\label{prop:Gaetan_Guyon}
    Suppose that Assumption \ref{hyp:gaussian} is fulfilled. Let $h\in \mathcal{H}_n$.
    Denote by $L(n, h)$ the symmetric positive semi-definite (Laplacian) matrix with entries $L_{i, j}(n, h) = - \mathbb{I}\{(\sigma_i, \sigma_j) \in N(h)\}$ if $i \ne j$ and $L_{i, i}(n,h) =n_h(i)$ and by $\ell_i(h)$'s the $n_h$ eigenvalues of the symmetric positive semi-definite matrix $L(n, h) \Sigma_n$, where $\Sigma_n=\Sigma(\sigma_1,\; \ldots,\; \sigma_n)$. Denote also by $D(n, h)$ the diagonal matrix with entries $D_{i, i}(n, h) = n_h(i)$ and by $\rho_i(h)$'s the $n_h$ eigenvalues of the symmetric positive semi-definite matrix $D(n, h) \Sigma_n$. The following assertions hold true.
    \begin{itemize}
   \item[(i)] The estimators \eqref{eq:estvar1}-\eqref{eq:estvar2} are distributed as follows:
        \begin{equation}\label{eq:chi}
            \widehat{\gamma}(h) \sim \frac{1}{n_h} \sum\limits_{i=1}^{n_h} \ell_i(h) \chi_{i}^2 \text{ and } \widehat{c}_h(0) \sim \frac{1}{n_h} \sum\limits_{i=1}^{n_h} \rho_i(h) \chi_{i}^2,
        \end{equation}
        where the $\chi_i^2$'s are independent $\chi^2$ random variables with one degree of freedom.
\item[(ii)] The $\ell_i(h)$'s and the $\rho_i(h)$'s are strictly positive, \textit{i.e.} the matrices $L(n, h) \Sigma_n$ and $D(n, h) \Sigma_n$ are positive definite.
        \end{itemize}
        
\end{proposition}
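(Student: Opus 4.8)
The plan is to recognise $\widehat{\gamma}(h)$ and $\widehat{c}_h(0)$ as quadratic forms in the Gaussian vector $\bX(\sigma_n)=(X_{\sigma_1},\dots,X_{\sigma_n})$ and then to apply the classical spectral representation of a quadratic form in a centred Gaussian vector. First I would record, from the algebraic identities established just above \eqref{eq:estvar2}, that $\sum_{(\sigma_i,\sigma_j)\in N(h)}(X_{\sigma_i}-X_{\sigma_j})^2=2\,\bX(\sigma_n)^{\top}L(n,h)\,\bX(\sigma_n)$ (expand the square and use $\sum_{j}\mathbb{I}\{(\sigma_i,\sigma_j)\in N(h)\}=n_h(i)$) and $\sum_i n_h(i)X_{\sigma_i}^2=\bX(\sigma_n)^{\top}D(n,h)\,\bX(\sigma_n)$, so that
\begin{equation*}
\widehat{\gamma}(h)=\frac{1}{n_h}\,\bX(\sigma_n)^{\top}L(n,h)\,\bX(\sigma_n),\qquad \widehat{c}_h(0)=\frac{1}{n_h}\,\bX(\sigma_n)^{\top}D(n,h)\,\bX(\sigma_n).
\end{equation*}
It then suffices to analyse a generic quadratic form $\bX(\sigma_n)^{\top}A\,\bX(\sigma_n)$ with $A\in\{L(n,h),D(n,h)\}$ symmetric.

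For part~(i): under Assumptions~\ref{hyp:simple} and \ref{hyp:gaussian} the vector $\bX(\sigma_n)$ is centred Gaussian with covariance $\Sigma_n$, which is positive definite because Assumption~\ref{hyp:gaussian} makes the covariance of any family of pairwise distinct sites invertible and the grid points are distinct. I would set $Z=\Sigma_n^{-1/2}\bX(\sigma_n)\sim\mathcal{N}(0,\bI_n)$, so that $\bX(\sigma_n)^{\top}A\,\bX(\sigma_n)=Z^{\top}\big(\Sigma_n^{1/2}A\,\Sigma_n^{1/2}\big)Z$; diagonalising the symmetric matrix $\Sigma_n^{1/2}A\,\Sigma_n^{1/2}=P\Lambda P^{\top}$ with $P$ orthogonal and noting that $P^{\top}Z\sim\mathcal{N}(0,\bI_n)$ turns this into $\sum_i\lambda_i(P^{\top}Z)_i^2$, a weighted sum of independent $\chi^2$ variables with one degree of freedom. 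Finally $\Sigma_n^{1/2}A\,\Sigma_n^{1/2}=\Sigma_n^{1/2}(A\Sigma_n)\Sigma_n^{-1/2}$ is similar to $A\Sigma_n$, hence shares its eigenvalues; taking $A=L(n,h)$ and $A=D(n,h)$ produces the weights $\ell_i(h)$ and $\rho_i(h)$ respectively, which is \eqref{eq:chi}.

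For part~(ii): $D(n,h)$ is diagonal with nonnegative entries $n_h(i)$, so $D(n,h)\succeq 0$; and $L(n,h)$ is the combinatorial Laplacian of the graph on the sites that joins $i$ and $j$ whenever $\|\sigma_i-\sigma_j\|=h$, equivalently $L(n,h)=\tfrac{1}{2} M^{\top}M$ for its signed incidence matrix $M$, whence $v^{\top}L(n,h)v=\tfrac{1}{2}\|Mv\|^2\geq0$ and $L(n,h)\succeq0$. Congruence by the invertible matrix $\Sigma_n^{1/2}$ preserves positive semidefiniteness and rank, so $\Sigma_n^{1/2}L(n,h)\Sigma_n^{1/2}$ and $\Sigma_n^{1/2}D(n,h)\Sigma_n^{1/2}$ are positive semidefinite: the $\ell_i(h)$'s and $\rho_i(h)$'s are nonnegative, and the nonzero ones---whose number equals $\mathrm{rank}\,L(n,h)$, resp.\ $\mathrm{rank}\,D(n,h)$---are strictly positive. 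For an observed lag $h<\sqrt{2}-2^{-j_1}$, Lemma~\ref{lem:nh_n} additionally gives $n_h>\nu n$, so that these averages are over a number of pairs of order $n$.

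I expect the whole argument to be largely bookkeeping once the two quadratic-form identities are in hand; the only genuinely delicate point is the spectral counting in part~(ii)---pinning down how many eigenvalues of $L(n,h)\Sigma_n$ and of $D(n,h)\Sigma_n$ are nonzero (relating this to the connectivity of the distance-$h$ graph on the dyadic grid and to $n_h$) and checking that the positive definiteness of $\Sigma_n$ supplied by Assumption~\ref{hyp:gaussian} is exactly what transfers strict positivity from $L(n,h),D(n,h)$ to $\ell_i(h),\rho_i(h)$.
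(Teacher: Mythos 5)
Your argument is correct and follows essentially the same route as the paper: write $\widehat{\gamma}(h)=\frac{1}{n_h}\bX(\sigma_n)^{\top}L(n,h)\bX(\sigma_n)$ and $\widehat{c}_h(0)=\frac{1}{n_h}\bX(\sigma_n)^{\top}D(n,h)\bX(\sigma_n)$, whiten by $\Sigma_n^{-1/2}$, use the similarity of $\Sigma_n^{1/2}A\Sigma_n^{1/2}$ and $A\Sigma_n$, and diagonalize to get a weighted sum of independent $\chi^2_1$ variables; the paper packages exactly this as an auxiliary lemma (with the rank of $L(n,h)$ bounded by $n_h$ through the factorization $L(n,h)=\tfrac12 Q(n,h)^{\top}Q(n,h)$, i.e.\ your incidence-matrix argument). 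Your caveat in part (ii) --- that congruence by $\Sigma_n^{1/2}$ yields nonnegativity and that only rank-many eigenvalues are strictly positive --- is in fact the careful version of the assertion, which the paper states and uses at the same (slightly looser) level of precision.
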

Refer to the Appendix section for the technical argument.
Recall also that, under Assumptions \ref{hyp:stat_iso}-\ref{hyp:decorr}, the spatial process $X$ has an isotropic spectral density $\Phi(u)=(2\pi)^{-2}\int_{s\in \mathbb{R}^2}\exp(-i\; s^\top u)c(\vert\vert s\vert\vert)ds=\phi(\vert\vert u\vert\vert)$, see \textit{e.g.} \citep[Chapter 2]{stein1999interpolation}. The additional hypothesis below is required in the subsequent analysis. It classically guarantees that the eigenvalues of the covariance matrix of the spatial process sampled on the regular grid $\mathcal{G}_J$ are bounded and bounded away from $0$, see \textit{e.g.} \citep{BDbook}.
\begin{assumption}\label{hyp:bounded}
There exist $0<m\leq M<+\infty$ such that: $\forall u\in \mathbb{R}^2$, 
$$
m\leq \sum_{k\in \mathbb{Z}^2}\Phi(u+2\pi k)\leq M.
$$
\end{assumption}
Combining then Lemma \ref{lem:nh_n} with Proposition \ref{prop:Gaetan_Guyon} and classic bounds for the largest eigenvalues of Laplacian matrices (see the auxiliary results in the Appendix), one may deduce Poisson tail bounds for the deviations between the unbiased estimators and their expectations from the exponential inequalities established for Gamma r.v.'s, see \textit{e.g.} \citep{bercu2015concentration, Wang_Ma}. The proof is detailed in the Appendix section, together with intermediary results involved in its argument.

\begin{proposition}\label{prop:CI_var_lag}
Suppose that Assumptions \ref{hyp:simple}--\ref{hyp:bounded} are fulfilled. Let $h\in \mathcal{H}_n$. For all $t > 0$, we have:
\begin{eqnarray*}
\mathbb{P}\left( \left\vert \widehat{\gamma}(h)-\gamma(h) \right\vert \geq t \right) &\leq&  e^{-C_1 n t}+ e^{-C'_1 n t^2},\\
\mathbb{P}\left(\left\vert\widehat{c}_h(0)-c(0)\right\vert \geq t\right) &\leq& e^{-C_2 n t}+e^{-C'_2 n t^2},
\end{eqnarray*}
where $C_i$ and $C'_i$, $i\in\{1,\; 2\}$, are positive constants depending on $j_1$, $m$ and $M$ solely.
\end{proposition}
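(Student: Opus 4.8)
The plan is to treat both estimators in parallel, in three moves: (a) use Proposition~\ref{prop:Gaetan_Guyon} to realise each deviation as a centred linear combination of independent $\chi^2_1$ random variables; (b) bound the coefficients of that combination uniformly, using Assumption~\ref{hyp:bounded} for the spectrum of $\Sigma_n$ and elementary spectral bounds for graph Laplacians for $L(n,h)$ and $D(n,h)$; (c) feed these bounds into a Bernstein-type tail inequality for sums of independent sub-exponential (Gamma) variables, rescale by $n_h$, and close the argument with Lemma~\ref{lem:nh_n}. I would first dispose of the degenerate regime: for $h\in\mathcal{H}_n$ with $h\geq\sqrt2-2^{-j_1}$, Assumption~\ref{hyp:decorr} and \eqref{eq:estvar1}--\eqref{eq:estvar2} force $\widehat\gamma(h)=\gamma(h)=0$ and $\widehat c_h(0)=c(0)=0$, so the bounds are trivial; henceforth $h<\sqrt2-2^{-j_1}$. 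By Proposition~\ref{prop:Gaetan_Guyon}, $\widehat\gamma(h)\overset{d}{=}\frac{1}{n_h}\sum_i\ell_i(h)\chi_i^2$ and $\widehat c_h(0)\overset{d}{=}\frac{1}{n_h}\sum_i\rho_i(h)\chi_i^2$ with the $\chi_i^2$ independent $\chi^2_1$; since both estimators are unbiased, $\frac{1}{n_h}\sum_i\ell_i(h)=\gamma(h)$ and $\frac{1}{n_h}\sum_i\rho_i(h)=c(0)$, so that $\widehat\gamma(h)-\gamma(h)\overset{d}{=}\frac{1}{n_h}\sum_i\ell_i(h)(\chi_i^2-1)$ and likewise for $\widehat c_h(0)-c(0)$ with weights $\rho_i(h)$ --- centred sums of independent sub-exponential summands.

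\emph{Controlling the weights.} Because $L(n,h)$ is a graph Laplacian ($L(n,h)\succeq 0$) and $\Sigma_n\succ 0$, the $\ell_i(h)$ are the nonzero eigenvalues of $\Sigma_n^{1/2}L(n,h)\Sigma_n^{1/2}$, whence $0<\ell_i(h)\leq|||\Sigma_n|||\cdot|||L(n,h)|||$, and similarly $0<\rho_i(h)\leq|||\Sigma_n|||\cdot|||D(n,h)|||$. Assumption~\ref{hyp:bounded}, via the classical correspondence between the folded spectral density and the spectrum of the covariance matrix of a second-order stationary field sampled on a regular grid (see \cite{BDbook}), gives $|||\Sigma_n|||\leq\overline M$ with $\overline M$ depending on $M$ only; in particular $c(0)=Var(X_s)\leq\overline M$ and $\gamma(h)=c(0)-c(h)\leq 2\overline M$. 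On the combinatorial side, $|||D(n,h)|||=\max_i n_h(i)$ and $|||L(n,h)|||\leq 2\max_i n_h(i)$, and the regularity of the dyadic grid bounds the maximal degree $\max_i n_h(i)$ of the graph of pairs at lag $h$ by a constant $\kappa=\kappa(j_1)$ (a counting argument, using in particular that the number of representations of an integer as a sum of two squares is unchanged under multiplication by powers of $4$; cf. the auxiliary lemmas of the Appendix). Hence $\max_i\ell_i(h)\leq B$ and $\max_i\rho_i(h)\leq B$ with $B=B(j_1,m,M)$, and consequently $\sum_i\ell_i(h)^2\leq B\sum_i\ell_i(h)=B\,n_h\gamma(h)\leq 2B\overline M\,n_h$, and $\sum_i\rho_i(h)^2\leq B\overline M\,n_h$.

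\emph{Concentration.} It then suffices to apply a Bernstein inequality for sums of independent centred weighted $\chi^2_1$ (i.e. Gamma) random variables, as in \cite{bercu2015concentration,Wang_Ma}: for real coefficients $(a_i)$, $\mathbb P\big(\big|\sum_i a_i(\chi_i^2-1)\big|\geq u\big)\leq 2\exp\big(-c\min\{u^2/\sum_i a_i^2,\ u/\max_i|a_i|\}\big)$ for a universal $c>0$. With $a_i=\ell_i(h)/n_h$ and $u=t$, the estimates above give $\sum_i a_i^2\leq 2B\overline M/n_h$ and $\max_i|a_i|\leq B/n_h$, so $\mathbb P(|\widehat\gamma(h)-\gamma(h)|\geq t)\leq 2\exp(-c'\min\{n_h t^2,\ n_h t\})$ with $c'=c'(j_1,m,M)>0$; the same computation with the $\rho_i(h)$ handles $\widehat c_h(0)-c(0)$. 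Finally, Lemma~\ref{lem:nh_n} gives $n_h>\nu n$ with $\nu=\nu(j_1)\in(0,1]$, and splitting the minimum and absorbing universal constants turns each bound into the two stated terms $e^{-C_1nt}+e^{-C'_1nt^2}$ and $e^{-C_2nt}+e^{-C'_2nt^2}$, the constants $C_i,C'_i$ being products and quotients of $c'$, $\nu$, $B$, $\overline M$ and absolute constants, hence functions of $(j_1,m,M)$ alone.

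\emph{Main obstacle.} The crux is the uniform control of the weights $\ell_i(h),\rho_i(h)$: it is there that the regularity of the sampling grid is genuinely used --- both to keep $|||\Sigma_n|||$ bounded through Assumption~\ref{hyp:bounded} (a Toeplitz-type spectral estimate) and to bound the maximal degree $\max_i n_h(i)$ by a quantity depending only on $j_1$. Once these two bounds are in hand, the remainder is routine sub-exponential bookkeeping.
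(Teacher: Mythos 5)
Your argument follows essentially the same route as the paper's proof: the weighted-$\chi^2$ representation of Proposition~\ref{prop:Gaetan_Guyon}, control of the weights $\ell_i(h),\rho_i(h)$ through Assumption~\ref{hyp:bounded} (for $\xi_1(\Sigma_n)$) and the maximal-degree bound for the Laplacian and degree matrices, a concentration inequality for sums of independent Gamma-type variables, and finally Lemma~\ref{lem:nh_n} to replace $n_h$ by $\nu n$. The only real difference is packaging: you invoke a single two-sided Bernstein bound for centred weighted $\chi^2_1$ sums and obtain the variance proxy from the trace identity $\sum_i\ell_i(h)=n_h\gamma(h)$ (resp. $\sum_i\rho_i(h)=n_h c(0)$), whereas the paper treats the two tails asymmetrically --- the lower tail via the variance bound $Var(\widehat\gamma(h))=O(1/n)$ and \citep{bercu2015concentration}, the upper tail via the Gamma bound of \citep{Wang_Ma} together with $\beta_*(h)\geq C n_h$. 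Your derivation of the variance proxy is, if anything, more self-contained than the paper's appeal to Cressie's formula, and the min-splitting at the end only costs constant chasing. Your assertion that $\max_i n_h(i)$ is bounded by a constant depending on $j_1$ only is stated rather than proved, but the paper's own proof makes exactly the same assertion (it simply posits $\xi_1(L(n,h))\leq c$), so you are on equal footing there.

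One slip should be corrected: in your ``degenerate regime'' $h\geq\sqrt2-2^{-j_1}$ you claim that $\widehat\gamma(h)=\gamma(h)=0$ and $\widehat c_h(0)=c(0)=0$. This is false: Assumption~\ref{hyp:decorr} gives $c(h)=0$, hence $\gamma(h)=c(0)-c(h)=c(0)=Var(X_s)>0$ under Assumption~\ref{hyp:gaussian}, while the conventions in \eqref{eq:estvar1}--\eqref{eq:estvar2} set the estimators to zero; the deviation then equals $c(0)$ deterministically, so the claimed tail bound is not trivial there (and cannot hold for $t<c(0)$ and $n$ large). The correct fix is simply to restrict the argument to $h<\sqrt2-2^{-j_1}$, which is the only regime in which Lemma~\ref{lem:nh_n} yields $n_h>\nu n$ and the only one the paper's proof (and its later use in Corollary~\ref{cor}) actually handles; with that restriction your argument goes through as written.
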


\medskip

\noindent {\bf Estimation of the covariance function.} The empirical covariance function $\widehat{c}(h)$ can be extrapolated at unobserved lags $h\in[0,\sqrt{2}-2^{-j_1}]\setminus \mathcal{H}_n$ by means of various nonparametric procedures, such as local averaging methods. For simplicity, one may consider a piecewise constant estimator, for instance the $1$-NN estimator $\widehat{c}(h)=\widehat{c}(l_h)$, where $l_h=\argmin_{l\in \mathcal{H}_n}\vert\vert h-l \vert\vert$ (breaking ties in an arbitrary fashion). As $\vert\vert h-l_h\vert\vert \leq 2^{-J}=1/(\sqrt{n}-1)$, the (weak) smoothness hypothesis below then permits to control the covariance estimation error at unobserved lags.

\begin{assumption}\label{hyp:smooth}
The mapping $h\in [0,\; \sqrt{2}-2^{-j_1}]\mapsto c(h)$ is of class $\mathcal{C}^{1}$ with gradient bounded by $D<+\infty$ and there exists $0 < B < + \infty$, such that $\sup\limits_{h \geq 0} \vert c(h) \vert \leq B$.
\end{assumption}
Of course, under more restrictive regularity assumptions, the accuracy of other classic nonparametric estimation techniques (\textit{e.g.} splines of degree larger than $2$) can be established, refer to the discussion in the Appendix section. 
 The result below is proved at length in the Appendix.
 Under Assumption \ref{hyp:smooth}, it simply follows from Proposition \ref{prop:CI_var_lag} combined with the union bound and the finite increment inequality.
\begin{corollary}\label{cor} Suppose that Assumptions \ref{hyp:simple}--\ref{hyp:smooth} are satisfied. Then, for any $\delta\in (0,1)$, we have with probability at least $1-\delta$:
$$
\sup_{h\geq 0}\left\vert \widehat{c}(h)- c(h) \right\vert \leq C_3\sqrt{\log(4 n/\delta)/n}+ D/(\sqrt{n}-1),
$$
as soon as $n\geq C'_3\log(4 n/\delta)$, where $C_3$ and $C'_3$ are positive constants depending on $j_1$, $m$ and $M$ solely.
\end{corollary}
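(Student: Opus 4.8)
The plan is to split the supremum according to the size of the lag and, on the relevant range, to reduce the control of $\widehat{c}$ to that of the two unbiased statistics $\widehat{\gamma}(h)$ and $\widehat{c}_h(0)$, for which Proposition~\ref{prop:CI_var_lag} already provides concentration, before paying a union bound over the finitely many observed lags and adding a deterministic bias term coming from the nearest‑neighbour extrapolation. For lags $h\geq \sqrt{2}-2^{-j_1}$ one has $\widehat{c}(h)=0$ by definition and $c(h)=0$ by Assumption~\ref{hyp:decorr}, so these contribute nothing and it suffices to bound $\sup_{0\leq h<\sqrt{2}-2^{-j_1}}|\widehat{c}(h)-c(h)|$. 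I would then decompose this range into the set of observed lags $h\in\mathcal{H}_n$ with $h<\sqrt{2}-2^{-j_1}$ and its complement in $[0,\sqrt{2}-2^{-j_1}]$.

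For an observed lag $h$, use identity~\eqref{eq:estvar2}, namely $\widehat{c}(h)=\widehat{c}_h(0)-\widehat{\gamma}(h)$, together with $c(h)=c(0)-\gamma(h)$, to obtain
\[
|\widehat{c}(h)-c(h)|\leq |\widehat{c}_h(0)-c(0)|+|\widehat{\gamma}(h)-\gamma(h)|.
\]
Proposition~\ref{prop:CI_var_lag} controls each summand: asking each to be at most $t/2$ fails with probability at most $e^{-C_1nt/2}+e^{-C'_1nt^2/4}+e^{-C_2nt/2}+e^{-C'_2nt^2/4}$. Since the observed sites form the dyadic grid $\mathcal{G}_J$, the squared observed lags take the values $(k^2+\ell^2)2^{-2J}$ with $0\leq k,\ell\leq 2^J$, so $|\mathcal{H}_n|\leq 2\cdot 4^J+1=O(n)$; a union bound over these $O(n)$ lags shows that $\sup_{h\in\mathcal{H}_n,\,h<\sqrt{2}-2^{-j_1}}|\widehat{c}(h)-c(h)|\leq t$ fails with probability at most $4n(e^{-cnt}+e^{-cnt^2})$ for a constant $c$ depending only on $j_1,m,M$. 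Choosing $t=C_3\sqrt{\log(4n/\delta)/n}$, the sub‑Gaussian term becomes $4n\,(4n/\delta)^{-cC_3^2}$, which is $\leq\delta/2$ once $C_3$ is taken large enough (in terms of the constants of Proposition~\ref{prop:CI_var_lag}), while the Poisson term $4n\,e^{-cC_3\sqrt{n\log(4n/\delta)}}$ is $\leq\delta/2$ precisely under the stated condition $n\geq C'_3\log(4n/\delta)$; this is exactly where that hypothesis is used.

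Finally, for an unobserved lag $h\in[0,\sqrt{2}-2^{-j_1}]\setminus\mathcal{H}_n$, write $\widehat{c}(h)=\widehat{c}(l_h)$ with $l_h=\argmin_{l\in\mathcal{H}_n}\|h-l\|$, so that
\[
|\widehat{c}(h)-c(h)|\leq |\widehat{c}(l_h)-c(l_h)|+|c(l_h)-c(h)|.
\]
The first term is bounded by the supremum over observed lags handled above (the subcase $l_h\geq\sqrt{2}-2^{-j_1}$, where $\widehat{c}(l_h)=0$, being treated by the same smoothness argument using Assumption~\ref{hyp:smooth}); the second is a purely deterministic bias, controlled by the finite‑increment inequality: as $c$ is $\mathcal{C}^1$ with $\|\nabla c\|\leq D$ on $[0,\sqrt{2}-2^{-j_1}]$ (Assumption~\ref{hyp:smooth}) and $|l_h-h|\leq 2^{-J}=1/(\sqrt{n}-1)$, one gets $|c(l_h)-c(h)|\leq D/(\sqrt{n}-1)$. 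Collecting the three contributions and absorbing constants yields the announced bound with probability at least $1-\delta$. The only genuinely delicate point is the union‑bound bookkeeping: one must bound the number of distinct observed lags by $O(n)$ to produce the $\log(4n/\delta)$ factor, and calibrate $C_3$ and $C'_3$ so that the $e^{-C'_i nt^2}$ terms give the stated $1/\sqrt{n}$ rate while the heavier Poisson tails $e^{-C_i nt}$ remain negligible — which is what forces the restriction $n\gtrsim\log(4n/\delta)$ — all while keeping every constant a function of $j_1,m,M$ only, as inherited from Proposition~\ref{prop:CI_var_lag}.
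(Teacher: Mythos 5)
Your proposal is correct and follows essentially the same route as the paper's proof: control the observed lags via the decomposition $\widehat{c}(h)=\widehat{c}_h(0)-\widehat{\gamma}(h)$, the tail bounds of Proposition~\ref{prop:CI_var_lag} and a union bound over $\vert\mathcal{H}_n\vert=O(n)$, then handle unobserved lags by the $1$-NN extrapolation and the finite-increment inequality under Assumption~\ref{hyp:smooth}, with the condition $n\gtrsim\log(4n/\delta)$ serving exactly to make the sub-exponential (Poisson) tails negligible against the sub-Gaussian ones. The only difference is bookkeeping: the paper packages the union bound and the dominance of the $e^{-C'_i nk^2}$ term into an auxiliary corollary and then inverts $\delta=4ne^{-C_{\min}nk^2}$, whereas you split the failure probability into two halves — an equivalent calibration.
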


To the best of our knowledge, this non-asymptotic bound for a nonparametric estimator of the covariance function in the in-fill setup is the first result of this kind, the vast majority of the results documented in the literature being either of asymptotic nature and/or related to parametric inference. One may refer to \textit{e.g. }\cite{hall1994nonparametric} for limit results related to kernel smoothing methods applied to covariance estimation.
Based on the estimator $\widehat{c}$ of the covariance function, one may derive statistical counterparts of the quantities involved in \eqref{eq:opt2}, namely $\Sigma(\mathbf{s}_d)$ and $\mathbf{c}_d(s)$ and form an empirical version of the optimal Kriging rule. As will be shown in the next subsection, generalization bounds for the performance of the predicting function thus constructed can be established from the non-asymptotic guarantees stated in the corollary above.

\subsection{Excess Risk Bounds in Simple Kriging}\label{subsec:bounds_ER}
Equipped with the non-asymptotic results established in the subsection above, we now address the simple Kriging problem from a predictive learning perspective, as formulated in subsections \ref{subsec:kriging} and \ref{subsec:connect}. Let $d\geq 1$ and consider arbitrary pairwise distinct sites $s_1,\; \ldots,\; s_d$ in $S=[0,1]^2$. The goal is to predict the value $X_s$ taken by $X$ at any site $s\in S$ based on $(X_{s_1},\; \ldots,\; X_{s_d})$, nearly as accurately as the optimal Kriging rule \eqref{eq:opt2} would do it. For this purpose, one uses a training dataset, composed of observations $X'_{\sigma_1},\; \ldots,\; X'_{\sigma_n}$ of $X'$, an independent copy of $X$, at sites $\sigma_1,\; \ldots,\; \sigma_n$ forming a regular dyadic grid. Consider $\widehat{c}(\cdot)$, the estimator of the covariance function studied in subsection \ref{subsec:nonpar_cov_est}, based on the $X'_{\sigma_i}$'s. From $\widehat{c}(\cdot)$, the covariance matrix $\Sigma(\bs_d)$ and the covariance vector $\bc_d(s)$ can be naturally estimated as follows:
\begin{eqnarray}
\widehat{\bc}_d(s)&=&\left( \widehat{c}(\vert\vert s-s_1\vert\vert,\; \ldots,\; \widehat{c}(\vert\vert s-s_d\vert\vert \right)\text{ for } s\in S,\label{eq:c_hat}\\
\widehat{\Sigma}(\bs_d)&=& \left( \widehat{c}(\vert\vert s_i-s_j\vert\vert )  \right)_{1\leq i,\; j\leq d}.\label{eq:sigma_hat}
\end{eqnarray}

 \begin{assumption}\label{hyp:cov_bound}
Let $0 < \underline{m} \leq \overline{M} < +\infty$ and assume that the eigenvalues of the covariance matrix $\Sigma(\bs_d)$ are upper bounded by $\overline{M}$, lower bounded by $\underline{m}$.
\end{assumption}

\noindent {\bf Estimation of the precision matrix.} Under Assumption \ref{hyp:gaussian}, the matrix $\Sigma(\bs_d)$ is always invertible, which permits to define the Kriging rule \eqref{eq:opt2}, involving the precision matrix $\Sigma(\bs_d)^{-1}$. The simplest way of building an estimate of the precision matrix is to invert \eqref{eq:sigma_hat}, when it is positive definite. This theoretically happens with overwhelming probability, as shown by the result stated below, and turned out to be true in all the numerical experiments presented in Section \ref{sec:num} and in the Appendix section. Hence, the estimator of the precision
matrix we consider here in order to build an empirical version of the predicting function \eqref{eq:opt2} is the inverse of \eqref{eq:sigma_hat}, when
the latter is definite positive,
and that of any definite positive regularized version (e.g. \cite{Tikhonov}) of the
latter otherwise. It is (possibly abusively) denoted by $\widehat{\Sigma}(\bs_d)^{-1}$ in both situations. Its accuracy is described in a non-asymptotic fashion by the bound stated in the result below.
\begin{proposition}\label{prop:CI_Sigma_vec} Suppose that Assumptions \ref{hyp:simple}--\ref{hyp:cov_bound} are satisfied. The following assertions hold true.
\begin{enumerate}
    \item[(i)] For any $\delta\in (0,1)$, we have with probability at least $1-\delta$:
    $$\vert\vert\vert\widehat{\Sigma}(\bs_d)- \Sigma(\bs_d)\vert\vert\vert \leq C_3 \, d \, \sqrt{\log(4 n/\delta)/n} + d \, D/(\sqrt{n} - 1),$$
    as soon as $n \geq C'_3 \log(4 n/\delta)$, where $C_3$ and $C'_3$ are positive constants depending on $j_1$, $m$ and $M$ solely (see Corollary \ref{cor}).
    \item[(ii)] For any $\delta\in (0,1)$, we have with probability at least $1-\delta$:
    $$\vert\vert\vert\widehat{\Sigma}(\bs_d)^{-1}- \Sigma(\bs_d)^{-1}\vert\vert\vert \leq C_4 \, d \, \sqrt{\log(4 n/\delta)/n} + C'_4 \, d \, D/(\sqrt{n} - 1),$$
    as soon as $n \geq C''_4 \log(4 n/\delta)$, where $C_4$, $C'_4$ and $C''_4$ are positive constants depending on $j_1$, $m$, $M$, $\underline{m}$ and $\overline{M}$ solely.
\end{enumerate}
\end{proposition}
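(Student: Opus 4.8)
The plan is to obtain (i) by controlling the individual entries of $\widehat{\Sigma}(\bs_d)-\Sigma(\bs_d)$ through Corollary \ref{cor} and then passing to the operator norm, and to deduce (ii) from (i) via the standard resolvent identity, after bounding $\vert\vert\vert\widehat{\Sigma}(\bs_d)^{-1}\vert\vert\vert$ by a Weyl-type argument. For (i): the difference $\Delta:=\widehat{\Sigma}(\bs_d)-\Sigma(\bs_d)$ is the symmetric $d\times d$ matrix with entries $\Delta_{ij}=\widehat{c}(\|s_i-s_j\|)-c(\|s_i-s_j\|)$. I would invoke Corollary \ref{cor}: for $\delta\in(0,1)$ and $n\ge C'_3\log(4n/\delta)$, there is an event $\mathcal{E}_\delta$ of probability at least $1-\delta$ on which $\sup_{h\ge 0}|\widehat{c}(h)-c(h)|\le\epsilon_n$ with $\epsilon_n:=C_3\sqrt{\log(4n/\delta)/n}+D/(\sqrt{n}-1)$, so in particular $|\Delta_{ij}|\le\epsilon_n$ for all $1\le i,j\le d$ simultaneously. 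The elementary inequality $\vert\vert\vert A\vert\vert\vert\le(\max_i\sum_j|A_{ij}|)^{1/2}(\max_j\sum_i|A_{ij}|)^{1/2}\le d\max_{i,j}|A_{ij}|$, applied to $A=\Delta$, then yields $\vert\vert\vert\Delta\vert\vert\vert\le d\,\epsilon_n$ on $\mathcal{E}_\delta$, which is exactly (i), with the same $C_3,C'_3$ as in Corollary \ref{cor}.

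\textbf{Proof of (ii).} I would keep working on $\mathcal{E}_\delta$. By Assumption \ref{hyp:cov_bound}, $\Sigma(\bs_d)$ is invertible with $\vert\vert\vert\Sigma(\bs_d)^{-1}\vert\vert\vert\le 1/\underline{m}$. To bound $\vert\vert\vert\widehat{\Sigma}(\bs_d)^{-1}\vert\vert\vert$ I would combine (i) with Weyl's inequality (Lipschitz continuity of the eigenvalues of symmetric matrices in operator norm): $\lambda_{\min}(\widehat{\Sigma}(\bs_d))\ge\lambda_{\min}(\Sigma(\bs_d))-\vert\vert\vert\Delta\vert\vert\vert\ge\underline{m}-d\,\epsilon_n$. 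Since $d\,\epsilon_n\to 0$, for $n$ large enough — this is what the hypothesis $n\ge C''_4\log(4n/\delta)$ encodes — one has $d\,\epsilon_n\le\underline{m}/2$; on $\mathcal{E}_\delta$ and under this condition $\widehat{\Sigma}(\bs_d)$ is then positive definite (so the regularized fallback in the definition of $\widehat{\Sigma}(\bs_d)^{-1}$ is not triggered, and if one wishes to treat both cases uniformly the regularization can be arranged, using the upper bound $\overline{M}$, so that the modified matrix still has smallest eigenvalue $\ge\underline{m}/2$), whence $\vert\vert\vert\widehat{\Sigma}(\bs_d)^{-1}\vert\vert\vert\le 2/\underline{m}$. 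I would conclude via the resolvent identity
$$\widehat{\Sigma}(\bs_d)^{-1}-\Sigma(\bs_d)^{-1}=-\,\widehat{\Sigma}(\bs_d)^{-1}\bigl(\widehat{\Sigma}(\bs_d)-\Sigma(\bs_d)\bigr)\Sigma(\bs_d)^{-1},$$
submultiplicativity of $\vert\vert\vert\cdot\vert\vert\vert$ and (i): $\vert\vert\vert\widehat{\Sigma}(\bs_d)^{-1}-\Sigma(\bs_d)^{-1}\vert\vert\vert\le\vert\vert\vert\widehat{\Sigma}(\bs_d)^{-1}\vert\vert\vert\,\vert\vert\vert\Delta\vert\vert\vert\,\vert\vert\vert\Sigma(\bs_d)^{-1}\vert\vert\vert\le(2/\underline{m}^2)\,d\,\epsilon_n$, which is of the announced form with $C_4=2C_3/\underline{m}^2$ and $C'_4=2/\underline{m}^2$.

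\textbf{Main obstacle.} The only non-mechanical point is the bound on $\vert\vert\vert\widehat{\Sigma}(\bs_d)^{-1}\vert\vert\vert$, i.e.\ guaranteeing that the estimated covariance matrix is well-conditioned: this is precisely where the spectral lower bound $\underline{m}$ of Assumption \ref{hyp:cov_bound} and the sample-size condition are needed, and where one must carefully handle the case where $\widehat{\Sigma}(\bs_d)$ fails to be positive definite by passing to the regularized surrogate. Everything else is routine matrix-norm bookkeeping; if one is scrupulous about the threshold $C''_4$ one will note that it must in fact also grow with $d$ and $D$ (through the requirement $d\,\epsilon_n\le\underline{m}/2$), a dependence that the statement leaves implicit.
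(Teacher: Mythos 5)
Your proof is correct and follows essentially the same route as the paper: part (i) via the max-norm/operator-norm comparison combined with Corollary \ref{cor}, and part (ii) via a perturbation bound for the inverse (your resolvent identity plus submultiplicativity is precisely the inequality of Theorem \ref{thm_aux:wedin} invoked in the paper) together with spectral control of $\widehat{\Sigma}(\bs_d)$ coming from Assumption \ref{hyp:cov_bound} and part (i). If anything, your explicit Weyl-type bound $\lambda_{\min}(\widehat{\Sigma}(\bs_d))\geq \underline{m}-d\,\epsilon_n\geq \underline{m}/2$ and your remark that the sample-size threshold implicitly depends on $d$ and $D$ are more careful than the paper's argument, which simply asserts $\vert\vert\vert\widehat{\Sigma}(\bs_d)^{-1}\vert\vert\vert\leq\underline{m}^{-1}$ with high probability.
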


Assertion \textit{(i)} simply follows from Corollary \ref{cor}, the operator norm and the max norm being equivalent in finite dimension. The proof of the second assertion uses a classic inequality for inverse matrices as in \cite{wedin1973perturbation} combined with Assertion \textit{(i)}. Refer to the Appendix section for further technical details.

\medskip

\noindent {\bf Empirical Risk Minimization.} Now, replacing $\Sigma(\bs_d)^{-1}$ and $\bc_d(s)$ by the estimators introduced above, a natural statistical counterpart of $\Lambda^*_d$ is built by means of the \textit{plug-in} method:
\begin{equation}\label{eq:emp_krig_def}
    \widehat{\Lambda}_d(s)=\; \widehat{\Sigma}(\bs_d)^{-1}\widehat{\bc}_d(s).
\end{equation}
We point out that it actually corresponds to an empirical risk minimizer. Indeed, \eqref{eq:emp_krig_def} is the minimizer of 
$$
\Lambda_d(s)^\top\widehat{\Sigma}(\bs_d)\Lambda_d(s)-2\widehat{\bc}_d(s)^\top\Lambda_d(s)
$$
over $\Lambda_d(s)$ in $\mathbb{R}^d$, which functional can be viewed as an empirical version of $L(s,\; f_{\Lambda_d}(s))-c(0)=\; \Lambda_d(s)^\top\Sigma(\bs_d)\Lambda_d(s)-2\bc_d(s)^\top\Lambda_d(s)$, the pointwise risk at $s$ up to an additive term independent from $\Lambda_d(s)$ under the assumptions introduced in subsection \ref{subsec:nonpar_cov_est}.
Define thus the empirical predictive mapping $f_{\widehat{\Lambda}_d}$ by:
\begin{equation}\label{eq:f_emp}
f_{\widehat{\Lambda}_d}(s, \bx(\bs_d))=\langle\widehat{\Lambda}_d(s), \bx(\bs_d) \rangle=\; \bx(\bs_d)^\top \widehat{\Sigma}(\bs_d)^{-1}\widehat{\bc}_d(s),
  \end{equation}
  for all $s\in S$ and any $\bx(\bs_d)=(x_{s_1},\; \ldots,\; x_{s_d})\in \mathbb{R}^d$.
The (random) predictive function \eqref{eq:f_emp} can be used to predict the values taken by $X$,  any independent copy of the random field $X'$ partially observed in the learning/estimation phase, over the whole spatial domain $S$ based on the input observations $\bX(\bs_d)=(X_{s_1},\; \ldots,\; X_{\bs_d})$. Conditioned upon the $X'_{\sigma_i}$'s, it is of course a linear prediction rule which minimizes the statistical counterpart of $L_S(f_{\Lambda_d})$ based on the $X'_{\sigma_i}$'s, namely
\begin{equation}\label{eq:emp_risk}
\widehat{L}_S(f_{\Lambda_d}):=\int_{s\in S}\left(\widehat{c}(0)+\Lambda_d(s)^\top\widehat{\Sigma}(\bs_d)\Lambda_d(s)-2\widehat{\bc}_d(s)^\top\Lambda_d(s)  \right)ds,
\end{equation}
over all Borel measurable functions $\Lambda_d:S\to \mathbb{R}^d$ such that \eqref{eq:emp_risk} is well-defined (as previously noticed, the quantity integrated over $S$ then reaches its minimum at all $s$ in $S$). Hence, the plug-in predictive rule \eqref{eq:f_emp} can also be derived from Empirical Risk Minimization (ERM), the main paradigm of statistical learning, see \textit{e.g.} \cite{probabilistictheory}.
\medskip

\noindent {\bf Generalization capacity.} The predictive performance of the function $f_{\widehat{\Lambda}_d}$ constructed on the basis of the $X'_{\sigma_i}$'s is then measured by the conditional expectation, obtained by replacing $\Lambda_d(s)$ by its empirical counterpart $\widehat{\Lambda}_d(s)$ in \eqref{eq:IMSE} :
\begin{eqnarray*}
  L_S(f_{\widehat{\Lambda}_d})&=&  \mathbb{E}_{X}\left[\int_{s\in S}\left( f_{\widehat{\Lambda}_d}(s, \bX(\bs_d)) -X_s\right)^2\, ds \mid X'_{\sigma_1},\ldots, X'_{\sigma_n} \right]\\
  &=&\int_{s\in S}\left(c(0)+\; \widehat{\Lambda}_d(s)^\top\Sigma(\bs_d)\widehat{\Lambda}_d(s)-2\; \bc_d(s)^\top\widehat{\Lambda}_d(s)   \right)\, ds.
\end{eqnarray*}
It is a random quantity since it depends upon the training data, that is larger than $L^*_S:=L_S(f_{\Lambda_d^*})$ with probability one, see Lemma \ref{lem:opt_min_error}.
The theorem below shows that, with large probability, the prediction error of the empirical simple Kriging rule $f_{\widehat{\Lambda}_d}$ is close to the minimal prediction error $L^*_S$, assessing its generalization capacity at unobserved sites.
\begin{theorem}\label{thm:EGRisk}
Suppose that Assumptions \ref{hyp:simple}--\ref{hyp:cov_bound} are satisfied. The following assertions hold true.
\begin{enumerate}
    \item[(i)] For any $\delta\in (0,1)$, we have with probability at least $1-\delta$:
    $$\sup_{s\in S}\|\widehat{\Lambda}_d(s) - \Lambda^*_d(s)\| \leq C_5 \, d \, \sqrt{d \, \log(4n/\delta)/n}+ C'_5 \, d \, \sqrt{d} \, D/(\sqrt{n}-1),
$$
as soon as $n\geq C''_5\log(4 n/\delta)$, where $C_5$, $C'_5$ and $C''_5$ are positive constants depending on $j_1$, $m$, $M$, $\underline{m}$, $\overline{M}$, and $B$ solely.
    
    \item[(ii)] For any $\delta\in (0,1)$, we have with probability at least $1-\delta$:
    $$L_S(f_{\widehat{\Lambda}_d})-L^*_S\leq C_6 \, d^2 \, \sqrt{\log(4n/\delta)/n}+ C'_6 \, d^2 \, D/(\sqrt{n}-1),
$$
as soon as $n\geq C''_6\log(4 n/\delta)$, where $C_6$, $C'_6$ and $C''_6$ are positive constants depending on $j_1$, $m$, $M$, $\underline{m}$, $\overline{M}$, and $B$ solely.
    
\end{enumerate}
\end{theorem}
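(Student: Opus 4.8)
The plan is to derive assertion (i) from a perturbation analysis of the map $(\Sigma,\bc)\mapsto\Sigma^{-1}\bc$, using the non-asymptotic controls of Corollary \ref{cor} and Proposition \ref{prop:CI_Sigma_vec}, and then to deduce assertion (ii) from (i) by means of the elementary identity that expresses the pointwise excess risk of a strictly convex quadratic objective as a squared norm in the metric of its Hessian. Throughout, I work on the single favourable event of Corollary \ref{cor} (of probability at least $1-\delta$), noting that Proposition \ref{prop:CI_Sigma_vec} is itself a consequence of that event, so that no union bound over $\delta$ is needed.

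For assertion (i), I would start from the decomposition, valid for every $s\in S$,
\[
\widehat{\Lambda}_d(s)-\Lambda_d^*(s)=\widehat{\Sigma}(\bs_d)^{-1}\bigl(\widehat{\bc}_d(s)-\bc_d(s)\bigr)+\bigl(\widehat{\Sigma}(\bs_d)^{-1}-\Sigma(\bs_d)^{-1}\bigr)\bc_d(s),
\]
and apply the triangle inequality together with submultiplicativity of the operator norm. Four ingredients then bound the right-hand side uniformly in $s$: (a) $\vert\vert\vert\Sigma(\bs_d)^{-1}\vert\vert\vert\leq 1/\underline{m}$ by Assumption \ref{hyp:cov_bound}, whence, by Proposition \ref{prop:CI_Sigma_vec}(ii) and the fact that $\widehat{\Sigma}(\bs_d)$ is positive definite with overwhelming probability (using its positive-definite regularized surrogate otherwise), $\vert\vert\vert\widehat{\Sigma}(\bs_d)^{-1}\vert\vert\vert\leq 2/\underline{m}$ as soon as $n$ exceeds a threshold of the announced form; (b) $\|\widehat{\bc}_d(s)-\bc_d(s)\|\leq\sqrt{d}\,\sup_{h\geq 0}\vert\widehat{c}(h)-c(h)\vert$, bounded via Corollary \ref{cor}; (c) $\|\bc_d(s)\|\leq\sqrt{d}\,B$ by isotropy (Assumption \ref{hyp:stat_iso}) and Assumption \ref{hyp:smooth}; (d) the operator-norm bound on $\widehat{\Sigma}(\bs_d)^{-1}-\Sigma(\bs_d)^{-1}$ of Proposition \ref{prop:CI_Sigma_vec}(ii). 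Multiplying out, the first summand is of order $\sqrt{d}$ times $\mathrm{rate}$ and the second of order $d^{3/2}$ times $\mathrm{rate}$, where $\mathrm{rate}=\sqrt{\log(4n/\delta)/n}+D/(\sqrt{n}-1)$; this produces the claimed bound with dominating factor $d^{3/2}=d\sqrt{d}$.

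For assertion (ii), I would use the closed forms of $L_S(f_{\widehat{\Lambda}_d})$ and of $L_S^*=L_S(f_{\Lambda_d^*})$ recalled in the excerpt. Since $\bc_d(s)=\Sigma(\bs_d)\Lambda_d^*(s)$ and $\Lambda_d^*(s)$ minimises $\Lambda\mapsto\Lambda^\top\Sigma(\bs_d)\Lambda-2\bc_d(s)^\top\Lambda$, whose Hessian is $2\Sigma(\bs_d)$, one has the exact pointwise identity
\begin{multline*}
\bigl(\widehat{\Lambda}_d(s)^\top\Sigma(\bs_d)\widehat{\Lambda}_d(s)-2\bc_d(s)^\top\widehat{\Lambda}_d(s)\bigr)-\bigl(\Lambda_d^*(s)^\top\Sigma(\bs_d)\Lambda_d^*(s)-2\bc_d(s)^\top\Lambda_d^*(s)\bigr)\\=\bigl(\widehat{\Lambda}_d(s)-\Lambda_d^*(s)\bigr)^\top\Sigma(\bs_d)\bigl(\widehat{\Lambda}_d(s)-\Lambda_d^*(s)\bigr),
\end{multline*}
so that, integrating over $S=[0,1]^2$ (of unit Lebesgue measure) and invoking Assumption \ref{hyp:cov_bound},
\[
L_S(f_{\widehat{\Lambda}_d})-L_S^*=\int_{s\in S}\bigl(\widehat{\Lambda}_d(s)-\Lambda_d^*(s)\bigr)^\top\Sigma(\bs_d)\bigl(\widehat{\Lambda}_d(s)-\Lambda_d^*(s)\bigr)\,ds\leq\overline{M}\,\sup_{s\in S}\|\widehat{\Lambda}_d(s)-\Lambda_d^*(s)\|^2,
\]
which in passing re-proves that $L_S(f_{\widehat{\Lambda}_d})\geq L_S^*$ almost surely, consistently with Lemma \ref{lem:opt_min_error}. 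To reach the stated $O_{\mathbb{P}}(1/\sqrt{n})$ rate with a $d^2$ factor — rather than squaring the bound of (i) and obtaining $d^3$ — I would bound only one power of the $\Lambda$-error by assertion (i) and the other by a deterministic $O(\sqrt{d})$ quantity, via
\[
\sup_{s\in S}\|\widehat{\Lambda}_d(s)-\Lambda_d^*(s)\|^2\leq\Bigl(\sup_{s\in S}\|\widehat{\Lambda}_d(s)-\Lambda_d^*(s)\|\Bigr)\Bigl(\sup_{s\in S}\|\widehat{\Lambda}_d(s)\|+\sup_{s\in S}\|\Lambda_d^*(s)\|\Bigr),
\]
where $\|\Lambda_d^*(s)\|\leq\vert\vert\vert\Sigma(\bs_d)^{-1}\vert\vert\vert\,\|\bc_d(s)\|\leq B\sqrt{d}/\underline{m}$ and, on the same event, $\|\widehat{\Lambda}_d(s)\|\leq\vert\vert\vert\widehat{\Sigma}(\bs_d)^{-1}\vert\vert\vert\,\|\widehat{\bc}_d(s)\|\leq 4B\sqrt{d}/\underline{m}$ (using (b),(c) above). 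This replaces $d^{3/2}$ by $d^2$ while preserving the $\sqrt{\log(4n/\delta)/n}$ and $D/(\sqrt{n}-1)$ rates, yielding the claim with $C_6,C_6'$ proportional to $\overline{M}B/\underline{m}$ times the constants of (i).

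Nothing here is conceptually deep once Corollary \ref{cor} and Proposition \ref{prop:CI_Sigma_vec} are in hand; the only genuinely delicate points are bookkeeping. First, one must make the "$n$ large enough" thresholds explicit and uniform in $s$, in particular ensuring $\vert\vert\vert\widehat{\Sigma}(\bs_d)^{-1}\vert\vert\vert$ is $O(1)$ — this is where the positive-definite regularized surrogate of $\widehat{\Sigma}(\bs_d)$ (whose smallest eigenvalue is kept bounded away from $0$) is invoked on the rare event where $\widehat{\Sigma}(\bs_d)$ is not itself positive definite. Second, one must track the powers of $d$ through every product, since this is exactly what distinguishes the $d^{3/2}$ of (i) from the $d^{2}$ of (ii). The uniformity over $s\in S$ needed throughout is painless, because every random quantity appearing in the bounds is dominated by $\sup_{h\geq 0}\vert\widehat{c}(h)-c(h)\vert$, which Corollary \ref{cor} controls independently of $s$.
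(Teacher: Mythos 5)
Your proposal is correct and relies on exactly the same ingredients as the paper's proof (Corollary \ref{cor}, Proposition \ref{prop:CI_Sigma_vec}, the $\sqrt{d}$ norm-equivalence bounds, and Assumptions \ref{hyp:smooth}--\ref{hyp:cov_bound}), but it organizes the algebra differently at two points. For assertion \textit{(i)} you use the decomposition $\widehat{\Lambda}_d(s)-\Lambda^*_d(s)=\widehat{\Sigma}(\bs_d)^{-1}\bigl(\widehat{\bc}_d(s)-\bc_d(s)\bigr)+\bigl(\widehat{\Sigma}(\bs_d)^{-1}-\Sigma(\bs_d)^{-1}\bigr)\bc_d(s)$, the mirror image of the paper's $\Sigma(\bs_d)^{-1}\bigl(\widehat{\bc}_d(s)-\bc_d(s)\bigr)+\bigl(\widehat{\Sigma}(\bs_d)^{-1}-\Sigma(\bs_d)^{-1}\bigr)\widehat{\bc}_d(s)$: the paper pays the random factor in the second term (via $\sup_{s\in S}\|\widehat{\bc}_d(s)\|\le\sqrt{d}B$ with high probability) whereas you pay it in the first (via $\vert\vert\vert\widehat{\Sigma}(\bs_d)^{-1}\vert\vert\vert\le 2/\underline{m}$ on the same event); the resulting $d\sqrt{d}$ rate is identical. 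For assertion \textit{(ii)} the paper expands \eqref{eq:excess_IQR} into three bilinear terms, each linear in $\widehat{\Lambda}_d-\Lambda^*_d$, and bounds them one by one; you instead collapse the same expression, using $\bc_d(s)=\Sigma(\bs_d)\Lambda^*_d(s)$, into the exact identity $L_S(f_{\widehat{\Lambda}_d})-L^*_S=\int_{S}\bigl(\widehat{\Lambda}_d(s)-\Lambda^*_d(s)\bigr)^\top\Sigma(\bs_d)\bigl(\widehat{\Lambda}_d(s)-\Lambda^*_d(s)\bigr)\,ds$ and then trade one power of the error against the bounds $\|\Lambda^*_d(s)\|,\ \|\widehat{\Lambda}_d(s)\|=O(\sqrt{d}\,B/\underline{m})$, which recovers the stated $d^2$ factor instead of squaring \textit{(i)}; this is the simplified form of what the paper does term by term, and it buys a transparent proof of $L_S(f_{\widehat{\Lambda}_d})\ge L^*_S$ plus cleaner probability bookkeeping (everything holds on the single event of Corollary \ref{cor}, which is also what the paper implicitly does without saying so). The only caveat—shared with the published argument, hence not a gap relative to it—is that the ``$n$ large enough'' condition guaranteeing $\vert\vert\vert\widehat{\Sigma}(\bs_d)^{-1}\vert\vert\vert=O(1/\underline{m})$ implicitly hides a dependence on $d$ and $D$ through Proposition \ref{prop:CI_Sigma_vec}\textit{(i)}, which neither you nor the paper makes explicit in the threshold $n\ge C''\log(4n/\delta)$.
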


Assertion $(i)$ can be proved by exploiting the bounds obtained in subsection \ref{subsec:nonpar_cov_est} combined with Proposition \ref{prop:CI_Sigma_vec}, while the upper confidence bound for the excess of risk stated in Assertion $(ii)$ can be deduced from the latter by noticing that, with probability one, the excess of integrated quadratic risk can be written as follows:
\begin{multline}
     L_S(f_{\widehat{\Lambda}_d})-L^*_S=\\
     \int_{s\in S} \left( \widehat{\Lambda}_d(s)^\top\Sigma(\bs_d)\widehat{\Lambda}_d(s)- \Lambda_d^*(s)^\top\Sigma(\bs_d)\Lambda_d^*(s)  -2\; \bc_d(s)^\top\left(\widehat{\Lambda}_d(s) -\Lambda_d^*(s)\right)  \right) ds .
     \label{eq:excess_IQR}
\end{multline}
 The detailed proof is given in the Appendix, together with a discussion about possible ways of extending these results to a more general framework. These theoretical guarantees are illustrated by numerical results based on simulated/real spatial data in the next section (and in the Appendix). They clearly show that the prediction errors of the nonparametric empirical kriging method analyzed above get closer and closer to those of the theoretical one (based on the true covariance function) for a variety of spatial models, as the training size $n$ increases.

\section{Illustrative Experiments}\label{sec:num}
We now illustrate the theoretical analysis carried out above by numerical results, illuminating the impact of the assumptions made to get finite sample guarantees. The (fully reproducible) experiments, available at \url{https://github.com/EmiliaSiv/Simple-Kriging-Code}, were implemented in Python~3.6, using the library \texttt{gstools} \citep{gstools}. Below, two covariance models for Kriging of a Gaussian random field $X$ are considered. First, an isotropic \textit{truncated power law} (TPL) covariance function is considered:
\begin{equation}\label{eq:truncpw}
c : h \in [0,+\infty) \mapsto \left(1 - h/\theta \right)^{\frac{3}{2}}\,\mathbb{I}\left\{h \leq \theta\right\},
\end{equation}
where $\theta \in \mathbb{R}_+^*$ is the correlation length. It satisfies all the assumptions involved in Theorem~\ref{thm:EGRisk}, see \citep{Golubov1981}. Next, empirical Kriging is applied to a Gaussian field $X$ with \textit{Gaussian} covariance function:
\begin{equation}\label{eq:gaussian_cov}
c : h \in [0,+\infty) \mapsto \exp\left(-h^2/\theta^2\right),
\end{equation}
which does not satisfy Assumption~\ref{hyp:decorr} but vanishes very quickly. Numerical experiments based on additional covariance models are displayed in the Appendix.

\subsection{Covariance Estimation Illustration}

First, given the spatial domain $S = [0,1]^2$, an independent realization of $X$ was simulated, serving for the nonparametric covariance estimation and referred to as the training spatial dataset $\bX'$. In accordance with the statistical framework considered in subsection \ref{subsec:nonpar_cov_est}, this realization is observed at $n$ sites $(\sigma_1, \cdots, \sigma_n)$ supposed to form a dyadic grid at scale $J \geq 1$, with $n = (1 + 2^J)^2$. The estimation of the covariance function of the random field is then computed using Equation~\eqref{eq:emp_cov} from the $n$ observations. In order to illustrate its accuracy, the estimator is computed for $100$ independent simulations of $X$ observed at the same fixed sites.

\begin{figure}[h]
    \centering
    \subfigure[TPL]{\includegraphics[width=60mm]{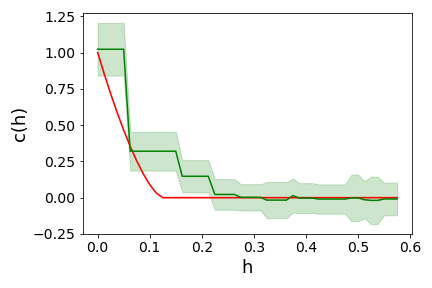}
        \label{fig:TPL_cov_est}}
    \subfigure[Gaussian]{\includegraphics[width=60mm]{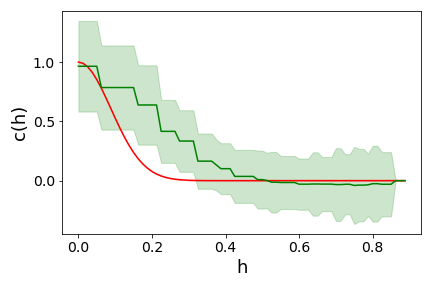}
        \label{fig:gauss_cov_est}}
    \caption{Estimation of the truncated power law (left) and the Gaussian (right) covariance functions, on a dyadic grid at scale $J = 3$ ($n = 81$), with $\theta = 5$. For each model, the red line corresponds to the true covariance function and the green line to the mean of the estimated one, together with the corresponding mean standard deviation (in green shaded bands), over $100$ replications.}
    \label{fig:cov_est}
\end{figure}

The corresponding results for the two covariance functions are depicted in Figure \ref{fig:cov_est} (for a dyadic grid at scale $J = 3$ and with correlation length fixed at $\theta = 5$), where, for each model, the true covariance function appears in red and the mean of the estimated one in green, together with the corresponding mean standard deviation over the $100$ replications.
Observe in Figure \ref{fig:TPL_cov_est}, for the truncated power law function, that the estimation is close to the true value and equal to zero beyond a certain threshold. For the Gaussian model, Figure \ref{fig:gauss_cov_est} shows that the estimation method is less accurate than for the previous covariance model. In particular, it unsuccessfully detects the true correlation parameter $\theta$.

\subsection{Simple Kriging Prediction Results for Simulated Data}\label{sec:exp_num_SK}

Based on the covariance estimates previously obtained from observations of a regular grid of size $n=(1+2^J)^2$, we now consider the simple Kriging problem from a predictive point of view, and simulate a new independent realization of $X$. The latter is observed at $d$ sites $(s_1, \cdots, s_d)$, randomly selected over the spatial domain $S$. As formulated in subsection \ref{subsec:connect}, the goal is to predict the value $X_s$ taken by $X$ at any site $s \in S$ based on the $d$ observations. Regarding the empirical evaluation of the predictive accuracy, the spatial domain $S$ is (regularly) discretized: the goal is to predict the value taken by $X$ at the corresponding $N\geq 1$ sites $s'_1,\; \ldots,\; s'_N$ in $S$. In compliance with the methodology analyzed in subsection \ref{subsec:bounds_ER}, the predictive mapping is constructed by means of the plug-in technique from the covariance vector and the covariance matrix estimators defined in \eqref{eq:c_hat} and \eqref{eq:sigma_hat}, see \eqref{eq:emp_krig_def}. The prediction error 
being the expected squared difference between the predicted random field and the true random field integrated (respectively, averaged) over (respectively, the discretized version of) the spatial domain $S$, we performed $100$ replications of the experiment, each one involving one simulation $\bX'$ for the training step and one simulation $X$ for the prediction test, the locations $(\sigma_1, \cdots, \sigma_n)$, $(s_1, \cdots, s_d)$ and $(s'_1,\; \ldots,\; s'_N)$ remaining fixed. In order to compare empirically the empirical Kriging method analyzed in the previous section to the 'Oracle' method based on the true covariance function (theoretical Kriging), the prediction techniques have been thus applied $100$ times, so that $2 \times 100$ prediction maps have been obtained. For each replication $(\bX',X)$ of the experiment, the (spatial) average over the discretized version of $S$ of the Mean Squared Error \eqref{eq:MSE} has been evaluated,
\begin{equation}\label{eq:emp_AMSE}
    AMSE = \frac{1}{N} \sum\limits_{t = 1}^N (f(s'_t)- X_{s'_t})^2,
\end{equation}
for $f=f_{\Lambda^*_d}$ (theoretical Kriging) and $f=f_{\widehat{\Lambda}_d}$ (empirical Kriging). The mean and standard deviation of \eqref{eq:emp_AMSE} have been computed over the $100$ replications.
To observe the effects of several parameters on the performance of the Kriging method, the experience was carried out for different sizes of the dyadic grid and different values of the correlation length $\theta$ ($\{2.5, 5, 7.5, 10\}$), the parameter used in the definition of the instrumental covariance functions. Note that, for the truncated power law covariance function, the parameter $\theta$ is linked to the parameter $j_1$ in Assumption~\ref{hyp:decorr}:
$j_1 = - \log(\sqrt{2} - \theta/\sqrt{n})/\log(2)$. The corresponding bound $h \geq \sqrt{2} - 2^{-j_1}$ of Assumption~\ref{hyp:decorr} for the different values of $\theta$ are $\{0.061, 0.122, 0.183, 0.244\}$ respectively.
The training dataset was drawn on a dyadic grid at scale $J = 3$ (with $n = 81$ observations), whereas the number of input observations for the prediction is equal to $d = 10$.
The results for the two spatial models are displayed in Table \ref{tab:TPL_Gauss_J3_theta}: the mean and the standard deviation (std) of the AMSE, over the $100$ replications, are given for different values of $\theta$.

\begin{table}[h]
  \centering
  \caption{Mean and standard deviation of the AMSE over $100$ independent simulations of a Gaussian process with truncated power law (left) and Gaussian (right) covariance functions for theoretical and empirical Kriging with different values of $\theta$ (with $J = 3$, $N = 1681$ and $d = 10$).}
  \vskip 0.15in
    \centering
    \begin{tabular}{ | c || c | c || c | c | }
        \hline
        \small{\texttt{TPL}}
        &
        \multicolumn{2}{| c ||}{ \small{\texttt{Theoretical}} } & \multicolumn{2}{| c |}{ \small{\texttt{Empirical}} }
        \\ \hline \hline
        $\theta$ & mean & std & mean & std
        \\ \hline
        $2.5$ & 0.961 & 0.086 & 0.971 & 0.088
        \\ \hline
        $5$ & 0.911 & 0.145 & 0.930 & 0.159
        \\ \hline
        $7.5$ & 0.850 & 0.218 & 0.864 & 0.215
        \\ \hline
        $10$ & 0.800 & 0.249 & 0.839 & 0.257
        \\ \hline
    \end{tabular}
  \hspace{0.18cm}
    \centering
    \begin{tabular}{ | c || c | c || c | c | }
        \hline
        \small{\texttt{GAUSS}}
        &
        \multicolumn{2}{| c ||}{ \small{\texttt{Theoretical}} } & \multicolumn{2}{| c |}{ \small{\texttt{Empirical}} }
        \\ \hline \hline
        $\theta$ & mean & std & mean & std
        \\ \hline
        $2.5$ & 0.891 & 0.196 & 0.899 & 0.208
        \\ \hline
        $5$ & 0.635 & 0.269 & 0.686 & 0.340
        \\ \hline
        $7.5$ & 0.421 & 0.257 & 0.703 & 1.498
        \\ \hline
        $10$ & 0.247 & 0.202 & 0.536 & 1.048
        \\ \hline
    \end{tabular}
  \label{tab:TPL_Gauss_J3_theta}
\end{table}

For the truncated power law covariance function, observe that the mean of the AMSE decreases slowly with the correlation length $\theta$ (and, by definition, with $j_1$), whereas the standard deviation increases slowly, for both the theoretical and empirical Kriging. Furthermore, in keeping with our theoretical results, the difference between the two  AMSE (\textit{i.e.} the excess of pointwise risk \eqref{eq:excess_IQR}) is small for all values of $\theta$. The same observations can be made for the Gaussian covariance function, where the standard deviation is larger than for the other covariance model, the mean is slightly smaller however.

\begin{figure}[h]
    \centering
    \subfigure[$\theta = 2.5$]{\includegraphics[width=60mm, trim=0cm 0cm 1.6cm 0cm]{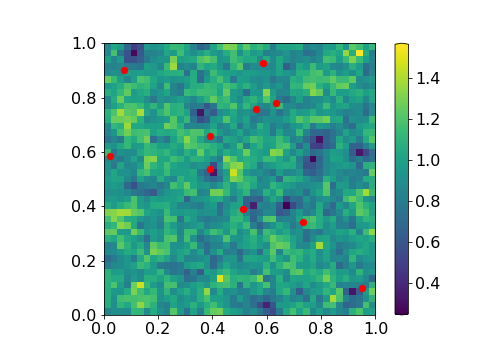}
        \label{fig:TPL_J3_Emp_theta_2.5}}
    \subfigure[$\theta = 5$]{\includegraphics[width=60mm, trim=0.8cm 0cm 0.8cm 0cm]{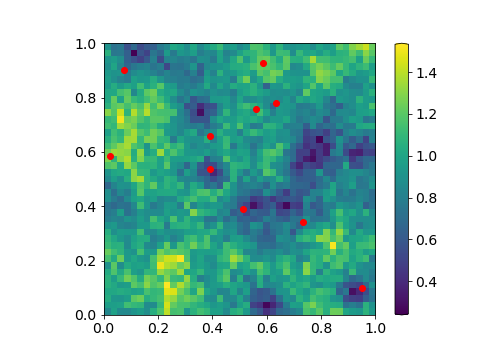}
        \label{fig:TPL_J3_Emp_theta_5.0}}
    \\
    \subfigure[$\theta = 7.5$]{\includegraphics[width=60mm, trim=0cm 0cm 1.6cm 0.8cm]{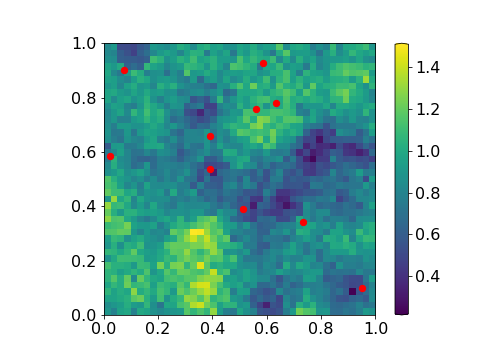}
        \label{fig:TPL_J3_Emp_theta_7.5}}
    \subfigure[$\theta = 10$]{\includegraphics[width=60mm, trim=0.8cm 0cm 0.8cm 0.8cm]{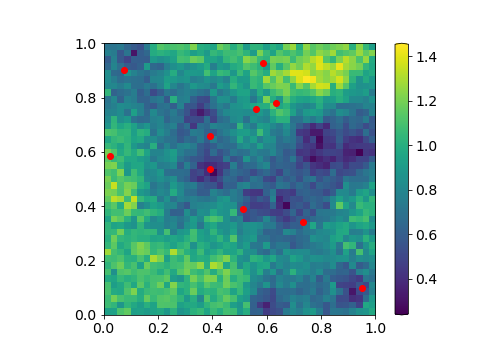}
        \label{fig:TPL_J3_Emp_theta_10.0}}
    \caption{MSE maps over $100$ realizations of a Gaussian process with truncated power law covariance function for the empirical Kriging predictor with different values of $\theta$ ($J = 3$, $N = 1681$, and $d = 10$).}
    \label{fig:TPL_J3_Emp}
\end{figure}

\begin{figure}[h]
    \centering
    \subfigure[$\theta = 2.5$]{\includegraphics[width=60mm, trim=0cm 0cm 1.6cm 0cm]{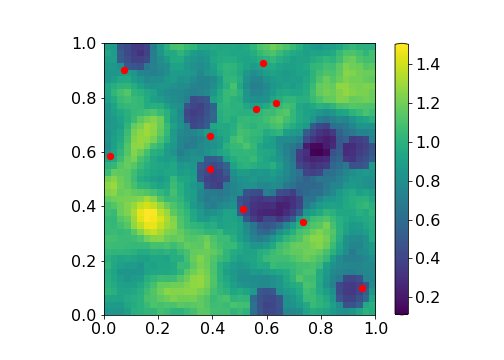}
        \label{fig:Gauss_J3_Emp_theta_2.5}}
    \subfigure[$\theta = 5$]{\includegraphics[width=60mm, trim=0.8cm 0cm 0.8cm 0cm]{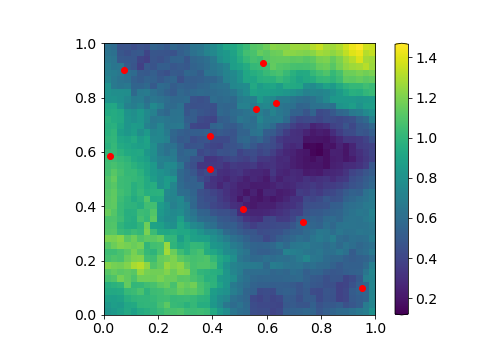}
        \label{fig:Gauss_J3_Emp_theta_5.0}}
    \\
    \subfigure[$\theta = 7.5$]{\includegraphics[width=60mm, trim=0cm 0cm 1.6cm 0.8cm]{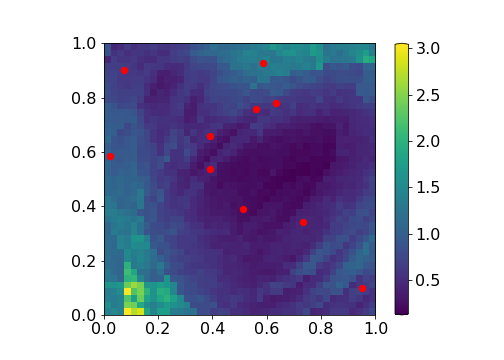}
        \label{fig:Gauss_J3_Emp_theta_7.5}}
    \subfigure[$\theta = 10$]{\includegraphics[width=60mm, trim=0.8cm 0cm 0.8cm 0.8cm]{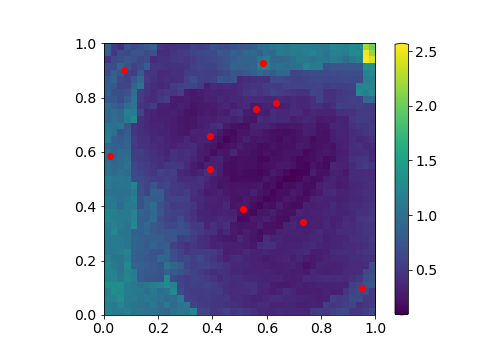}
        \label{fig:Gauss_J3_Emp_theta_10.0}}
    \caption{MSE maps of over $100$ realizations of a Gaussian process with Gaussian covariance function for the empirical Kriging predictor with different values of $\theta$ ($J = 3$, $N = 1681$, and $d = 10$).}
    \label{fig:Gauss_J3_Emp}
\end{figure}

To better understand the spatial structure of these errors, the maps of the mean squared errors for the empirical Kriging predictors are depicted in Figures \ref{fig:TPL_J3_Emp} and \ref{fig:Gauss_J3_Emp}. The observed sites $(s_1, \cdots, s_d)$ are represented in red. Observe first that, as underlined in Remark~\ref{rem:exact_interpolator}, Kriging is an exact interpolator (the error is null at the observation sites $(s_1, \cdots, s_d)$). For the truncated power law model, the complete maps of the MSE for different values of $\theta$ are comparable, with the same order of magnitude for the errors and similar location of the smallest errors. In the case of the Gaussian model, the results in Fig. \ref{fig:Gauss_J3_Emp_theta_7.5} and \ref{fig:Gauss_J3_Emp_theta_10.0} exhibit some border effects, where a higher AMSE can be observed near the boundaries of the spatial domain. This difference can be easily explained. For the truncated power law model, there is an absence of correlation between locations that are distant enough and, as can be seen in Figure \ref{fig:TPL_cov_est}, the threshold is reached quickly. In contrast, for the Gaussian model, the covariance function vanishes only for large distances, especially for the empirical version, which fails to capture the correlation length value as noticed in the previous subsection.
The predictive performance of the empirical Kriging method has been also evaluated for a larger number $n$ of training observations, \textit{i.e.} for a denser dyadic grid of scale $J = 4$ ($n = 289$). Table \ref{tab:TPL_Gauss_J4_theta} (left) shows that the results for the truncated power law model are comparable to those in the case $J = 3$. This is also the case for the Gaussian model except that, when the covariance function is unknown, the mean and standard deviation of the AMSE become larger for $\theta = 10$ (see Table \ref{tab:TPL_Gauss_J4_theta}, right).

\begin{table}[h]
  \centering
  \caption{Mean and standard deviation of the AMSE over $100$ independent simulations of a Gaussian process with truncated power law (left) and Gaussian (right) covariance functions for theoretical and empirical Kriging with different values of $\theta$ (with $J = 4$, $N = 2401$ and $d = 10$).}
  \vskip 0.15in
    \centering
    \begin{tabular}{ | c || c | c || c | c | }
        \hline
        \small{\texttt{TPL}}
        &
        \multicolumn{2}{| c ||}{ \small{\texttt{Theoretical}} } & \multicolumn{2}{| c |}{ \small{\texttt{Empirical}} }
        \\ \hline \hline
        $\theta$ & mean & std & mean & std
        \\ \hline
        $2.5$ & 0.975 & 0.079 & 0.976 & 0.079
        \\ \hline
        $5$ & 0.927 & 0.131 & 0.928 & 0.131
        \\ \hline
        $7.5$ & 0.861 & 0.156 & 0.874 & 0.150
        \\ \hline
        $10$ & 0.815 & 0.210 & 0.841 & 0.220
        \\ \hline
    \end{tabular}
  \hspace{0.18cm}
    \centering
    \begin{tabular}{ | c || c | c || c | c | }
        \hline
        \small{\texttt{GAUSS}}
        &
        \multicolumn{2}{| c ||}{ \small{\texttt{Theoretical}} } & \multicolumn{2}{| c |}{ \small{\texttt{Empirical}} }
        \\ \hline \hline
        $\theta$ & mean & std & mean & std
        \\ \hline
        $2.5$ & 0.913 & 0.167 & 0.890 & 0.167
        \\ \hline
        $5$ & 0.708 & 0.272 & 0.745 & 0.306
        \\ \hline
        $7.5$ & 0.529 & 0.263 & 0.582 & 0.288
        \\ \hline
        $10$ & 0.312 & 0.188 & 1.675 & 12.085
        \\ \hline
    \end{tabular}
  \label{tab:TPL_Gauss_J4_theta}
    \vskip 0.1in
\end{table}

The numerical results for the Gaussian covariance function, which does not satisfy Assumption~\ref{hyp:decorr} but quickly vanishes, suggest that the validity framework of the empirical Kriging method can be extended, as discussed at length in the Appendix.

\subsection{Numerical Results for Real Data}\label{subsec:exp_num_RD}

For the sake of completeness, prediction via simple empirical Kriging has also been examined on real spatial observations, on datasets available on the web portal DRIAS (see \url{https://drias-prod.meteo.fr/okapi/accueil/okapiWebDrias/index.jsp}), which provides the mean daily temperature in France (in Kelvin), observed on a regular grid, from 1951 to 2005. The position (latitude and longitude) of the grid points are in decimal degrees (WGS84). The datasets that are used in this study are square grids of a total of $2401$ point locations (referred to as the spatial domain $S$), during the three months of summer (June, July, and August, for a total of $92$ days) of the years 2004 and 2005 (refer to Figure \ref{fig:RD_data} for the sampled square grid). The square grid is obtained directly on the web portal by selecting the desired points on the grid, in such a way that the complete grid is of the wanted dimension $49 \times 49$. Notice that the temporal dimension of the data is ignored here, it is assumed that the daily observations are independent from one year to the next. Under this simplifying hypothesis, we consider that a number of realizations of the phenomenon are available, large enough for computing significant AMSE's. The dataset of the year 2004 is used as training samples: for each day, $n = 289$ sites are observed, forming a dyadic grid at scale $J = 4$. These observations are used in order to estimate the (supposedly isotropic) covariance function of the random field by means of the nonparametric statistics \eqref{eq:emp_cov} studied in subsection \ref{subsec:nonpar_cov_est}. Then, respectively on the same days of the year 2005, $d = 10$ sites are randomly selected over the spatial domain. The goal is to predict the value $X_s$ taken by $X$ at any unobserved site $s \in S$ (\textit{i.e.} predict the mean temperature on the same day for all unobserved locations) based on the $d$ input observations and the estimated covariance function. The experiment has been performed $92$ times (training samples from 2004 and data from 2005 for the prediction step).

We point out that, since the mean is unknown, we opted for the Ordinary Kriging variant: using the semi-variogram function rather than the covariance (see \citep[Section 3]{Chiles.etal1999} and \citep[subsection 3.2]{Cressie_1993} for a presentation of the estimator and of the method), Ordinary Kriging allows us to perform the prediction without any information about the mean of the random process (see subsection \ref{subsec:nonpar_cov_est}). The theoretical results of Section \ref{sec:main} easily extends to the case of Ordinary Kriging: indeed, Propositions~\ref{prop:Gaetan_Guyon} and \ref{prop:CI_var_lag} are based on the semi-variogram estimation and the following results up to Theorem~\ref{thm:EGRisk} can be straightforwardly extended to the Ordinary Kriging predictor.

\begin{figure}[h]
\centering
    \includegraphics[scale = 0.35]{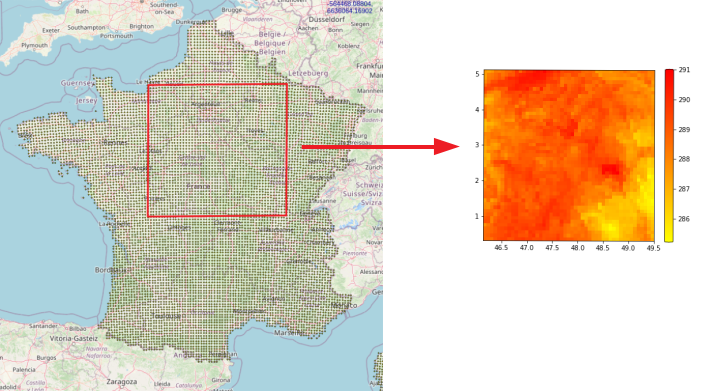}
    \caption{Real Dataset DRIAS: France map with the sampled square grid (left); color map of the sampled square grid, on June 2nd, 2005 (right).}
    \label{fig:RD_data}
\end{figure}

For the parametric Kriging method, the truncated power law model has been selected, among several covariance models. Here we set $\theta = j_1$ (the parameter from Assumption \ref{hyp:decorr}): we fixed the value of $\theta$ high enough, so that a large number of correlations are taken into account. This means that the covariances for almost all lags $h \in \mathcal{H}_n$ are involved in the computation of the parametric Kriging predictor. Yet, it is not surprising, in the case of temperature data, to obtain a better accuracy using almost all covariances, since the correlation is strong between all pairs of locations. The results are displayed in Table \ref{tab:RD_J3_J4}. For the nonparametric Kriging method, the parameter $\nu$ introduced in Lemma \ref{lem:nh_n} is set to $0.35$ in order to use most of the 
observed distances $h \in \mathcal{H}_n$ for the covariance function estimation. Notice that the mean error, and the standard deviation as well, are low (see Table \ref{tab:RD_J3_J4}). The results are encouraging and corroborate the theoretical guarantees established.

\begin{table}[h]
  \centering
  \caption{Mean and standard deviation (std) of all AMSE for parametric and nonparametric Kriging on Real Data (with $J = 4$, $N = 2401$, and $d = 10$).}
  \vskip 0.15in
  \hspace{-0.45cm}
    \begin{tabular}{ | c || c | c || c | c | }
            \hline
            &
            \multicolumn{2}{| c ||}{ \small{\texttt{Parametric}} } & \multicolumn{2}{| c |}{ \small{\texttt{Nonparametric}} }
            \\ \hline \hline
             & mean & std & mean & std
            \\ \hline
            $J = 4$ & 2.581 & 0.564 & 2.944 & 1.931
            \\ \hline
        \end{tabular}
  \label{tab:RD_J3_J4}
    \vskip 0.1in
    \end{table}

Though it is beyond the scope of this article, the statistical modelling and predictive analysis of such real data could be naturally refined in many ways, taking into account anisotropy and/or the temporal structure in particular. However, the only goal pursued here is to show that a simplistic application of the nonparametric empirical Kriging prediction method may perform well and can be competitive compared to a more rigid method based on a preliminary parametric modelling of the covariance structure.

\section{Conclusion}\label{sec:concl}
In this article, we have proposed a statistical learning view on simple Kriging, the flagship problem in Geostatistics, usually addressed from a parametric and asymptotic perspective in the literature. In spite of the similarities shared by simple Kriging with kernel ridge regression, the major difficulty in analyzing this predictive problem lies in the complex dependence structure generally exhibited by spatial data. As explained at length, an empirical version of the optimal simple Kriging rule (minimizing the MSE integrated over the spatial domain) can be constructed by means of a nonparametric estimator of the covariance function in a plug-in fashion. It is also shown that the predictive rule thus built can be viewed as a minimizer of the empirical counterpart of the risk, based on the covariance function estimator.  We have developed a novel theoretical framework offering non-asymptotic guarantees for empirical simple Kriging rules in the form of non-asymptotic bounds for the integrated MSE in a classic \textit{in-fill} setup, stipulating that the $n\geq 1$ sites at which the stationary isotropic Gaussian field under study is observed form a denser and denser regular grid. The learning rate bounds are of order $O_{\mathbb{P}}(1/\sqrt{n})$, these are the first results of this nature to the best of our knowledge. Alternative frameworks, relaxing in particular certain assumptions involved in the present analysis, may be investigated (refer to the discussion in the Appendix section for further details). Possible avenues for future work include the consideration of spatial data observed on irregular grids for the training step and that of the out-fill setup for investigating the generalization capacity of Kriging predictors over a wider and wider spatial domain.

\section*{Statement and Declarations}

\noindent {\bf Conflict of Interest Statement.} The authors declare no conflict of interest.\\

\noindent {\bf Code and Data Availability.} The codes and datasets used for this study are publicly available at \url{https://github.com/EmiliaSiv/Simple-Kriging-Code}.

\section*{Acknowledgements}

This work was supported by the Télécom Paris research chair on Data Science and Artificial Intelligence for Digitalized Industry and Services (DSAIDIS). The authors would like to thank Jean-Rémy Conti (LTCI, Télécom Paris, Institut Polytechnique de Paris), who laid the foundations of this work.


\bibliography{sn-bibliography}
\bibliographystyle{style}


\newpage

\appendix

The Appendices are organized as follows:
\vspace{0.2cm}
\begin{itemize}
    \item in Appendix A, the auxiliary results used in the technical proofs of Appendix B are established.
    \vspace{0.2cm}
    \item in Appendix B, the proofs of the main results of the paper are detailed at length.
    \vspace{0.2cm}
    \item in Appendix C, additional numerical experiments are presented.
    \vspace{0.2cm}
    \item in Appendix D, the role of the technical assumptions involved in the theoretical analysis is extensively discussed, as well as possible avenues in order to extend the results proved in this article to a more general framework.
\end{itemize}

\section{Auxiliary Results}\label{proofs:aux_res}
In this section, we present the auxiliary results used in the technical proofs of the main results of the paper. Here and throughout, given a Hermitian matrix $A$ of size $n \times n$, denote $\xi_n(A) \leq \cdots \leq \xi_1(A)$ its eigenvalues (arranged in decreasing order) and $Rank(A)$ its rank.

\subsection{Auxiliary Result for the Proof of Proposition~\ref{prop:Gaetan_Guyon}}

The following lemma is used in the proof of Proposition~\ref{prop:Gaetan_Guyon} and allows us to generalize the proof for the distributions of both the semi-variogram and the variance estimators.

\begin{lemma}\label{lem:prop1}
Let $X \sim \mathcal{N}(0, \Sigma)$ be a centered Gaussian random field with positive definite covariance function and $R$ a symmetric and positive semi-definite matrix of size $n \times n$, such that $Rank(R) \leq r$ (where $r$ is a strictly positive integer). Then, we have:
$$X^{\top} R X \sim \sum\limits_{i=1}^r \xi_i(R\Sigma) \chi_i^2,$$
where the $\chi_{i}^2$'s are independent $\chi^2$ random variables with one degree of freedom and the $\xi_i(R\Sigma)$'s are the $r$ (strictly positive) eigenvalues of $R\Sigma$.
\end{lemma}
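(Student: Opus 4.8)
The plan is to diagonalise the quadratic form $X^{\top}RX$ by whitening $X$ into a standard Gaussian vector, and then to identify the coefficients of the resulting $\chi^2$-mixture with the eigenvalues of $R\Sigma$.

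First I would use that $\Sigma$ is symmetric positive definite to introduce its symmetric positive definite (hence invertible) square root $\Sigma^{1/2}$ and set $Z := \Sigma^{-1/2}X$, so that $Z \sim \mathcal{N}(0,\bI_n)$, $X = \Sigma^{1/2}Z$ and
$$
X^{\top}RX = Z^{\top}AZ, \qquad A := \Sigma^{1/2}R\Sigma^{1/2}.
$$
The matrix $A$ is symmetric, positive semi-definite (since $v^{\top}Av = (\Sigma^{1/2}v)^{\top}R(\Sigma^{1/2}v) \ge 0$ for all $v$), and $Rank(A) = Rank(R) \le r$ because $\Sigma^{1/2}$ is invertible.

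Next I would relate the spectrum of $A$ to that of $R\Sigma$. The key observation is that $\Sigma^{-1/2}A\Sigma^{1/2} = R\Sigma$, i.e. $A$ and $R\Sigma$ are similar, and therefore have the same characteristic polynomial and the same eigenvalues with multiplicities. Consequently the eigenvalues of $R\Sigma$ are real and nonnegative, and exactly $Rank(R) \le r$ of them are strictly positive; these are the $\xi_i(R\Sigma)$'s in the statement (when $Rank(R) < r$ one simply pads the list with null eigenvalues, which play no role below). Then, writing a spectral decomposition $A = U\,\mathrm{diag}(\xi_1(A), \ldots, \xi_n(A))\,U^{\top}$ with $U$ orthogonal, and using that $U^{\top}Z \sim \mathcal{N}(0,\bI_n)$ by orthogonal invariance of the standard Gaussian law, I would obtain
$$
X^{\top}RX = Z^{\top}AZ = \sum_{i=1}^{n}\xi_i(A)\,(U^{\top}Z)_i^2 \sim \sum_{i=1}^{r}\xi_i(R\Sigma)\,\chi_i^2,
$$
where the $\chi_i^2$'s are i.i.d. $\chi^2$ random variables with one degree of freedom, and the last step discards the vanishing eigenvalues so that only the (at most $r$) strictly positive ones contribute.

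The only step that is not completely mechanical is the eigenvalue identification: one must notice that the symmetry and positivity carried by $A = \Sigma^{1/2}R\Sigma^{1/2}$ transfer to the a priori non-symmetric product $R\Sigma$ through the similarity $A = \Sigma^{1/2}(R\Sigma)\Sigma^{-1/2}$. Everything else — the whitening, the spectral decomposition, and the orthogonal invariance of $\mathcal{N}(0,\bI_n)$ — is routine.
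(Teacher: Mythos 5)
Your proposal is correct and follows essentially the same route as the paper's proof: whitening $X$ via the symmetric square root $\Sigma^{1/2}$, observing that $A=\Sigma^{1/2}R\Sigma^{1/2}$ is symmetric positive semi-definite and similar to $R\Sigma$ (hence shares its spectrum), and then diagonalising $A$ and using orthogonal invariance of the standard Gaussian to obtain the weighted $\chi^2$ representation. Your remark on padding with zero eigenvalues when $Rank(R)<r$ is a minor clarification the paper glosses over, but it does not change the argument.
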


\begin{proof}
Thanks to the assumptions, the covariance matrix $\Sigma$ is symmetric and positive definite. Thus, using a well-known result of matrix algebra (see \textit{e.g.} \citep[Chapter 21]{harville1998matrix}), define the square root of $\Sigma$, a symmetric and positive definite matrix by $\Sigma^{1/2}$ such that $\Sigma = \Sigma^{1/2} \, \Sigma^{1/2}$. The square root matrix is invertible and its inverse $\Sigma^{-1/2}$ is symmetric and positive definite. Let $Y = \Sigma^{-1/2} X \sim \mathcal{N}(0, \bI_n)$, where $\bI_n$ is the $n \times n$ identity matrix. Then
\begin{align*}
    X^{\top} R X & = X^{\top} (\Sigma^{1/2} \Sigma^{-1/2})^{\top} R (\Sigma^{1/2} \Sigma^{-1/2}) X\\ & = (\Sigma^{-1/2} X)^{\top} (\Sigma^{1/2})^{\top} R \Sigma^{1/2} (\Sigma^{-1/2} X) = Y^{\top} T Y,
\end{align*}
where $T = (\Sigma^{1/2})^{\top} R \Sigma^{1/2}$. Since $R$ is symmetric and positive semi-definite, $T$ is also symmetric and positive semi-definite. Furthermore,
\begin{equation*}
    R \Sigma = \Sigma^{-1/2} \Sigma^{1/2} R \Sigma^{1/2} \Sigma^{1/2} = \Sigma^{-1/2} T \Sigma^{1/2},
\end{equation*}
which implies that the matrices $T$ and $R \Sigma$ are similar and have the same eigenvalues (see \textit{e.g.} \citep[Chapter 21]{harville1998matrix}). Thanks to the spectral decomposition, there exists an orthogonal matrix $P$ and a diagonal matrix $D = Diag((t_i)_{i \in \{1, \cdots, n\}})$ of the eigenvalues of $T$, such that $T = P^{\top} D P$. The eigenvalues $t_i$'s are positive as $T$ is positive semi-definite. Recall that the rank of the matrix $R$ is upper bounded by a positive integer $r$ and this implies that the rank of $R \Sigma$ is upper bounded by $r$ too. Thus, denote $\xi_i(R \Sigma)$ the $r$ positive eigenvalues of $R \Sigma$. Furthermore
\begin{equation*}
    X^{\top} R X = Y^{\top} P^{\top} D P Y = Z^{\top} D Z = \sum\limits_{i=1}^n \sum\limits_{j=1}^n Z_i Z_j D_{i, j} = \sum\limits_{i=1}^r Z_i^2 \xi_i(R \Sigma),
\end{equation*}
where $Z_i \sim \mathcal{N}(0, \bI_n)$ are independent Gaussian random variables. Finally, notice that $Z_i^2$ are $r$ $\chi^2$ random variables with one degree of freedom, which concludes the proof.
\end{proof}

\subsection{Bounds on Largest Eigenvalues}\label{proofs_aux_res_prop2}

In this subsection, we present several results useful for the Proposition~\ref{prop:CI_var_lag}. These results will be used for bounding the eigenvalues that appear in the weighted sums of $\chi^2$ random variables in the distributions of both the semi-variogram and the variance estimators.

\vspace{0.3cm}

\noindent \textbf{Eigenvalues of the Product of Positive Semi-definite Hermitian Matrices.} We first present inequalities for the eigenvalues of the product of positive semi-definite Hermitian matrices (the proof can be found in \textit{e.g.} \citep{wang1992some, xi2019inequalities}).

\begin{proposition}\label{aux_res:eig_val}
    Let $A$ and $B$ two positive semi-definite $n \times n$ Hermitian matrices. Denote $\xi_n(A) \leq \cdots \leq \xi_1(A)$ and $\xi_n(B) \leq \cdots \leq \xi_1(B)$ the eigenvalues of $A$ and $B$ respectively. Let $k > 1$. Then, for $1 \leq i_1 < \cdots < i_k \leq n$,
    \begin{equation}
        \sum_{t=1}^k \xi_{i_t}(A) \xi_{n-t+1}(B) \leq \sum_{t=1}^k \xi_{i_t}(AB) \leq \sum_{t=1}^k \xi_{i_t}(A) \xi_t(B).
    \end{equation}
\end{proposition}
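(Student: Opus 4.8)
The plan is to replace the (non-Hermitian) product $AB$ by an equivalent Hermitian positive semi-definite matrix and then to compare eigenvalue sums by means of variational (min--max) principles, the multiplicative structure entering only through a trace inequality. First I would use that $A\succeq 0$ has a positive semi-definite square root $A^{1/2}$, so that $AB$ is similar to $C:=A^{1/2}BA^{1/2}$ (conjugation by $A^{1/2}$; when $A$ is singular, restrict to $\mathrm{Ran}(A)$ and handle $\mathrm{Ker}(A)$ trivially, or argue by continuity along $A\mapsto A+\varepsilon I$). Hence $\xi_{i_t}(AB)=\xi_{i_t}(C)$ for every $t$, and the statement becomes a purely Hermitian one: bound $\sum_{t=1}^k\xi_{i_t}(C)$ above by $\sum_{t=1}^k\xi_{i_t}(A)\xi_t(B)$ and below by $\sum_{t=1}^k\xi_{i_t}(A)\xi_{n-t+1}(B)$, using the spectra of the two positive semi-definite factors.

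Next I would settle the special case $i_t=t$ (sums of the $k$ largest eigenvalues). By Ky Fan's maximum principle, $\sum_{t=1}^k\xi_t(C)=\max\{\,\mathrm{tr}(U^{*}CU):U\in\mathbb{C}^{n\times k},\ U^{*}U=I_k\,\}=\max_U\mathrm{tr}\big((A^{1/2}UU^{*}A^{1/2})\,B\big)$. Writing $P:=A^{1/2}UU^{*}A^{1/2}$, one has $0\preceq P\preceq A$ and $\mathrm{rank}(P)\le k$, hence $\xi_t(P)\le\xi_t(A)$ for all $t$ (Loewner monotonicity of eigenvalues) and $\xi_t(P)=0$ for $t>k$. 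Von Neumann's trace inequality for positive semi-definite matrices, $\mathrm{tr}(PB)\le\sum_t\xi_t(P)\xi_t(B)$, then gives the upper bound; the lower bound follows dually from Ky Fan's minimum principle together with the reverse trace inequality $\mathrm{tr}(PB)\ge\sum_t\xi_t(P)\xi_{n-t+1}(B)$ and the complementary bookkeeping on ranks.

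For a general index set $1\le i_1<\cdots<i_k\le n$ I would replace Ky Fan's principle by Wielandt's refinement: for Hermitian $H$, $\sum_{t=1}^k\xi_{i_t}(H)=\max\sum_{t=1}^k\langle Hu_t,u_t\rangle$, the maximum running over orthonormal families $\{u_1,\dots,u_k\}$ subordinate to a nested flag $V_1\subset\cdots\subset V_k$ with $\dim V_t=i_t$. Applying this to $C$, transporting the extremal family through $A^{1/2}$, and re-running the Loewner--monotonicity and von Neumann estimates would yield the announced two-sided bound; alternatively one could pass to the $k$-th compound matrices, where $\wedge^k(AB)=(\wedge^k A)(\wedge^k B)$ and the index set $\{i_t\}$ is encoded by a single eigenvalue, reducing matters to the rank-type estimate of the previous step.

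The main obstacle is exactly this last passage from top-$k$ sums to arbitrary index sets: Wielandt's variational formula is markedly more delicate than Ky Fan's, and pushing its optimizer through $A^{1/2}$ destroys orthonormality, so one must argue carefully to keep the precise pairing $\xi_{i_t}(A)\leftrightarrow\xi_t(B)$ (rather than a cruder one such as $\xi_1(A)\leftrightarrow\xi_{i_t}(B)$) and to track the correct complementary dimensions. This is the content of the cited works \citep{wang1992some,xi2019inequalities}, from which the proposition is borrowed here, so a fully detailed reproduction is not needed.
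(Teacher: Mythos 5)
The paper does not actually prove this proposition: it is stated as an auxiliary result with the proof delegated to \citep{wang1992some, xi2019inequalities}. Your proposal is therefore consistent with the paper's treatment, since you too ultimately borrow the general index set $1\le i_1<\cdots<i_k\le n$ from those same references, and the portion you do argue is essentially sound: symmetrizing $AB$ into $A^{1/2}BA^{1/2}$ (with the restriction to $\mathrm{Ran}(A)$ or the $A+\varepsilon I$ perturbation when $A$ is singular), then combining Ky Fan's maximum principle with $0\preceq A^{1/2}UU^{*}A^{1/2}\preceq A$, the rank-$k$ bound, Weyl/Loewner monotonicity of eigenvalues and von Neumann's trace inequality does give the top-$k$ case $i_t=t$, and choosing $U$ to span the leading $k$-dimensional eigenspace of $A$ together with the reversed von Neumann pairing gives the matching lower bound. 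Two caveats on the parts you only gesture at: Wielandt's characterization of $\sum_{t}\xi_{i_t}(H)$ is a max--min over a nested flag of subspaces, not a plain maximum over flag-subordinate orthonormal families as you state it, so the "transport through $A^{1/2}$" step would need the genuine two-player formulation (this is precisely where the pairing $\xi_{i_t}(A)\leftrightarrow\xi_t(B)$ becomes delicate); and the compound-matrix alternative does not work as described, because the eigenvalues of $\wedge^k(AB)$ encode the products $\prod_{t}\xi_{i_t}(AB)$ rather than the sums appearing in the statement. Since you explicitly defer the difficult general-index case to the cited works — exactly as the paper does — there is no gap relative to the paper, but your sketch by itself is not a complete proof of the stated inequality.
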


\vspace{0.3cm}

\noindent \textbf{Bounds on the Largest Eigenvalue of Laplacian Matrices.} We give a bound on the largest eigenvalue $\xi_1(L)$ of the Laplacian matrix $L$ of a graph in terms of the maximum degree of its vertices.

\begin{proposition}\label{aux_res:prop_eig_Lap}
Let $G = (V, E)$ a graph with maximum degree $d_{\max} = \max\limits_{v \in V} deg(v)$ and $L$ the Laplacian matrix of $G$. Then $\xi_1(L) \leq 2 d_{\max}.$
\end{proposition}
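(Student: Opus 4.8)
The plan is to combine the variational (Courant--Fischer) characterization of the largest eigenvalue of a symmetric matrix with the classical identity expressing the Laplacian quadratic form as a sum over edges. First I would write $L = D - A$, where $D$ is the diagonal matrix of vertex degrees and $A$ the adjacency matrix of $G$, and record the identity
\[
x^{\top} L x \;=\; \sum_{\{i,j\}\in E} (x_i - x_j)^2 \qquad \text{for all } x \in \mathbb{R}^{|V|},
\]
which follows by expanding each edge contribution as $x_i^2 + x_j^2 - 2 x_i x_j$ and noticing that the diagonal terms regroup into $\sum_{i\in V}\deg(i)\,x_i^2 = x^\top D x$ while the off-diagonal terms give $-x^\top A x$.

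Next I would bound this quadratic form from above. Using $(x_i - x_j)^2 \le 2(x_i^2 + x_j^2)$ for each edge and then re-indexing the edge sum by vertices (each vertex $i$ occurring in exactly $\deg(i)$ edges),
\[
x^{\top} L x \;\le\; 2 \sum_{\{i,j\}\in E}\bigl(x_i^2 + x_j^2\bigr) \;=\; 2 \sum_{i\in V}\deg(i)\,x_i^2 \;\le\; 2\, d_{\max} \sum_{i\in V} x_i^2 \;=\; 2\, d_{\max}\,\|x\|^2 .
\]
Since $L$ is symmetric, Courant--Fischer gives $\xi_1(L) = \sup_{x\neq 0}\,(x^{\top} L x)/\|x\|^2$, and the last display yields $\xi_1(L) \le 2\, d_{\max}$ immediately.

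As an alternative route I would mention the Gershgorin circle theorem: the $i$-th Gershgorin disc of $L$ is centered at $L_{ii} = \deg(i)$ with radius $\sum_{j\neq i}|L_{ij}| = \deg(i)$, hence contained in $[0,\, 2\deg(i)] \subseteq [0,\, 2 d_{\max}]$; as $L$ is symmetric its spectrum is real and contained in the union of these discs, giving the same conclusion. There is no genuine obstacle here --- this is a textbook fact of spectral graph theory --- and the only point requiring mild care is the bookkeeping in the expansion of $x^\top L x$ and the passage from a sum over edges to a sum over vertices; everything else is routine.
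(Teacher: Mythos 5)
Your proof is correct, and both of your routes (the variational one and the Gershgorin one) establish the bound; the only point worth noting is that you argue differently from the paper. The paper's proof proceeds through the decomposition $L = D - A$ into the degree and adjacency matrices: it invokes the cited facts $\xi_1(D) = d_{\max}$ and $\xi_1(A) \leq d_{\max}$ (Proposition \ref{prop:aux_res_d_max}, from Spielman) and then combines them via the eigenvalue inequality of Proposition \ref{res_aux:propAminusB} (Weyl-type) to get $\xi_1(L) \leq 2 d_{\max}$. You instead work directly with the quadratic form, using the edge-sum identity $x^{\top} L x = \sum_{\{i,j\}\in E}(x_i - x_j)^2$, the elementary bound $(x_i - x_j)^2 \leq 2(x_i^2 + x_j^2)$, the re-indexing $\sum_{\{i,j\}\in E}(x_i^2 + x_j^2) = \sum_{i\in V}\deg(i)\,x_i^2$, and Courant--Fischer; your Gershgorin alternative (discs centered at $\deg(i)$ with radius $\deg(i)$) reaches the same conclusion in one line and is in the same spirit as the paper's use of Gershgorin elsewhere to argue positive semi-definiteness of $L(n,h)$. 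Your approach is more self-contained --- it needs no external spectral facts about $D$, $A$, or eigenvalue perturbation --- and it additionally re-derives $\xi_1(L) \geq 0$ for free from the nonnegativity of the quadratic form, whereas the paper's argument is shorter given its cited auxiliary propositions (and, as a side remark, the inequality as literally stated in Proposition \ref{res_aux:propAminusB} should be read as $\xi_1(A-B) \leq \xi_1(A) + \xi_1(-B)$, i.e.\ $\xi_1(A) - \xi_n(B)$, which for the adjacency matrix is still bounded by $2 d_{\max}$ since $\vert \xi_n(A)\vert \leq d_{\max}$; your argument sidesteps this bookkeeping entirely).
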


The proof essentially relies on the following propositions.

\begin{proposition}{\textbf{(\citep[Lemma 3.4.1]{spielmanLaplacian})}}\label{prop:aux_res_d_max}
Let $G = (V, E)$ a graph with maximum degree $d_{\max} = \max\limits_{v \in V} deg(v)$. Let $D$ the degree matrix of $G$ (defined as the diagonal matrix with entries $D_{ii} = deg(v_i), \, \forall i \in \{1, \dots, \vert V \vert\}$) and $A$ the adjacency matrix of $G$ (with entries $A_{ij} = \mathbb{I}\{(v_i, v_j) \in E\}, \, \forall i, \, j \in \{1, \dots, \vert V \vert\}$). Then $\xi_1(D) = d_{\max}$ and
$\xi_1(A) \leq d_{\max}$.
\end{proposition}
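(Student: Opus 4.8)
The plan is to handle the two assertions separately, since each is a standard fact about the matrices attached to a graph. For the first one, I would just observe that $D$ is a diagonal matrix, $D=\mathrm{Diag}\bigl(\deg(v_1),\ldots,\deg(v_{|V|})\bigr)$, so its eigenvalues are precisely its diagonal entries $\deg(v_1),\ldots,\deg(v_{|V|})$; ordering them decreasingly gives $\xi_1(D)=\max_{1\le i\le |V|}\deg(v_i)=d_{\max}$, which is exactly the claimed identity. No work is required here beyond recalling the spectrum of a diagonal matrix.

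For the bound $\xi_1(A)\le d_{\max}$ there are two routes, and I would spell out the second. The first is Gershgorin's circle theorem: since the graph is simple, $A$ is symmetric with vanishing diagonal, and the sum of the moduli of the off-diagonal entries in row $i$ equals exactly $\deg(v_i)\le d_{\max}$; hence every (real) eigenvalue of $A$ lies in $\bigcup_i[-\deg(v_i),\deg(v_i)]\subseteq[-d_{\max},d_{\max}]$, so in particular $\xi_1(A)\le d_{\max}$. The second, self-contained, route goes through the Rayleigh quotient: for any $x\in\mathbb{R}^{|V|}$ with $\|x\|=1$,
$$x^{\top}Ax=2\!\!\sum_{\{i,j\}\in E}\!\!x_ix_j\ \le\ \sum_{\{i,j\}\in E}\bigl(x_i^2+x_j^2\bigr)=\sum_{i=1}^{|V|}\deg(v_i)\,x_i^2\ \le\ d_{\max}\sum_{i=1}^{|V|}x_i^2=d_{\max},$$
where the first inequality is $2ab\le a^2+b^2$ applied edgewise and the middle identity counts, for each vertex, the edges incident to it. Taking the supremum over unit vectors yields $\xi_1(A)=\max_{\|x\|=1}x^{\top}Ax\le d_{\max}$.

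There is no genuine obstacle here — this is textbook material, and indeed the statement is quoted from Spielman's notes — so the only points warranting care are bookkeeping ones. One should note explicitly that $A$ has a zero diagonal because the graph is loopless (so that the Gershgorin discs are centred at the origin), and that the ``handshake'' reindexing $\sum_{\{i,j\}\in E}(x_i^2+x_j^2)=\sum_i\deg(v_i)x_i^2$ is legitimate because each edge contributes $x_u^2$ and $x_v^2$ to the degree terms of its two endpoints $u$ and $v$. If one prefers not to rely on the cited reference at all, the Rayleigh-quotient computation above, together with the elementary description of the spectrum of $D$, constitutes a complete and self-contained proof.
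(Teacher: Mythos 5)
Your proof is correct. The paper itself gives no argument for this proposition---it is quoted directly from Spielman's lecture notes (Lemma 3.4.1) and used as a black box in the proof of Proposition~\ref{aux_res:prop_eig_Lap}---so there is nothing in the paper to diverge from; your Rayleigh-quotient computation (with the handshake reindexing $\sum_{\{i,j\}\in E}(x_i^2+x_j^2)=\sum_i \deg(v_i)x_i^2$) together with the observation that the spectrum of the diagonal matrix $D$ is exactly its diagonal is the standard self-contained justification, and the Gershgorin alternative you sketch is equally valid since the adjacency matrix of a loopless graph has zero diagonal.
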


The next proposition is a well-known result, often called Weyl's inequality (see \textit{e.g.} \citep[Theorem 4.3.1]{horn2012matrix} for a proof of the result).

\begin{proposition}\label{res_aux:propAminusB}
Let $A$ and $B$ be two Hermitian matrices. Then 
\begin{equation*}
    \xi_1(A - B) \leq \xi_1(A) + \xi_1(B).
\end{equation*}
\end{proposition}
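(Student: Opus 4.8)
The plan is to derive this from the subadditivity of the largest eigenvalue of a Hermitian matrix --- the most elementary instance of Weyl's inequality --- which itself follows immediately from the Courant--Fischer variational characterization. Recall that for any Hermitian $n \times n$ matrix $M$ one has, by compactness of the unit sphere and continuity of $v \mapsto \langle v, Mv\rangle$,
\[
\xi_1(M) = \max_{\|v\| = 1} \langle v, Mv \rangle, \qquad \xi_n(M) = \min_{\|v\| = 1} \langle v, Mv \rangle .
\]

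First I would pick a unit vector $w$ attaining the maximum for $M = A - B$, so that $\xi_1(A-B) = \langle w, Aw \rangle - \langle w, Bw \rangle$. Bounding the first summand by $\xi_1(A)$ through the variational formula for $A$, and the second by $-\langle w, Bw \rangle \le -\xi_n(B)$, one gets the (sharp) inequality $\xi_1(A-B) \le \xi_1(A) - \xi_n(B)$. Equivalently, this is Weyl's subadditivity $\xi_1(X+Y) \le \xi_1(X) + \xi_1(Y)$ applied to the Hermitian summands $X = A$ and $Y = -B$, together with $\xi_1(-B) = -\xi_n(B)$.

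The announced bound then follows from $-\xi_n(B) \le \xi_1(B)$, which holds in every situation where the proposition is used: it is immediate when $B$ is positive semi-definite (then $\xi_n(B) \ge 0 \ge -\xi_1(B)$), and it holds whenever the spectral radius of $B$ is attained at a non-negative eigenvalue, in particular for $B$ equal to the adjacency matrix of a graph, where symmetry and non-negativity of the entries give $|\xi_n(B)| \le \xi_1(B)$ (bound $|\langle v, Bv\rangle|$ by $\langle |v|, B|v|\rangle$ for an eigenvector $v$ of $\xi_n(B)$, with $|v|$ the entrywise modulus). There is no real obstacle here; the only thing to watch is this sign bookkeeping relating $\xi_1(-B)$ to $\xi_1(B)$.
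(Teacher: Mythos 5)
Your proof is correct and, unlike the paper, self-contained: the paper gives no argument at all for this proposition, simply labelling it Weyl's inequality and citing Horn and Johnson (Theorem 4.3.1), whereas you derive it from the Courant--Fischer variational characterization, which is precisely the standard proof of that inequality, so the underlying mathematics agrees while you supply the details. The genuinely valuable part of your write-up is the sign bookkeeping: what the variational argument (equivalently, Weyl applied to $A$ and $-B$) actually yields is $\xi_1(A-B)\le \xi_1(A)-\xi_n(B)=\xi_1(A)+\xi_1(-B)$, and the inequality as literally stated, with $\xi_1(B)$ on the right, is false for arbitrary Hermitian $B$ (take $A=0$ and $B=-I$, so that $\xi_1(A-B)=1$ while $\xi_1(A)+\xi_1(B)=-1$). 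You correctly note that the stated form requires $-\xi_n(B)\le\xi_1(B)$ and that this holds in the only situation where the proposition is invoked, namely $L=D-A$ with $D$ the (positive semi-definite, diagonal) degree matrix and $B=A$ the adjacency matrix of a graph, whose spectral radius is attained at its largest eigenvalue --- your bound $\vert\langle v,Bv\rangle\vert\le\langle \vert v\vert,B\vert v\vert\rangle$ for an eigenvector of $\xi_n(B)$ is exactly the Perron--Frobenius-type fact needed. So your argument recovers the cited inequality in the form the paper actually uses, and in addition flags a harmless but real imprecision in the statement: to be valid for all Hermitian $A,B$, the right-hand side should read $\xi_1(A)+\xi_1(-B)$, i.e.\ $\xi_1(A)-\xi_n(B)$.
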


Combining these results with the definition of the Laplacian matrix as $L = D - A$, where $D$ is the degree matrix and $A$ the adjacency matrix of a graph, we have the wanted result in Proposition \ref{aux_res:prop_eig_Lap}.

\vspace{0.3cm}

\noindent \textbf{Bounds on the Largest Eigenvalue of the Covariance Matrix of a Stationary Random Field.} Now, we present a result on the bounded eigenvalues of the covariance matrix for a stationary random field. This result derives from the application of Bochner's Theorem (see \textit{e.g.} \citep[Chapter 2]{stein1999interpolation}), combined with the assumed bounds on the spectral density.

\begin{lemma}\label{aux_res:lem_eig_cov}
    Let $(X_s)_{s \in \mathbb{R}^2}$ be a stationary (in the second-order sense) process with spectral density $\Phi$ and covariance matrix $\Sigma$. Suppose that Assumption \ref{hyp:bounded} is fulfilled. Thus, the eigenvalues $\xi(\Sigma)$ of the covariance matrix are bounded as follows:
    $$\exists c > 0, c' > 0, \, c m \leq \xi(\Sigma) \leq c' M,$$
    where $m$ and $M$ are given  in Assumption~\ref{hyp:bounded}.
\end{lemma}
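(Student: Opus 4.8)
The plan is to reduce the statement to a bound on the Rayleigh quotients of $\Sigma_n:=\Sigma(\sigma_1,\dots,\sigma_n)$ and to exploit the fact that, because $X$ is second-order stationary, $\Sigma_n$ is a principal submatrix of an infinite (block-)Toeplitz array whose numerical range is governed by its \emph{symbol}, namely an aliased version of the spectral density $\Phi$. Concretely, I would fix $v=(v_1,\dots,v_n)^{\top}\in\mathbb{R}^n$ and invoke Bochner's theorem (see e.g. \citep[Chapter 2]{stein1999interpolation}), which under Assumption~\ref{hyp:stat_iso} gives the spectral representation $c(\|x\|)=\int_{\mathbb{R}^2}e^{ix^{\top}u}\Phi(u)\,du$ with $\Phi\geq 0$. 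Writing $(\Sigma_n)_{jk}=c(\|\sigma_j-\sigma_k\|)$ and $g_v(u):=\sum_{j=1}^n v_j e^{i\sigma_j^{\top}u}$, one obtains the identity $v^{\top}\Sigma_n v=\int_{\mathbb{R}^2}\Phi(u)\,|g_v(u)|^2\,du\geq 0$. This is the whole first step; note that nonnegativity of $\Phi$ (Bochner) is exactly what makes a strictly positive lower bound conceivable.

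Second, I would carry out the \emph{folding} (aliasing) step. Since all the $\sigma_j$'s lie on the regular grid $\mathcal{G}_J$, after the linear change of variables rescaling the grid spacing to $1$ the function $g_v$ is $2\pi$-periodic in each coordinate, hence so is $|g_v|^2$; splitting $\mathbb{R}^2$ into translates of the fundamental cell $Q=[0,2\pi)^2$ and using this periodicity turns the integral into $\int_{Q}|g_v(u)|^2\,\bigl(\sum_{k\in\mathbb{Z}^2}\Phi(u+2\pi k)\bigr)\,du$ (up to the normalization induced by the grid spacing). Assumption~\ref{hyp:bounded} then bounds the bracketed aliased density uniformly in $u$, so that $m\int_{Q}|g_v(u)|^2\,du\ \leq\ v^{\top}\Sigma_n v\ \leq\ M\int_{Q}|g_v(u)|^2\,du$.

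Third, I would evaluate $\int_Q|g_v|^2$ by Parseval: the exponentials $u\mapsto e^{i\sigma_j^{\top}u}$ attached to distinct grid points are orthogonal over $Q$, so $\int_Q|g_v(u)|^2\,du=\kappa\,\|v\|^2$ where $\kappa=\mathrm{vol}(Q)$ depends only on the grid geometry. Combining with the previous display gives $m\kappa\|v\|^2\leq v^{\top}\Sigma_n v\leq M\kappa\|v\|^2$ for every $v\in\mathbb{R}^n$, and the Courant--Fischer min--max characterization of eigenvalues immediately yields $c\,m\leq\xi(\Sigma_n)\leq c'\,M$ with $c=c'=\kappa$, the normalization being absorbed into $c,c'$. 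Isotropy is used here only to know that $\Phi$ is radial; the argument works for any second-order stationary field with a spectral density obeying Assumption~\ref{hyp:bounded}.

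The main obstacle is the bookkeeping in the folding step given that the observation set is a \emph{finite} box, not the whole lattice: strictly speaking $\Sigma_n$ is a \emph{truncated} Toeplitz matrix, which cannot be diagonalized by a Fourier basis, so one must not attempt a spectral computation but instead argue on Rayleigh quotients as above — this is precisely the classical Grenander--Szeg\H{o} fact that a principal submatrix of a Hermitian Toeplitz operator keeps its eigenvalues within the symbol's essential range (see the discussion in \citep{BDbook}). A secondary delicate point is to make sure the normalization constant $\kappa$ — equivalently, the interplay between the grid spacing $2^{-J}$ and the period $2\pi$ appearing in Assumption~\ref{hyp:bounded} — is handled consistently with the way Assumption~\ref{hyp:bounded} is later used in Proposition~\ref{prop:CI_var_lag}, so that the constants $c,c'$ carry only the intended dependence; this amounts to fixing once and for all the scaling convention relating the continuous spectral density to the spectral density of the sampled field.
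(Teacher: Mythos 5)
Your argument is correct and is exactly the route the paper (implicitly) takes: the appendix offers no written proof of this lemma beyond the sentence that it ``derives from the application of Bochner's Theorem, combined with the assumed bounds on the spectral density,'' and your Rayleigh-quotient computation --- Bochner's representation, folding onto the fundamental cell $[0,2\pi)^2$ using the lattice structure of $\mathcal{G}_J$, Parseval for the grid exponentials, then Courant--Fischer --- is the standard way of making that sentence precise. The scaling caveat you flag is genuine: with grid spacing $2^{-J}$ the aliased density that actually appears after your change of variables is $2^{2J}\sum_{k\in\mathbb{Z}^2}\Phi\bigl(2^{J}(u+2\pi k)\bigr)$ rather than $\sum_{k\in\mathbb{Z}^2}\Phi(u+2\pi k)$, so Assumption~\ref{hyp:bounded} has to be read at the scale of the sampled lattice for the constants $c,c'$ to be independent of $n$ --- but that ambiguity sits in the paper's formulation of the assumption (and its one-line justification of the lemma), not in your proof.
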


\subsection{Upper Bound on the Variances of the Semi-Variogram and Variance Estimators}\label{subsec:aux_res_cressie}

We present a well-known result from \citep{Cressie_1993} for the variance of the semi-variogram estimator.

\begin{proposition}{\textbf{(\citep[Section 2.4]{Cressie_1993})}}\label{aux_res:prop_cressie}
    Variances of the semi-variogram estimator $\widehat{\gamma}(h)$ for a fixed $h$ are $O(1/n)$.
\end{proposition}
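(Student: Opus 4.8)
The plan is to exploit the exact distributional identity of Proposition~\ref{prop:Gaetan_Guyon}, which under Assumption~\ref{hyp:gaussian} gives $\widehat{\gamma}(h)\sim n_h^{-1}\sum_{i=1}^{n_h}\ell_i(h)\chi_i^2$ with the $\chi_i^2$ independent and one-degree-of-freedom. Since $Var(\chi_i^2)=2$, this yields at once
\[
Var\big(\widehat{\gamma}(h)\big) \;=\; \frac{2}{n_h^2}\sum_{i=1}^{n_h}\ell_i(h)^2 .
\]
It then suffices to show that $\sum_{i=1}^{n_h}\ell_i(h)^2=O(n)$ and to recall, from Lemma~\ref{lem:nh_n}, that $n_h$ is of order $n$ for observed lags $h<\sqrt{2}-2^{-j_1}$ (for $h\geq \sqrt{2}-2^{-j_1}$ the estimator equals the constant $0$ and its variance vanishes trivially).

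To control $\sum_i\ell_i(h)^2$, I would first note that the $\ell_i(h)$ are the $n_h$ (real, strictly positive) eigenvalues of $L(n,h)\Sigma_n$, see Proposition~\ref{prop:Gaetan_Guyon}, so $\sum_i\ell_i(h)^2\leq n_h\big(\max_i\ell_i(h)\big)^2$. Using the similarity $L(n,h)\Sigma_n\sim \Sigma_n^{1/2}L(n,h)\Sigma_n^{1/2}$ (equivalently, the $k=1$ instance of Proposition~\ref{aux_res:eig_val}), one gets $\max_i\ell_i(h)=\xi_1\big(L(n,h)\Sigma_n\big)\leq \xi_1(L(n,h))\,\xi_1(\Sigma_n)$. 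The matrix $L(n,h)$ is the Laplacian of the graph on the grid points whose edges join the pairs at distance $h$; its maximum degree is bounded by a finite constant $d_{\max}$ (the number of grid neighbours at a prescribed lag on the regular grid), so Proposition~\ref{aux_res:prop_eig_Lap} gives $\xi_1(L(n,h))\leq 2d_{\max}$. Under Assumption~\ref{hyp:bounded}, Lemma~\ref{aux_res:lem_eig_cov} gives $\xi_1(\Sigma_n)\leq c'M$. Altogether $\sum_i\ell_i(h)^2\leq n_h\,(2d_{\max}c'M)^2$, hence $Var(\widehat{\gamma}(h))\leq 2(2d_{\max}c'M)^2/n_h\leq 2(2d_{\max}c'M)^2/(\nu n)=O(1/n)$.

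The only genuinely delicate point is the uniform (in $n$) control of the spectrum of $L(n,h)\Sigma_n$: it rests on the bounded degree of the distance-$h$ graph built on the \emph{regular} grid — this is exactly where grid regularity enters — and on the two-sided spectral bound for $\Sigma_n$ coming from Bochner's theorem under Assumption~\ref{hyp:bounded}. Everything else is bookkeeping.

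An equivalent, more classical route — the one of \citep[Section 2.4]{Cressie_1993} — bypasses the $\chi^2$ representation and evaluates $Var(\widehat{\gamma}(h))$ directly through the Gaussian (Isserlis) fourth-moment formula: each covariance $Cov\big((X_{\sigma_i}-X_{\sigma_j})^2,(X_{\sigma_k}-X_{\sigma_l})^2\big)$ is a sum of products of two values of $c(\cdot)$, hence is bounded by $2c(0)^2$ and, under the finite-range Assumption~\ref{hyp:decorr}, vanishes unless one of the sites in $\{\sigma_i,\sigma_j\}$ lies within distance $\sqrt{2}-2^{-j_1}$ of one of the sites in $\{\sigma_k,\sigma_l\}$; on a regular grid this leaves only $O(n)$ non-zero terms among the $n_h^2$ pairs of edges, so the double sum is $O(n)$ and, divided by $(2n_h)^2=\Theta(n^2)$, gives the claimed $O(1/n)$ rate.
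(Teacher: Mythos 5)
Your main argument is correct, but it takes a genuinely different route from the paper. The paper's proof of Proposition~\ref{aux_res:prop_cressie} is the classical Cressie computation: it writes $Var(\widehat{\gamma}(h))$ as $(2n_h)^{-2}\sum_{(\sigma_i,\sigma_j)\in N(h)}Var\big((X_{\sigma_i}-X_{\sigma_j})^2\big)$ with $Var\big((X_{\sigma_i}-X_{\sigma_j})^2\big)=2(2\gamma(h))^2$ under Gaussianity, obtaining $2\gamma(h)^2/n_h$ --- i.e.\ it treats the squared increments attached to different pairs as uncorrelated and silently drops all cross-covariance terms, exactly as in the approximate formula of \citep[Section 2.4]{Cressie_1993}. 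You instead start from the exact $\chi^2$ representation of Proposition~\ref{prop:Gaetan_Guyon}, get the exact identity $Var(\widehat{\gamma}(h))=\tfrac{2}{n_h^2}\sum_i\ell_i(h)^2$, and control it through $\ell_{\max}(h)\leq \xi_1(L(n,h))\,\xi_1(\Sigma_n)$ (Propositions~\ref{aux_res:eig_val} and \ref{aux_res:prop_eig_Lap}, Lemma~\ref{aux_res:lem_eig_cov}) together with $n_h\geq \nu n$ from Lemma~\ref{lem:nh_n}. Your route buys an honest bound that accounts for the dependence between summands, at the price of invoking the spectral machinery that the paper only deploys later, in the proof of Proposition~\ref{prop:CI_var_lag}; note that your statement that the degree $d_{\max}$ of the distance-$h$ graph is bounded by a constant is exactly the paper's own (unproved, and in fact delicate, since the number of lattice representations of a given squared distance can grow slowly with the grid refinement) claim leading to its bound $\xi_1(L(n,h))\leq c$, so you are at the same level of rigor as the paper on that point.

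One genuine flaw to correct: your ``more classical route'' does not work as stated. Assumption~\ref{hyp:decorr} only forces $c(h)=0$ for $h\geq\sqrt{2}-2^{-j_1}$, and since $S=[0,1]^2$ has diameter $\sqrt{2}$, essentially every pair of sites lies within that range; hence the Isserlis cross-covariances $Cov\big((X_{\sigma_i}-X_{\sigma_j})^2,(X_{\sigma_k}-X_{\sigma_l})^2\big)$ are \emph{not} annihilated except for a vanishing fraction of edge pairs, and the count of non-zero terms is $\Theta(n_h^2)$, not $O(n)$. That sketch therefore does not deliver the $O(1/n)$ rate (and it is also not what the paper does --- the paper simply ignores the cross terms). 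Your first, eigenvalue-based argument should be kept as the proof.
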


\begin{proof}
Notice that, under the Gaussian and the intrinsic assumptions, we have $Var((X_{s+h} - X_s)^2) = 2 (2 \gamma(h))^2$. Then
\begin{align*}
    Var(\widehat{\gamma}(h)) & = \left(\frac{1}{2 n_h}\right)^2 \sum\limits_{(s_i, s_j) \in N(h)} Var((X_{s_i} - X_{s_j})^2)\\ & = \left(\frac{1}{2 n_h}\right)^2 \sum\limits_{(s_i, s_j) \in N(h)} 2 (2 \gamma(h))^2 = \frac{2 \gamma(h)^2}{n_h},
\end{align*}
which gives the wanted result.
\end{proof}

Furthermore, it's easy to see that the variance of the covariance estimator $\widehat{c}(h)$ for a fixed $h$ is $O(1/n)$. This implies, thanks to the link between the semi-variogram estimator and the covariance estimator in Equation \eqref{eq:estvar2}, that the variance of $\widehat{c}_h(0)$ is also $O(1/n)$.
Thus, we obtain
\begin{equation}\label{eq:sigma1_n}
    \exists c > 0, \, Var\left(\widehat{\gamma}(h)\right) \leq \frac{c}{n}
\end{equation}
and
\begin{equation}\label{eq:sigma2_n}
    \exists c' > 0, \, Var\left(\widehat{c}_h(0)\right) \leq \frac{c'}{n}.
\end{equation}

\subsection{Gamma Random Variables}

To avoid any ambiguity, we give the definition of a Gamma random variable and a proposition on the link between Gamma and Chi-Square random variables.

\begin{definition}
The density function of $Z \sim \Gamma(\alpha, \beta)$ a Gamma random variable with shape parameter $\alpha \in \mathbb{R}_+$ and rate parameter $\beta \in \mathbb{R}_+$ is
\begin{equation*}
    f_Z(z) = \frac{\beta^\alpha}{\Gamma(\alpha)}z^{\alpha-1}e^{-z\beta}, \, \forall z > 0,
\end{equation*}
where $\Gamma$ is the Gamma function. The mean of a Gamma random variable is: $\mathbb{E}[Z] = \frac{\alpha}{\beta}.$
\end{definition}

\begin{proposition}\label{aux_res:chi_gamma}
    If $Z \sim \chi^2_k$ and $c > 0$ then $cZ \sim \Gamma\left(\frac{k}{2}, \frac{1}{2c}\right)$.
\end{proposition}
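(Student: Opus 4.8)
The plan is to proceed in two elementary steps, both via the explicit density formula recalled in the definition above. First I would recall that the chi-squared law with $k$ degrees of freedom is itself a special case of the Gamma law, namely $\chi^2_k = \Gamma(k/2, 1/2)$. Indeed, the density of $Z \sim \chi^2_k$ is $f_Z(z) = \frac{1}{2^{k/2}\Gamma(k/2)} z^{k/2 - 1} e^{-z/2}$ for $z > 0$, which coincides with the density $\frac{\beta^\alpha}{\Gamma(\alpha)} z^{\alpha - 1} e^{-z\beta}$ displayed in the definition when $\alpha = k/2$ and $\beta = 1/2$, since then $\beta^\alpha / \Gamma(\alpha) = 1/(2^{k/2}\Gamma(k/2))$.

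Second, I would establish the general scaling property: if $W \sim \Gamma(\alpha, \beta)$ and $c > 0$, then $cW \sim \Gamma(\alpha, \beta/c)$. This follows from a one-line change of variables. Setting $V = cW$, the density of $V$ is $f_V(v) = \tfrac{1}{c} f_W(v/c) = \tfrac{1}{c} \cdot \frac{\beta^\alpha}{\Gamma(\alpha)} (v/c)^{\alpha - 1} e^{-\beta v / c} = \frac{(\beta/c)^\alpha}{\Gamma(\alpha)} v^{\alpha - 1} e^{-(\beta/c) v}$ for $v > 0$, which is exactly the $\Gamma(\alpha, \beta/c)$ density. Combining this with the first step, applied to $\alpha = k/2$ and $\beta = 1/2$, yields $cZ \sim \Gamma(k/2, 1/(2c))$, as claimed.

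There is no genuine obstacle here; the only point requiring a modicum of care is the convention for the second parameter of the Gamma family — the paper uses the rate parametrization, so the relevant transformation sends $\beta \mapsto \beta/c$ (not $\beta \mapsto c\beta$), and one must keep the Jacobian factor $1/c$ consistent throughout. As an alternative that sidesteps even this bookkeeping, one may argue via moment generating functions: for $tc < 1/2$ one has $\mathbb{E}[e^{t c Z}] = \mathbb{E}[e^{(tc) Z}] = (1 - 2tc)^{-k/2}$, which is precisely the MGF of $\Gamma(k/2, 1/(2c))$, so the identification follows from uniqueness of the MGF in a neighbourhood of the origin.
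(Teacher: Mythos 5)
Your proof is correct: identifying $\chi^2_k$ with $\Gamma(k/2,1/2)$ in the rate parametrization and applying the scaling property $cW\sim\Gamma(\alpha,\beta/c)$ (or, equivalently, the MGF computation) is exactly the standard argument, and the paper itself states this proposition as a known fact without giving any proof. Nothing is missing; your care with the rate (rather than scale) convention matches the definition used in the paper.
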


\subsection{Extension of Tail Bound Inequalities for the Semi-Variogram and Variance Estimators}\label{subsec:aux_cor}

As a first preliminary result, we give an upper bound on the total number of distinct observable distances $h \in \mathcal{H}_n$ on the regular grid of size $n$. The idea is the following. Let $n_x$ be the number of columns/rows, such that $n = n_x \times n_x$. Then, we have $n_x^2$ possible combinations between all pairs of points location, at which we may withdraw $n_x$ values (the locations that are on the diagonal of the grid). Since the process is assumed to be isotropic, we may divide this value by $2$. Finally, we add $n_x$ for the diagonal and have the following result: $\vert\mathcal{H}_n\vert = \frac{n_x(n_x+1)}{2}$. Then, we obtain the following upper bound, since $n = n_x^2$:
\begin{equation}
    \vert\mathcal{H}_n\vert \leq n.
    \label{eq:Hn_bound}
\end{equation}

Then, we present a corollary to Proposition \ref{prop:CI_var_lag}, that extends the results on tail bound inequalities for the semi-variogram estimator and the variance estimator.

\begin{corollary}\label{cor:aux}
    Suppose that Assumptions \ref{hyp:simple}--\ref{hyp:bounded} are fulfilled. Let $k > 0$,
    \begin{equation*}
        \mathbb{P}\left(\max\limits_{h \in \mathcal{H}_n} \left\vert\widehat{\gamma}(h)-\gamma(h)\right\vert \geq k\right) \leq  2 n e^{-C'_1 n k^2}, \, \text{ whenever } k \leq \frac{C_1}{C'_1},
    \end{equation*}
    and
    \begin{equation*}
        \mathbb{P}\left(\max\limits_{h \in \mathcal{H}_n} \left\vert\widehat{c}_h(0)-c(0)\right\vert \geq k\right) \leq 2 n e^{-C'_2 n k^2}, \, \text{ whenever } k \leq \frac{C_2}{C'_2},
    \end{equation*}
    where $C_i$ and $C'_i$, $i\in\{1,\; 2\}$, are positive constants depending on $j_1$, $m$ and $M$ solely (given in Proposition~\ref{prop:CI_var_lag}).
\end{corollary}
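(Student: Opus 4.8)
The plan is to derive this corollary directly from the fixed-lag tail bounds of Proposition~\ref{prop:CI_var_lag} by a simple union bound, controlled by the cardinality estimate $\vert\mathcal{H}_n\vert \leq n$ established in \eqref{eq:Hn_bound}. First I would write, for any $k>0$,
$$
\mathbb{P}\left(\max_{h\in\mathcal{H}_n}\left\vert\widehat{\gamma}(h)-\gamma(h)\right\vert\geq k\right)\leq \sum_{h\in\mathcal{H}_n}\mathbb{P}\left(\left\vert\widehat{\gamma}(h)-\gamma(h)\right\vert\geq k\right)\leq \vert\mathcal{H}_n\vert\left(e^{-C_1 n k}+e^{-C'_1 n k^2}\right),
$$
and then invoke $\vert\mathcal{H}_n\vert\leq n$ to replace the prefactor by $n$. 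The analogous chain of inequalities applied to the estimator $\widehat{c}_h(0)$, using the second bound of Proposition~\ref{prop:CI_var_lag}, yields $n\bigl(e^{-C_2 n k}+e^{-C'_2 n k^2}\bigr)$.

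The second step is to collapse the sum of the two exponential terms into a single Gaussian-type term. Here I would observe that $e^{-C_1 n k}\leq e^{-C'_1 n k^2}$ precisely when $C_1 n k\geq C'_1 n k^2$, i.e.\ when $k\leq C_1/C'_1$; under this condition the linear-exponential (Poisson) tail is dominated by the quadratic-exponential (Gaussian) tail, so that $e^{-C_1 n k}+e^{-C'_1 n k^2}\leq 2\,e^{-C'_1 n k^2}$ and the desired bound $2n\,e^{-C'_1 n k^2}$ follows. The same reasoning with the constants $C_2,C'_2$ gives the statement for $\widehat{c}_h(0)$ whenever $k\leq C_2/C'_2$. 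Since $C_1,C'_1,C_2,C'_2$ depend only on $j_1$, $m$ and $M$ (by Proposition~\ref{prop:CI_var_lag}), so do all the constants appearing in the corollary.

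There is no genuine obstacle in this argument: it is a routine union bound followed by an elementary comparison of two exponentials. The only point deserving a word of care is the range restriction on $k$ --- one must make explicit that the threshold $k\leq C_1/C'_1$ (resp.\ $k\leq C_2/C'_2$) is exactly what is needed for the Poisson term to be absorbed into the Gaussian term; outside this regime one would simply keep the two-term bound of the union step. Since the corollary is only used for small deviations $k$ in the subsequent proofs (typically $k$ of order $\sqrt{\log n/n}$), this restriction is harmless.
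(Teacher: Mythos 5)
Your argument is correct and coincides with the paper's own proof: a union bound over $\mathcal{H}_n$, the tail bounds of Proposition~\ref{prop:CI_var_lag}, the cardinality estimate $\vert\mathcal{H}_n\vert\leq n$, and the observation that for $k\leq C_1/C'_1$ (resp.\ $k\leq C_2/C'_2$) the linear-exponential term is dominated by the quadratic one, so the sum is at most $2e^{-C'_1 nk^2}$ (resp.\ $2e^{-C'_2 nk^2}$). No gaps; the treatment of the range restriction on $k$ matches the paper's.
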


\begin{proof}
We give the proof for the semi-variogram estimator. The proof for the variance estimator follows the same steps, replacing the constants $C_1$ and $C'_1$ by $C_2$ and $C'_2$.\\
Notice that
$$\forall k > 0, \, \, \mathbb{P}\left(\max\limits_{h \in \mathcal{H}_n} \left\vert\widehat{\gamma}(h)-\gamma(h)\right\vert \geq k\right) = \mathbb{P}\left(\bigcup\limits_{h \in \mathcal{H}_n} \left\{\left\vert\widehat{\gamma}(h)-\gamma(h)\right\vert \geq k\right\}\right).$$
Thanks to the Union Bound (or Boole's Inequality) and then applying the result in Proposition \ref{prop:CI_var_lag}, we have
\begin{align*}
    \mathbb{P}\left(\bigcup\limits_{h \in \mathcal{H}_n} \left\{\left\vert\widehat{\gamma}(h)-\gamma(h)\right\vert \geq k\right\}\right) & \leq \sum\limits_{h \in \mathcal{H}_n} \mathbb{P}\left(\left\vert\widehat{\gamma}(h)-\gamma(h)\right\vert \geq k\right)\\ & \leq \sum\limits_{h \in \mathcal{H}_n} \left(e^{-C_1 n k} + e^{-C'_1 n k^2}\right)\\ & \leq \vert\mathcal{H}_n\vert 2 \max\left\{e^{-C_1 n k}, e^{-C'_1 n k^2}\right\}.
\end{align*}
Then, if we take $k \leq \frac{C_1}{C'_1}$, the maximum is obtained for $e^{-C'_1 n k^2}$ and, combining this with the result on the cardinality of $\mathcal{H}_n$ in \eqref{eq:Hn_bound}, one gets
$$\vert\mathcal{H}_n\vert 2 \max\left\{e^{-C_1 n k}, e^{-C'_1 n k^2}\right\} \leq 2 n e^{-C'_1 n k^2},$$
which concludes the proof.
\end{proof}

\subsection{Auxiliary Results for the Proof of Proposition \ref{prop:CI_Sigma_vec}}

We present an auxiliary result from \citep[Theorem 4.1]{wedin1973perturbation} (see also \citep[Lemma 5]{staerman2021affine}) used in the proof of Proposition \ref{prop:CI_Sigma_vec}.

\begin{theorem}\label{thm_aux:wedin}
    Let $A$ and $B$ be two invertible matrices of size $d \times d$. Then it holds:
    \begin{equation}
        \vert\vert\vert A^{-1} - B^{-1} \vert\vert\vert \leq \vert\vert\vert A^{-1} \vert\vert\vert \, \vert\vert\vert B^{-1} \vert\vert\vert \, \vert\vert\vert A - B \vert\vert\vert.
    \end{equation}
\end{theorem}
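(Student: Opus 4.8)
The plan is to reduce the inequality to the submultiplicativity of the operator norm $\vert\vert\vert \cdot \vert\vert\vert$ recalled in Section~\ref{sec:background}, by means of an elementary algebraic identity for the difference of two inverses. First I would observe that, for any pair of invertible $d\times d$ matrices $A$ and $B$,
$$
A^{-1} - B^{-1} = A^{-1}BB^{-1} - A^{-1}AB^{-1} = A^{-1}(B - A)B^{-1},
$$
an identity that uses nothing beyond the invertibility of $A$ and $B$.

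Next I would apply the two basic properties of the operator norm: submultiplicativity, $\vert\vert\vert MN \vert\vert\vert \le \vert\vert\vert M \vert\vert\vert\,\vert\vert\vert N \vert\vert\vert$, and absolute homogeneity, which in particular gives $\vert\vert\vert B - A \vert\vert\vert = \vert\vert\vert A - B \vert\vert\vert$. Using submultiplicativity twice on the identity above and then rearranging the three resulting scalar factors yields
$$
\vert\vert\vert A^{-1} - B^{-1} \vert\vert\vert \le \vert\vert\vert A^{-1} \vert\vert\vert\,\vert\vert\vert B - A \vert\vert\vert\,\vert\vert\vert B^{-1} \vert\vert\vert = \vert\vert\vert A^{-1} \vert\vert\vert\,\vert\vert\vert B^{-1} \vert\vert\vert\,\vert\vert\vert A - B \vert\vert\vert,
$$
which is exactly the claimed bound.

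I do not anticipate any genuine obstacle: this is a classical matrix perturbation inequality whose proof is a two-line computation. The only points deserving minor care are checking that the algebraic identity requires only invertibility (no symmetry, positivity, or other structure) and keeping track of the order of the factors when invoking submultiplicativity — a harmless matter, since submultiplicativity together with $\vert\vert\vert M \vert\vert\vert = \vert\vert\vert -M \vert\vert\vert$ lets one permute the three scalar norms freely.
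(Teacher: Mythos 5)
Your proof is correct: the identity $A^{-1}-B^{-1}=A^{-1}(B-A)B^{-1}$ followed by two applications of submultiplicativity of the operator norm and $\vert\vert\vert B-A\vert\vert\vert=\vert\vert\vert A-B\vert\vert\vert$ gives exactly the claimed bound. The paper does not prove Theorem~\ref{thm_aux:wedin} itself but simply cites \citep{wedin1973perturbation} (see also \citep{staerman2021affine}); your two-line argument is the standard one behind that reference and is a complete, valid justification.
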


\section{Technical Proofs}\label{sec:technical_proofs}

\subsection{Proof of Lemma \ref{lem:nh_n}}
For any strictly positive $h \in \mathcal{H}_n$, $\exists (d,q) \in \mathbb{N}^* \times \mathbb{N}^*$ s.t. $h = \left(\sqrt{d^2 + q^2}\right) 2^{-J}$. Since the random field $X$ is isotropic (Assumption \ref{hyp:stat_iso}), we have the following bound
$n_h \geq 4 (n_x - d) (n_x - q),$
where $n = n_x^2$ ($n_x$ represents the number of columns/rows of the square grid). Then, let $h' = h 2^J = \sqrt{d^2 + q^2} > 0$. Since $d \leq h'$ and $q \leq h'$, we have $n_h \geq 4 (n_x - h')^2 \geq 4 (n_x^2 - 2 n_x h') = 4 (n - 2 n_x h')$. Furthermore, under Assumption \ref{hyp:decorr}, we are only interested in distances s.t. $h < \sqrt{2} - 2^{-j_1}$. This implies that, for $n$ large enough (condition given by $n>(\sqrt{2}-2^{-j_1})^2$), we finally obtain:
\begin{equation}
    \forall h \in \mathcal{H}_n, \quad n_h >\nu n,
\end{equation}
where $\nu$ is a positive constant depending on $j_1$ only.

\subsection{Proof of Proposition \ref{prop:Gaetan_Guyon}}
The proof of Proposition \ref{prop:Gaetan_Guyon} essentially relies on Lemma~\ref{lem:prop1}, simultaneously applied to the estimators $\widehat{\gamma}(h)$ and $\widehat{c}_h(0)$. Refer to Appendix \ref{proofs:aux_res} for its presentation and proof. We first study the semi-variogram estimator $\widehat{\gamma}(h)$ and then the variance estimator $\widehat{c}_h(0)$.
The goal is to define a matrix $L(n, h)$ such that $$\widehat{\gamma}(h) = \bX(\sigma_n)^{\top} \frac{1}{n_h} L(n, h) \bX(\sigma_n).$$
Notice that: $\forall h > 0$,
\begin{align*}
     \widehat{\gamma}(h) & = \frac{1}{2 n_h} \sum\limits_{i=1}^n \sum\limits_{j=1}^n \left(X_{\sigma_i}^2 + X_{\sigma_j}^2 - 2 X_{\sigma_i} X_{\sigma_j}\right) \mathbb{I}\{(\sigma_i, \sigma_j) \in N(h)\}\\ & = \frac{1}{n_h} \sum\limits_{i=1}^n X_{\sigma_i}^2 n_h(i) - \frac{1}{n_h} \sum\limits_{i=1}^n \sum\limits_{j=1}^n X_{\sigma_i} X_{\sigma_j} \mathbb{I}\{(\sigma_i, \sigma_j) \in N(h)\}\\ & = \frac{1}{n_h} \sum\limits_{i=1}^n X_{\sigma_i}^2 n_h(i) - \frac{1}{n_h} \sum\limits_{i=1}^n X_{\sigma_i}^2 \mathbb{I}\{(\sigma_i, \sigma_i) \in N(h)\}\\ & \qquad - \frac{1}{n_h} \sum\limits_{i=1}^n \sum\limits_{j=1, j \ne i}^n X_{\sigma_i} X_{\sigma_j} \mathbb{I}\{(\sigma_i, \sigma_j) \in N(h)\}\\ & = \frac{1}{n_h} \left(\sum\limits_{i=1}^n X_{\sigma_i}^2 n_h(i) - \sum\limits_{i=1}^n \sum\limits_{j=1, j \ne i}^n X_{\sigma_i} X_{\sigma_j} \mathbb{I}\{(\sigma_i, \sigma_j) \in N(h)\}\right),
\end{align*}
where $\mathbb{I}\{(\sigma_i, \sigma_i) \in N(h)\} = 0$ (since $h > 0$). Then, let $L(n, h)$ the matrix with entries $L_{i, j}(n, h) = - \mathbb{I}\{(\sigma_i, \sigma_j) \in N(h)\}$ if $i \ne j$ and $L_{i, i}(n,h) = n_h(i)$.

\begin{remark}\label{rem:graph_Gh}
    For a fixed $h \in \mathcal{H}_n$, define $G_h = (V_h, E_h)$ the graph described by the regular grid, where the set of vertices $V_h$ is the set of the $n$ observations' locations and $E_h$ is the set of the $n_h$ edges that are defined by the pairs of locations that are at distance $h$. Then, $L(n, h)$ is the Laplacian matrix of $G_h$, equal to $D(n, h) - A(n, h)$ where $D(n, h)$ is the diagonal matrix of the degrees of the vertices of the graph and $A(n, h)$ is the adjacency matrix. Thanks to the Gershgorin Circle Theorem (see \textit{e.g.} \citep{shi2007bounds}), the Laplacian matrix is positive semi-definite, which implies that all its eigenvalues are nonnegative.
\end{remark}

Thanks to the remark above and the isotropy assumption (Assumption \ref{hyp:stat_iso}), the matrix $L(n, h)$ is symmetric and positive semi-definite. Based on \citep{Cressie_1993}, it is possible to rewrite the matrix as $L(n, h) = \frac{1}{2} Q(n, h)^{\top} Q(n, h)$, where $Q(n, h) \in \mathbb{R}^{n_h \times n}$ is a matrix whose entries are only $-1, 0$ and $1$. The idea is to let $(u_l)_{l \leq n_h}$ the elements of $N(h)$, such that $\forall l \in \llbracket 1, n_h \rrbracket, \, \exists (i, j) \in \llbracket 1, n_h \rrbracket^2, \, u_l = (u_l^1, u_l^2) = (\sigma_i, \sigma_j)$, where $\|\sigma_i - \sigma_j\| = \|h\|$. Thus, $u_l^1 = \sigma_i$ is equivalent to the fact that there exists $j \in \llbracket 1, n_h \rrbracket$ such that $(\sigma_i, \sigma_j) \in N(h)$. Furthermore, $\mathbb{I}\{(\sigma_i, \sigma_j) \in N(h)\} = 1$ is equivalent to the fact that there exists $l \in \llbracket 1, n_h \rrbracket$ such that $u_l = (\sigma_i, \sigma_j)$. Then, let
\begin{equation*}
    \forall l \in \llbracket 1, n_h \rrbracket, \, \forall i \in \{1, \dots, n\}, \, q_{li} = Q_{, i}(n, h) = \begin{cases}
                & 1 \, \text{, if } \, u_l^1 = \sigma_i \text{ and } u_l^2 \ne \sigma_i\\
                & -1 \, \text{, if } \, u_l^2 = \sigma_i \text{ and } u_l^1 \ne \sigma_i\\
                & 0 \, \text{, otherwise}
            \end{cases}
\end{equation*}

Then, $Rank(Q(n, h)) \leq n_h$, which implies $Rank(L(n, h)) \leq n_h$. Hence, it is possible to apply Lemma \ref{lem:prop1} to $\widehat{\gamma}(h)$ with the matrix $L(n, h)$ and the random field $X$ with positive definite covariance matrix $\Sigma_n$
$$\widehat{\gamma}(h) \sim \frac{1}{n_h} \sum\limits_{i=1}^{n_h} \ell_i(h) \chi_{i}^2,$$
where the $\chi_{i}^2$'s are independent $\chi^2$ random variables with one degree of freedom and the $\ell_i(h)$'s are the $n_h$ (strictly positive) eigenvalues of $L(n, h) \Sigma_n$.
For the variance estimator $\widehat{c}_h(0)$, the proof follows the same idea. Let $D(n, h)$ the diagonal matrix with entries $D_{i, i}(n, h) = n_h(i)$. We notice that $D(n, h)$ is the degree matrix of the graph $G_h$ described by the regular grid for a fixed $h \in \mathcal{H}_n$ (see Remark~\ref{rem:graph_Gh}). Then, it's clear that $$\widehat{c}_h(0) = \bX(\sigma_n)^{\top} \frac{1}{n_h} D(n, h) \bX(\sigma_n),$$
and that the matrix $D(n, h)$ is symmetric and positive semi-definite. Furthermore, one may see that $Rank(D(n, h)) \leq n_h$. Indeed, the diagonal elements are $n_h(i)$, that is, for a fixed location point $\sigma_i$, the number of points that are at distance $h$ from $\sigma_i$. The total sum of these $n_h(i)$ over all the grid locations $\sigma_i$ is equal to $n_h$. Thus, the extreme case is when all the $n_h(i)$'s are equal to $1$, which implies that exactly $n_h$ elements on the diagonal are non zero and in this case the rank of $D(n, h)$ is equal to $n_h$. It is possible to apply Lemma \ref{lem:prop1} to $\widehat{c}_h(0)$ with the matrix $D(n, h)$ and the random field $X$ with positive definite covariance matrix $\Sigma_n$
$$\widehat{c}_h(0) \sim \frac{1}{n_h} \sum\limits_{i=1}^{n_h} \rho_i(h) \chi_{i}^2,$$
where the $\chi_{i}^2$'s are independent $\chi^2$ random variables with one degree of freedom and the $\rho_i(h)$'s are the $n_h$ (strictly positive) eigenvalues of $D(n, h) \Sigma_n$.

\subsection{Proof of Proposition \ref{prop:CI_var_lag}}
Since we are interested only on which variables the constants in the final results depend on, we let, in the proof and in the corresponding preliminary results, $c$ and $c'$ as positive constants that are not always the same, but that depend on variables such as $j_1$, $m$ and $M$. The proofs of the Poisson tail bounds for the deviations for both the semi-variogram and the variance estimators are structured as follows: firstly, thanks to Proposition \ref{prop:Gaetan_Guyon}, the distributions of both estimators are known and these can be seen as the sum of independent Gamma random variables; secondly, we deduce exponential inequalities for these tail bounds (from \citep{bercu2015concentration} and \citep{Wang_Ma}); then, using the previous preliminary results, we can bound the largest eigenvalues involved in the distributions of the estimators, and finally, using the lower bound on $n_h$ given in Lemma \ref{lem:nh_n}, we conclude the proof. In the first part of the proof, we will deal with the semi-variogram estimator $\widehat{\gamma}(h)$. Let $t > 0$ and $h \in \mathcal{H}_n$,
$\mathbb{P}\left(\left\vert\widehat{\gamma}(h)-\gamma(h)\right\vert \geq t\right) = \mathbb{P}\left(\widehat{\gamma}(h) \geq \gamma(h) + t\right) + \mathbb{P}\left(\widehat{\gamma}(h) \leq \gamma(h) - t\right).$
Let $\mu_1 = \mathbb{E}[\widehat{\gamma}(h)] = \gamma(h)$ (since $\widehat{\gamma}(h)$ is unbiased). Recall that, thanks to Proposition \ref{prop:Gaetan_Guyon}:
$\widehat{\gamma}(h) \sim \frac{1}{n_h} \sum\limits_{i=1}^{n_h} \ell_i(h) \chi_{i}^2.$ Since the eigenvalues $\ell_i(h), \, \forall i \in \{1, \dots, n_h\}$ of $L(n, h) \Sigma_n$ are non negatives, from the link between Gamma and Chi-Square random variables (see Proposition~\ref{aux_res:chi_gamma}), $\frac{1}{n_h} \ell_i(h) \chi_{i}^2 \sim \Gamma\left(\frac{1}{2}, \frac{n_h}{2\ell_i(h)}\right)$. This implies that $\widehat{\gamma}(h)$ can be seen as the sum of $n_h$ independent Gamma variables with parameters $\alpha_i = \frac{1}{2}$ and $\beta_i(h) = \frac{n_h}{2\ell_i(h)}, \, \forall i \in \{1, \dots, n\}$. Let $\beta_*(h) = \min\limits_{i \leq n_h} \left\{\beta_i(h)\right\} = \frac{n_h}{2\ell_{\max}(h)}$, where  $\ell_{\max}(h) = \max\limits_{i \leq n_h} \ell_i(h)$. We first study the term $\mathbb{P}\left(\widehat{\gamma}(h) \leq \mu_1 - t\right)$. Using the result from \citep[Theorem 2.57]{bercu2015concentration}, (with $x = \frac{t}{\mu_1} \in ]0, 1[$, since $\mu_1 - t \geq 0$ as Gamma variables are positive random variables):
$$\mathbb{P}\left(\widehat{\gamma}(h) \leq \mu_1 - t\right) \leq \exp\left(- \frac{t^2}{2 V_1^2} \, \right),$$
where $V_1^2 = Var(\widehat{\gamma}(h))$ is the variance of the semi-variogram estimator. Furthermore, from the upper bound on the variance of the semi-variogram estimator given above (see Equation \eqref{eq:sigma1_n}), we have
\begin{equation}
    \forall t > 0, \quad \mathbb{P}\left(\widehat{\gamma}(h) \leq \gamma(h) - t\right) \leq \exp\left(-C'_1 n t^2\right),
\end{equation}
where $C'_1$ is a positive constant depending on $j_1$ only.

 Now, we study the term $\mathbb{P}\left(\widehat{\gamma}(h) \geq \mu_1 + t\right)$. Thanks to a slight modification of the result in \citep[Theorem 4.1]{Wang_Ma} for $k$ independent variables $Z_i \sim \Gamma(u_i, v_i)$, let $\mu_Z = \sum\limits_{i=1}^k \mathbb{E}[Z_i]$ and $v_* = \min v_i$
$$\forall z \geq 1, \quad \mathbb{P}\left(\frac{1}{k}\sum\limits_{i=1}^k \left(Z_i - \mathbb{E}[Z_i]\right) \geq z \mu_Z\right) \, \leq \, \exp\left(- \, v_* \mu_Z \, \, (k z - \ln(1 + k z)) \, \right).$$
Thus,
$$\mathbb{P}\left(\widehat{\gamma}(h) \geq \mu_1 + t\right) \leq \exp\left(\, - \beta_*(h) \mu_1 \left(\frac{t}{\mu_1} - \ln\left(1 + \frac{t}{\mu_1}\right)\right) \, \right).$$

We need an upper bound on the largest eigenvalue of the matrix $L(n, h) \Sigma_n$. First, we use the result presented in Proposition~\ref{aux_res:eig_val}, which is derived from some previous works on inequalities for the eigenvalues of the product of positive semi-definite Hermitian matrices (see \textit{e.g.} \citep{wang1992some, xi2019inequalities}). This result allows us to split the study of the upper bound in two: on one hand the largest eigenvalue of the matrix $L(n, h)$, on the other hand the largest eigenvalue of the covariance matrix $\Sigma_n$. Using Proposition~\ref{aux_res:eig_val}
$$\xi_1(L(n, h)) \xi_n(\Sigma_n) \leq \ell_{\max}(h) \leq \xi_1(L(n, h)) \xi_1(\Sigma_n).$$

As defined, $L(n, h)$ is the Laplacian matrix of the graph described by the regular grid, for a fixed $h \in \mathcal{H}_n$ (see Remark \ref{rem:graph_Gh}). Thus, we refer to Proposition~\ref{aux_res:prop_eig_Lap} above for the result on the bound of the largest eigenvalue of a Laplacian matrix. Furthermore, since the number $n$ of observations is finite and for $h \in \mathcal{H}_n$ the number $n_h$ of pairs at distance $h$ in the regular grid is also finite, the maximum degree $d_{\max}$ of the corresponding graph $G_h$ is also always finite and we have
\begin{equation}\label{eq:upp_bound_Lap}
    \exists c > 0, \, \xi_1(L(n, h)) \leq c.
\end{equation}

Now, we deal with the largest eigenvalue of the covariance matrix. Lemma~\ref{aux_res:lem_eig_cov}, under Assumption \ref{hyp:bounded}, gives the following bound: \begin{equation}\label{eq:bound_eig_cov_mat}
    \exists \, c' > 0, \, \, \xi_1(\Sigma_n) \leq c' M
\end{equation}

Thus, let $c_1$ a positive constant, such that we have the upper bound on the largest eigenvalue
\begin{equation}\label{eq:lmax}
    \ell_{\max}(h) \leq c_1 M.
\end{equation}

This implies the bound on the minimum value of the parameters $\beta_i(h)$
\begin{equation*}
    \exists \, C > 0, \, \, \beta_*(h) \geq C n_h,
\end{equation*}
where $C$ depends on $M$. Combining this result with the lower bound on $n_h$ in Lemma \ref{lem:nh_n}, we have
\begin{equation}
    \forall t > 0, \quad \mathbb{P}\left(\widehat{\gamma}(h) \geq \gamma(h) + t\right) \leq \exp\left(-C_1 n t\right),
\end{equation}
where $C_1$ is a positive constant depending on $j_1$ and $M$ only, which concludes the proof for the semi-variogram estimator tail bounds. In the second part of the proof, we study the variance estimator $\widehat{c}_h(0)$, with similar steps as for the previous result on the semi-variogram estimator. Let $t > 0$ and $h \in \mathcal{H}_n$,
$\mathbb{P}\left(\left\vert\widehat{c}_h(0)-c(0)\right\vert \geq t\right) = \mathbb{P}\left(\widehat{c}_h(0) \geq c(0) + t\right) + \mathbb{P}\left(\widehat{c}_h(0) \leq c(0) - t\right).$
Let $\mu_2 = \mathbb{E}[\widehat{c}_h(0)] = c(0)$ (since $\widehat{c}_h(0)$ is unbiased). Recall that, thanks to Proposition \ref{prop:Gaetan_Guyon}:
$\widehat{c}_h(0) \sim \frac{1}{n_h} \sum\limits_{i=1}^{n_h} \rho_i(h) \chi_{i}^2.$ Since the eigenvalues $\rho_i(h), \, \forall i \in \{1, \dots, n_h\}$ of $D(n, h) \Sigma_n$ are non negatives, from the link between Gamma and Chi-Square random variables, $\frac{1}{n_h} \rho_i(h) \chi_{i}^2 \sim \Gamma\left(\frac{1}{2}, \frac{n_h}{2\rho_i(h)}\right)$. This implies that $\widehat{c}_h(0)$ can be seen as the sum of $n_h$ independent Gamma random variables with parameters $a_i = \frac{1}{2}$ and $b_i(h) = \frac{n_h}{2\rho_i(h)}, \, \forall i \in \{1, \dots, n\}$. Let $b_*(h) = \min\limits_{i \leq n_h} \left\{b_i(h)\right\} = \frac{n_h}{2\rho_{\max}(h)}$, where $\rho_{\max}(h) = \max\limits_{i \leq n_h} \rho_i(h)$. We first study the term $\mathbb{P}\left(\widehat{c}_h(0) \leq \mu_2 - t\right)$. Using the result from \citep[Theorem 2.57]{bercu2015concentration}, (with $x = \frac{t}{\mu_2} \in ]0, 1[$, since $\mu_2 - t \geq 0$ as Gamma variables are positive random variables):
$$\mathbb{P}\left(\widehat{c}_h(0) \leq \mu_2 - t\right) \leq \exp\left(- \frac{t^2}{2 V_2^2} \, \right),$$
where $V_2^2 = Var(\widehat{c}_h(0))$. From the result given above (see Equation \eqref{eq:sigma2_n}), we have
\begin{equation}
    \forall t > 0, \quad \mathbb{P}\left(\widehat{c}_h(0) \leq c(0) - t\right) \leq \exp\left(-C'_2 n t^2\right),
\end{equation}
where $C'_2$ is a positive constant depending on $j_1$ only.

Now, we study the term $\mathbb{P}\left(\widehat{c}_h(0) \geq \mu_2 + t\right)$. Combining the bound for the product of matrices given in Proposition~\ref{aux_res:eig_val}, the bound on the largest eigenvalue of the matrix of the degrees of a graph (see Proposition~\ref{prop:aux_res_d_max}) and the bound on the largest eigenvalue of a covariance matrix in Equation~\eqref{eq:bound_eig_cov_mat}, we have an upper bound on the largest eigenvalue of the matrix $D(n, h) \Sigma_n$: 
\begin{equation}\label{eq:rhomax}
    \rho_{\max}(h) \leq c_2 M.
\end{equation}
Thus, using the same argumentation as for the semi-variogram estimator
\begin{equation}
    \forall t > 0, \quad \mathbb{P}\left(\widehat{c}_h(0) \geq c(0) + t\right) \leq \exp\left(-C_2 n t\right),
\end{equation}
where $C_2$ is a positive constant depending on $j_1$ and $M$ only, which concludes the proof.

\subsection{Proof of Corollary \ref{cor}}
For the proof, we shall use both preliminary results in Appendix~\ref{subsec:aux_cor}: an upper bound on the total number of distinct observable distances and Corollary~\ref{cor:aux}, which proof, that simply follows from Proposition \ref{prop:CI_var_lag}, is given in Appendix~\ref{subsec:aux_cor}. In a first place, we study $\left\vert\widehat{c}(h) - c(h)\right\vert$ for all $h \geq 0$. Thanks to the definition of the covariance function estimation at unobserved lags by mean of the $1$-NN estimator (see subsection \ref{subsec:nonpar_cov_est}), for any distance $h$, let $h_o \in \mathcal{H}_n$ the observable distance that is the $1$-NN of $h$ and such that: $\widehat{c}(h) = \widehat{c}(h_o)$. Then,
\begin{equation*}
    \left\vert\widehat{c}(h) - c(h)\right\vert = \left\vert\widehat{c}(h) - c(h) + c(h_o) - c(h_o)\right\vert \leq \left\vert\widehat{c}(h_o) - c(h_o)\right\vert + \left\vert c(h_o) - c(h)\right\vert
\end{equation*}
Applying the mean value (or finite increment) inequality, combined with Assumption \ref{hyp:smooth}, we have
\begin{align*}
    \left\vert c(h_o) - c(h)\right\vert \leq D \vert\vert h - h_o \vert\vert \leq \frac{D}{\sqrt{n}-1},
\end{align*}
since $\forall h \geq 0, \, \vert\vert h - h_o \vert\vert \leq 1/(\sqrt{n}-1)$ (see subsection \ref{subsec:nonpar_cov_est}). From the link between the covariance and the semi-variogram functions and the link for their estimators in \eqref{eq:estvar2}, we have
$$\left\vert \widehat{c}(h_o) - c(h_o) \right\vert \leq \left\vert \widehat{c}_{h_o}(0) - c(0) \right\vert + \left\vert \widehat{\gamma}(h_o)- \gamma(h_o) \right\vert.$$
Then, we have:
$$\sup\limits_{h \geq 0} \left\vert\widehat{c}(h) - c(h)\right\vert \leq \max\limits_{h \in \mathcal{H}_n} \left\vert \widehat{c}_h(0) - c(0) \right\vert + \max\limits_{h \in \mathcal{H}_n} \left\vert \widehat{\gamma}(h)- \gamma(h) \right\vert + \frac{D}{\sqrt{n}-1}.$$
This yields: $\forall t > 0$,
\begin{align*}
    \mathbb{P} & \left(\sup\limits_{h \geq 0} \left\vert\widehat{c}(h) - c(h)\right\vert \geq t\right)\\ & \quad \leq \mathbb{P}\left(\max\limits_{h \in \mathcal{H}_n} \left\vert \widehat{c}_h(0) - c(0) \right\vert + \max\limits_{h \in \mathcal{H}_n} \left\vert \widehat{\gamma}(h)- \gamma(h) \right\vert + D/(\sqrt{n}-1) \geq t\right)\\ & \quad \leq \mathbb{P}\left(\max\limits_{h \in \mathcal{H}_n} \left\vert \widehat{c}_h(0) - c(0) \right\vert \geq \frac{1}{2}\left(t - D/(\sqrt{n}-1)\right)\right)\\ & \quad \quad + \mathbb{P}\left(\max\limits_{h \in \mathcal{H}_n} \left\vert \widehat{\gamma}(h)- \gamma(h) \right\vert \geq \frac{1}{2}\left(t - D/(\sqrt{n}-1)\right)\right).
\end{align*}
Then, we can apply the result in Corollary \ref{cor:aux} for both estimators with $k = \left(t - D/(\sqrt{n}-1)\right)/2$ and we obtain
\begin{align*}
    \forall t > 0, \, \mathbb{P}\left(\sup\limits_{h \geq 0} \left\vert\widehat{c}(h) - c(h)\right\vert \geq t\right) & \leq 2 n e^{- C'_2 n k^2} + 2 n e^{- C'_1 n k^2},
\end{align*}
as soon as $k \leq \min\left\{C_1/C'_1,\; C_2/C'_2\right\} = C'_{min}$. Furthermore, we have
\begin{align*}
    2 n e^{- C'_2 n k^2} + 2 n e^{- C'_1 n k^2} \leq 4 n \max\left\{e^{- C'_2 n k^2}, e^{- C'_1 n k^2}\right\} = 4 n e^{- C_{min} n k^2},
\end{align*}
where $C_{min} = \min\{C'_1, C'_2\}$. Finally, let $\delta \in (0, 1)$, such that
$\delta = 4 n e^{- C_{min} n k^2}$ with $k = \frac{1}{2}\left(t - D/(\sqrt{n}-1)\right)$. Thus, by a simple calculation, this implies that there exists a positive constant $C_3 = 2/\sqrt{C_{min}}$ depending on $j_1$, $m$ and $M$ solely such that $t = C_3\sqrt{\log\left(4n/\delta\right)/n} + D/(\sqrt{n}-1)$. Furthermore, going back to the condition on the variable $k$, by a straightforward computation, we have
$$k = \frac{1}{2}\left(t - \frac{D}{\sqrt{n}-1}\right) \leq C'_{min} \Longleftrightarrow n \geq C'_3 \log\left(\frac{4 n}{\delta}\right),$$
where $C'_3$ is a positive constant depending on $j_1$, $m$ and $M$ solely. Thus,
$$\mathbb{P}\left(\sup_{h\geq 0}\left\vert \widehat{c}(h)- c(h) \right\vert \leq C_3\sqrt{\log\left(4n/\delta\right)/n}+ D/(\sqrt{n}-1)\right) \geq 1 - \delta,$$
as soon as $n \geq C'_3 \log\left(\frac{4 n}{\delta}\right)$.

\subsection{Proof of Proposition \ref{prop:CI_Sigma_vec}}
\textbf{Proof of Assertion \textit{(i)}}

First, recall that the max norm and the operator norm are equivalent (since any norms in a given finite-dimensional vector space are equivalent and that the space of the squared matrices of size $d$ is a finite-dimensional vector space):
\begin{equation}
    \vert\vert\vert\widehat{\Sigma}(\bs_d)- \Sigma(\bs_d)\vert\vert\vert \leq \, d \, \vert\vert\widehat{\Sigma}(\bs_d)- \Sigma(\bs_d)\vert\vert_{\infty},
    \label{eq:Sigma_eq_op_max}
\end{equation}
where $\vert\vert A \vert\vert_{\infty} = \max\limits_{i, j \in \{1, \cdots, d\}} \vert A_{ij} \vert$ is the max norm for any squared matrix $A$ of size $d$. By the definition of the estimated and the true covariance matrices, notice that
\begin{align*}
    \vert\vert\widehat{\Sigma}(\bs_d)- \Sigma(\bs_d)\vert\vert_{\infty} & = \max\limits_{i, j \in \{1, \cdots, d\}} \left\vert \widehat{c}(\vert\vert s_i - s_j \vert\vert)- c(\vert\vert s_i - s_j \vert\vert) \right\vert\\ & \leq \sup\limits_{h \geq 0} \left\vert \widehat{c}(h) - c(h) \right\vert.
\end{align*}

Then, applying the non-asymptotic bound in Corollary \ref{cor}, we have the wanted result.

\vspace{0.3cm}

\noindent \textbf{Proof of Assertion \textit{(ii)}}

Thanks to the result in the previous assertion, where the operator norm of the difference $\widehat{\Sigma}(\bs_d)- \Sigma(\bs_d)$ is bounded with high probability, we can deduce that the eigenvalues of $\widehat{\Sigma}(\bs_d)$ have near values to the eigenvalues of $\Sigma(\bs_d)$. Recall that the eigenvalues of $\Sigma(\bs_d)$ are assumed to be bounded by $\underline{m}$ and $\overline{M}$, two positive constants (see Assumption \ref{hyp:cov_bound}). Thus, with high probability, the spectrum of $\widehat{\Sigma}(\bs_d)$ is also bounded by $\underline{m} > 0$ and $\overline{M} > 0$. Finally, we can deduce that $\widehat{\Sigma}(\bs_d)$ is invertible  with high probability. The first step of the proof is to apply the result from Theorem \ref{thm_aux:wedin} (refer to Appendix \ref{proofs:aux_res} for its presentation). Indeed
\begin{equation}
    \vert\vert\vert\widehat{\Sigma}(\bs_d)^{-1} - \Sigma(\bs_d)^{-1} \vert\vert\vert \leq \vert\vert\vert \Sigma(\bs_d)^{-1} \vert\vert\vert \; \vert\vert\vert \widehat{\Sigma}(\bs_d)^{-1} \vert\vert\vert \; \vert\vert\vert\widehat{\Sigma}(\bs_d)- \Sigma(\bs_d)\vert\vert\vert.
    \label{eq:ine_wedin}
\end{equation}

First notice that under Assumption \ref{hyp:gaussian}, $\Sigma(\bs_d)$ is always positive definite and invertible, so all its eigenvalues are strictly positive. Furthermore, we know that the operator norm of a symmetric positive definite matrix is equal to the largest eigenvalue of the matrix: $\vert\vert\vert \Sigma(\bs_d)^{-1} \vert\vert\vert = \max\limits_{i \in \{1, \cdots, d\}} \xi_i\left(\Sigma(\bs_d)^{-1}\right) = \xi_d\left(\Sigma(\bs_d)\right)^{-1}.$ Finally, one has:
\begin{equation}\label{bound_N1}
    \vert\vert\vert \Sigma(\bs_d)^{-1} \vert\vert\vert \leq \underline{m}^{-1},
\end{equation}
where $\underline{m} > 0$ is the lower bound of the spectrum of $\Sigma(\bs_d)$. As a consequence of Assertion \textit{(i)}, the eigenvalues of $\widehat{\Sigma}(\bs_d)$ are also bounded and bounded away from $0$, with high probability. Using the same argumentation as above, one has, with high probability:
\begin{equation}\label{bound_N2}
    \forall \delta \in (0, 1), \, \mathbb{P}\left(\vert\vert\vert \widehat{\Sigma}(\bs_d)^{-1} \vert\vert\vert \leq \underline{m}^{-1}\right) \geq 1 - \delta.
\end{equation}

Thus, going back to the inequality \eqref{eq:ine_wedin}
\begin{equation*}
    \forall \delta \in (0, 1), \, \mathbb{P}\left(\vert\vert\vert\widehat{\Sigma}(\bs_d)^{-1} - \Sigma(\bs_d)^{-1} \vert\vert\vert \leq \, (\underline{m}^{-1})^2 \, \vert\vert\vert\widehat{\Sigma}(\bs_d)- \Sigma(\bs_d)\vert\vert\vert\right) \geq 1 - \delta.
\end{equation*}

Combining all the previous results and the accuracy $\vert\vert\vert\widehat{\Sigma}(\bs_d)- \Sigma(\bs_d)\vert\vert\vert$ of the covariance matrix estimator (described in a non-asymptotic fashion by the bound given in Assertion \textit{(i)}), one can deduce the result in Assertion \textit{(ii)}.

\subsection{Proof of Theorem \ref{thm:EGRisk}}
\textbf{Sketch of Proof.} Shedding light onto the role of the technical assumptions made in subsection \ref{subsec:nonpar_cov_est}, we first give a brief idea of the proof's approach.

The proof of Assertion \textit{(i)} essentially relies on the following bound
\begin{align*}
    \sup\limits_{s \in S}\vert\vert \widehat{\Lambda}_d(s) - \Lambda_d^*(s) \vert\vert \, & \leq \, \underbrace{\vert\vert\vert \Sigma(\bs_d)^{-1} \vert\vert\vert}_{N_1} \; \overbrace{\sup\limits_{s \in S} \vert\vert \widehat{\bc}_d(s) - \bc_d(s) \vert\vert}^{N_2}\\ & + \underbrace{\vert\vert\vert \widehat{\Sigma}(\bs_d)^{-1} - \Sigma(\bs_d)^{-1} \vert\vert\vert}_{N_3} \; \overbrace{\sup\limits_{s \in S} \vert\vert \widehat{\bc}_d(s) \vert\vert}^{N_4},
\end{align*}

where

\begin{itemize}
    \item For \textit{term $N_1$:} Assumption \ref{hyp:gaussian} (all eigenvalues of $\Sigma(\bs_d)$ are strictly positives) and Assumption \ref{hyp:cov_bound} (spectrum of $\Sigma(\bs_d)$ is lower bounded by $\underline{m})$, imply that one has $\vert\vert\vert \Sigma(\bs_d)^{-1} \vert\vert\vert \leq \underline{m}^{-1}$.
    \item A bound for \textit{term $N_2$} can be deduced from the link between the max norm and the Euclidean norm, and the upper bound in Corollary \ref{cor}.
    \item Refer to Proposition \ref{prop:CI_Sigma_vec} Assertion \textit{(ii)} for a bound with high probability of \textit{term $N_3$}.
    \item \textit{Term $N_4$:} From Corollary \ref{cor} and Assumption \ref{hyp:smooth}, we deduce that $\sup\limits_{h \geq 0} \vert \widehat{c}(h) \vert < B$, with high probability.
\end{itemize}

Using Equation \eqref{eq:excess_IQR} given at the end of subsection \ref{subsec:bounds_ER}, and since the domain $S$ is bounded, the remaining terms to study are

\begin{itemize}
    \item As a consequence of Proposition \ref{prop:CI_Sigma_vec} Assertion \textit{(i)}, with probability at least $1 - \delta$, $\forall \delta \in (0, 1)$, the eigenvalues of $\widehat{\Sigma}(\bs_d)$ are close to the eigenvalues of $\Sigma(\bs_d)$. Thus, one can deduce the upper bound $\vert\vert\vert\widehat{\Sigma}(\bs_d)^{-1} \vert\vert\vert \leq \underline{m}^{-1}$, with high probability.
    \item A bound for $\sup\limits_{s \in S} \vert\vert \bc_d(s) \vert\vert$ is deduced from the link between the max norm and the Euclidean norm, together with Assumption \ref{hyp:smooth}: $\sup\limits_{s \in S} \vert\vert \bc_d(s) \vert\vert \leq \sqrt{d} B$.
\end{itemize}

\vspace{0.3cm}

\noindent \textbf{Proof of Assertion \textit{(i)}}

First, notice that
\begin{align*}
    \vert\vert \widehat{\Lambda}_d(s) - \Lambda_d^*(s) \vert\vert & = \vert\vert \widehat{\Sigma}(\bs_d)^{-1} \widehat{\bc}_d(s) - \Sigma(\bs_d)^{-1} \bc_d(s) \vert\vert\\ & = \vert\vert \Sigma(\bs_d)^{-1} \left(\widehat{\bc}_d(s) - \bc_d(s)\right) + \left(\widehat{\Sigma}(\bs_d)^{-1} - \Sigma(\bs_d)^{-1} \right) \widehat{\bc}_d(s) \vert\vert\\ & \leq \vert\vert\vert \Sigma(\bs_d)^{-1} \vert\vert\vert \; \vert\vert \widehat{\bc}_d(s) - \bc_d(s) \vert\vert + \vert\vert\vert \widehat{\Sigma}(\bs_d)^{-1} - \Sigma(\bs_d)^{-1} \vert\vert\vert \; \vert\vert \widehat{\bc}_d(s) \vert\vert,
\end{align*}
and taking the supremum over the domain $S$
\begin{align*}
    \sup\limits_{s \in S}\vert\vert \widehat{\Lambda}_d(s) - \Lambda_d^*(s) \vert\vert \, & \leq \, \vert\vert\vert \Sigma(\bs_d)^{-1} \vert\vert\vert \; \sup\limits_{s \in S} \vert\vert \widehat{\bc}_d(s) - \bc_d(s) \vert\vert\\ & + \vert\vert\vert \widehat{\Sigma}(\bs_d)^{-1} - \Sigma(\bs_d)^{-1} \vert\vert\vert \; \sup\limits_{s \in S} \vert\vert \widehat{\bc}_d(s) \vert\vert,
\end{align*}

Firstly, for the accuracy of the covariance vector estimator, since the max norm and the Euclidean norm are equivalent, one has
\begin{multline*}
    \sup\limits_{s \in S}\vert\vert \widehat{\bc}_d(s) - \bc_d(s) \vert\vert \leq \, \sqrt{d} \; \sup\limits_{s \in S}\vert\vert \widehat{\bc}_d(s) - \bc_d(s) \vert\vert_{\infty}\\ = \, \sqrt{d} \; \sup\limits_{s \in S}\max\limits_{i \in \{1, \cdots, d\}} \vert \widehat{c}(\vert\vert s - s_i \vert\vert) - c(\vert\vert s - s_i \vert\vert) \vert \leq \, \sqrt{d} \; \sup\limits_{h \geq 0} \vert \widehat{c}(h) - c(h) \vert,
\end{multline*}
which allows using Corollary \ref{cor} to control in a non-asymptotic fashion the supremum over all positive lags of the error estimation of the covariance function. Thus, one obtains the following bound for any $\delta \in (0, 1)$, with probability at least $1 - \delta$
\begin{equation}
    \sup\limits_{s \in S} \vert\vert \widehat{\bc}_d(s) - \bc_d(s) \vert\vert \leq C_3 \, \sqrt{d} \, \sqrt{\log(4 n / \delta)/n} + \sqrt{d} \, D/(\sqrt{n}-1),
\end{equation}
as soon as $n \geq C'_3 \log(4n/\delta)$. As above, from the link between the max norm and the Euclidean norm, one has
\begin{align*}
    \sup\limits_{s \in S}\vert\vert \widehat{\bc}_d(s) \vert\vert \, \leq \, \sqrt{d} \; \sup\limits_{s \in S}\vert\vert \widehat{\bc}_d(s) \vert\vert_{\infty} \leq \, \sqrt{d} \; \sup\limits_{h \geq 0} \vert \widehat{c}(h) \vert.
\end{align*}

Furthermore, as a consequence of the result in Corollary \ref{cor}, we can deduce that $\widehat{c}(h)$ is close to $c(h)$ for any lag $h \geq 0$, with high probability. Thus, under Assumption \ref{hyp:smooth}, one has the bound
\begin{equation}\label{bound_U2}
    \forall \delta \in (0, 1), \, \mathbb{P}\left(\sup\limits_{s \in S}\vert\vert \widehat{\bc}_d(s) \vert\vert \leq \sqrt{d} B \right) \geq 1 - \delta.
\end{equation}

Lastly, notice that the last two terms have been studied in previous proofs: $\vert\vert\vert \Sigma(\bs_d)^{-1} \vert\vert\vert$ is upper bounded by $\underline{m}^{-1}$ (see Equation \eqref{bound_N1} in the proof of Proposition \ref{prop:CI_Sigma_vec} Assertion \textit{(ii)}) ; and $\vert\vert\vert \widehat{\Sigma}(\bs_d)^{-1} - \Sigma(\bs_d)^{-1} \vert\vert\vert$ is bounded in a non-asymptotic fashion in Proposition \ref{prop:CI_Sigma_vec} Assertion \textit{(ii)}. Thus, combining all the previous results, one can deduce the wanted non-asymptotic bound.

\vspace{0.3cm}

\noindent \textbf{Proof of Assertion \textit{(ii)}}

As announced in subsection \ref{subsec:bounds_ER}, with probability one, the excess of integrated quadratic risk can be written as follows:

\begin{multline*}
     L_S(f_{\widehat{\Lambda}_d})-L^*_S=\\
     \int_{s\in S} \left( \widehat{\Lambda}_d(s)^{\top}\Sigma(\bs_d)\widehat{\Lambda}_d(s)- \Lambda_d^*(s)^{\top}\Sigma(\bs_d)\Lambda_d^*(s)  -2\; \bc_d(s)^{\top}\left(\widehat{\Lambda}_d(s) -\Lambda_d^*(s)\right)  \right) ds\\ =  \int_{s\in S} \, \left(\widehat{\Lambda}_d(s) - \Lambda_d^*(s)\right)^{\top}\Sigma(\bs_d)\widehat{\Lambda}_d(s) + \Lambda_d^*(s)^{\top}\Sigma(\bs_d)\left(\widehat{\Lambda}_d(s) - \Lambda_d^*(s)\right)\\  + 2\; \bc_d(s)^{\top}\left(\Lambda_d^*(s) - \widehat{\Lambda}_d(s)\right)   ds
\end{multline*}

Notice that
\begin{align*}
    \left(\widehat{\Lambda}_d(s) - \Lambda_d^*(s)\right)^{\top} & \Sigma(\bs_d)\widehat{\Lambda}_d(s) = \langle\widehat{\Lambda}_d(s) - \Lambda_d^*(s), \Sigma(\bs_d)\widehat{\Lambda}_d(s)\rangle\\ & \leq \vert\vert \widehat{\Lambda}_d(s) - \Lambda_d^*(s)\vert\vert \, \vert\vert\Sigma(\bs_d)\widehat{\Lambda}_d(s) \vert\vert\\ & \leq \vert\vert \widehat{\Lambda}_d(s) - \Lambda_d^*(s)\vert\vert \, \vert\vert\vert \Sigma(\bs_d) \vert\vert\vert \, \vert\vert\vert\widehat{\Sigma}(\bs_d)^{-1} \vert\vert\vert \, \vert\vert\widehat{\bc}_d(s) \vert\vert.
\end{align*}

Following the same idea, we have
\begin{align*}
    \Lambda_d^*(s)^{\top}\Sigma(\bs_d) & \left(\widehat{\Lambda}_d(s) - \Lambda_d^*(s)\right)\\ & \leq \vert\vert\vert \Sigma(\bs_d)^{-1} \vert\vert\vert \,  \vert\vert \bc_d(s) \vert\vert \, \vert\vert\vert \Sigma(\bs_d) \vert\vert\vert \,  \vert\vert \widehat{\Lambda}_d(s) - \Lambda_d^*(s) \vert\vert,
\end{align*}
and
\begin{align*}
    \bc_d(s)^{\top}\left(\Lambda_d^*(s) - \widehat{\Lambda}_d(s)\right) \leq \vert\vert \bc_d(s) \vert\vert \, \vert\vert \Lambda_d^*(s) - \widehat{\Lambda}_d(s)\vert\vert.
\end{align*}

Thus, taking the supremum over the domain $S$, the term under study has now become

\begin{multline*}
    \sup\limits_{s \in S} \vert\vert \widehat{\Lambda}_d(s) - \Lambda_d^*(s)\vert\vert \, \vert\vert\vert \Sigma(\bs_d) \vert\vert\vert \, \vert\vert\vert\widehat{\Sigma}(\bs_d)^{-1} \vert\vert\vert \, \sup\limits_{s \in S} \vert\vert\widehat{\bc}_d(s) \vert\vert\\
    + \sup\limits_{s \in S} \vert\vert \widehat{\Lambda}_d(s) - \Lambda_d^*(s)\vert\vert \, \vert\vert\vert \Sigma(\bs_d) \vert\vert\vert \, \vert\vert\vert \Sigma(\bs_d)^{-1} \vert\vert\vert \, \sup\limits_{s \in S} \vert\vert \bc_d(s) \vert\vert\\
    + 2 \sup\limits_{s \in S} \vert\vert \widehat{\Lambda}_d(s) - \Lambda_d^*(s)\vert\vert \, \sup\limits_{s \in S} \vert\vert \bc_d(s) \vert\vert.
\end{multline*}

Notice that some of the terms have already been studied in previous results and their proofs: a non-asymptotic bound for $\sup\limits_{s \in S} \vert\vert \widehat{\Lambda}_d(s) - \Lambda_d^*(s)\vert\vert$ is given by Theorem \ref{thm:EGRisk} Assertion \textit{(i)} ; the operator norm of the precision matrix estimator is upper bounded by $\underline{m}^{-1}$, with high probability, in Equation \eqref{bound_N2} (see the proof of Proposition \ref{prop:CI_Sigma_vec} Assertion \textit{(ii)}) ; the supremum over all domain $S$ of the Euclidean norm of the covariance vector estimator is upper bounded with high probability by $\sqrt{d} B$ (see Equation \eqref{bound_U2}, in the proof of Assertion \textit{(i)}) ; and $\vert\vert\vert \Sigma(\bs_d)^{-1} \vert\vert\vert$ is upper bounded by $\underline{m}^{-1}$ (see Equation \eqref{bound_N1} in the proof of Proposition \ref{prop:CI_Sigma_vec} Assertion \textit{(ii)}). Furthermore, from Assumption \ref{hyp:cov_bound}, $\vert\vert\vert \Sigma(\bs_d) \vert\vert\vert \leq \overline{M}$. Finally, for the last term defined as the supremum over the domain $S$ of the Euclidean norm of the covariance vector, using the link between the max norm and the Euclidean norm, and Assumption \ref{hyp:smooth}
\begin{align*}
    \sup\limits_{s \in S} \vert\vert \bc_d(s) \vert\vert \leq \sqrt{d} \; \sup\limits_{s \in S} \vert\vert \bc_d(s) \vert\vert_{\infty} \leq \sqrt{d} \; \sup\limits_{h \geq 0} \vert c(h) \vert \leq \sqrt{d} B.
\end{align*}

Since the domain $S$ is bounded, combining all these results allows us to conclude.

\section{Additional Experiments}\label{sec:add_num}
\subsection{Complete Maps on Real Data}

Firstly, we give further details on the Real Data application in subsection~\ref{subsec:exp_num_RD}, for the same setting: the observations are taken on a dyadic grid at scale $J = 4$. The maps of all mean squared errors for parametric and nonparametric Kriging predictors are depicted in Figure \ref{fig:RD_Tot}.

\begin{figure}[h]
    \centering
    \subfigure[Parametric]{\includegraphics[width=55mm]{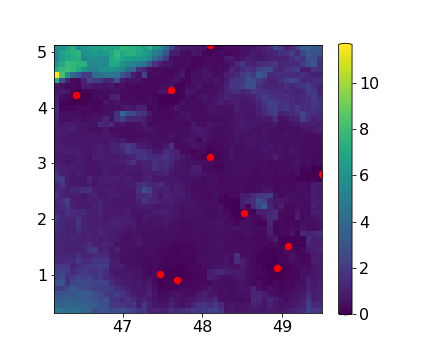}
        \label{fig:RD_J4_Theo}}
    \subfigure[Nonparametric]{\includegraphics[width=55mm]{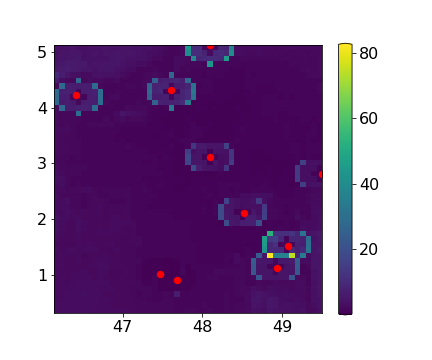}
        \label{fig:RD_J4_Emp}}
    \caption{Complete maps of all MSE on Real Data on a dyadic grid of observations at scale $J = 4$ (with $N = 2401$ and $d = 10$).}
    \label{fig:RD_Tot}
\end{figure}

As for Kriging applied to simulated data, the exact interpolator property is verified (see subsection \ref{sec:exp_num_SK}).
Notice that, for parametric Kriging, some border effects can be observed ; while, for nonparametric Kriging, there is a presence of local area with higher error in form of circles at a certain distance of the observed locations, where the mean error is higher and seems null everywhere else on the spatial domain.
Still, the results on real data are encouraging to extend our theoretical results to a more general framework. Indeed, recall that these real data are irregular and violate some of the made assumptions: Assumption \ref{hyp:simple} is not verified since the mean of the temperatures over several locations in France is not null; Assumption \ref{hyp:decorr} is not satisfied, as discussed in the choice of the value of the parameter $\theta$ (see subsection \ref{subsec:exp_num_RD}).

\subsection{Extension to Different Configurations of the Observations' Locations}

One may be interested in the influence of the configuration of the observation points $s_1, \cdots, s_d$ on the performance of the Kriging predictor. The results are presented here for two extreme situations: the first, called \textit{Corner} (C), happens when the major number of observations are taken randomly in a small sub-region of the spatial domain $S$, defined as one of the corners of the domain; for the second one, called \textit{Ring} (R), the major number of observations are sampled randomly in a circle of center equal to the middle of the spatial domain $S$.

\begin{table}[h]
    \caption{Mean and standard deviation (std) of the AMSE on $100$ independent simulations of a Gaussian process with truncated power law (left) and Gaussian (right) covariance functions for theoretical and empirical Kriging with different configurations of the observations' locations (where U: \textit{Uniform} ; C: \textit{Corner} ; R: \textit{Ring}) (with $J = 4$, $N = 1681$, $d = 60$ and $\theta = 5$).}
    
    \vskip 0.15in
    
    \begin{center}
    \centering
    \begin{tabular}{ | c || c | c || c | c | }
        \hline
        \small{\texttt{TPL}}
        &
        \multicolumn{2}{| c ||}{ \small{\texttt{Theoretical}} } & \multicolumn{2}{| c |}{ \small{\texttt{Empirical}} }
        \\ \hline \hline
         & mean & std & mean & std
        \\ \hline
        U & 0.747 & 0.113 & 0.797 & 0.120
        \\ \hline
        C & 0.850 & 0.126 & 1.596 & 6.899
        \\ \hline
        R & 0.866 & 0.126 & 0.917 & 0.140
        \\ \hline
    \end{tabular}
  \hspace{0.15cm}
    \centering
    \begin{tabular}{ | c || c | c || c | c | }
        \hline
        \small{\texttt{GAU}}
        &
        \multicolumn{2}{| c ||}{ \small{\texttt{Theoretical}} } & \multicolumn{2}{| c |}{ \small{\texttt{Empirical}} }
        \\ \hline \hline
         & mean & std & mean & std
        \\ \hline
        U & 0.186 & 0.107 & 67.814 & 476.812
        \\ \hline
        C & 0.598 & 0.270 & 3.676 & 18.225
        \\ \hline
        R & 0.517 & 0.182 & 84.840 & 815.206
        \\ \hline
    \end{tabular}
    \label{tab:all_Irregular}
    \end{center}
    \vskip 0.1in
\end{table}

\begin{figure}[h]
    \begin{center}
    \subfigure[Uniform]{\includegraphics[width=50mm]{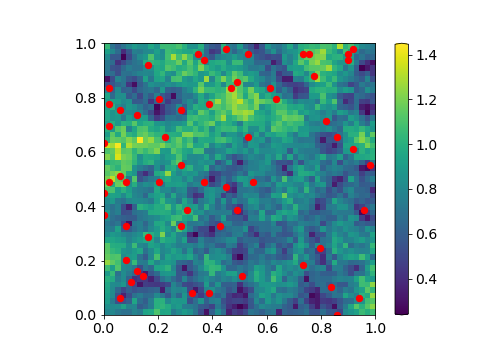}}
    \subfigure[Corner]{\includegraphics[width=50mm]{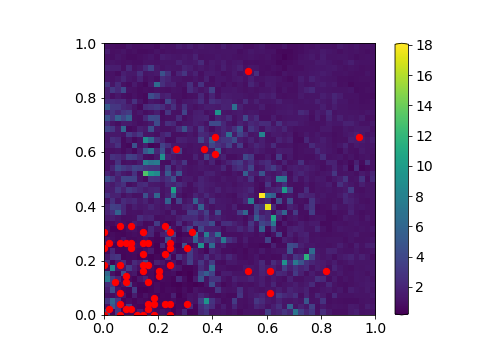}
        \label{fig:TPL_Irregular_NotReg_Emp}}
    \subfigure[Ring]{\includegraphics[width=50mm]{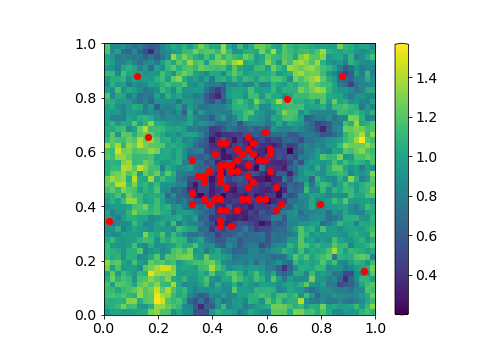}}\\
    \subfigure[Uniform]{\includegraphics[width=50mm]{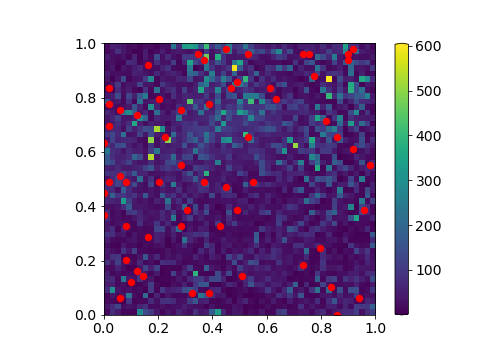}
        \label{fig:Gauss_Irregular_Random_Emp}}
    \subfigure[Corner]{\includegraphics[width=50mm]{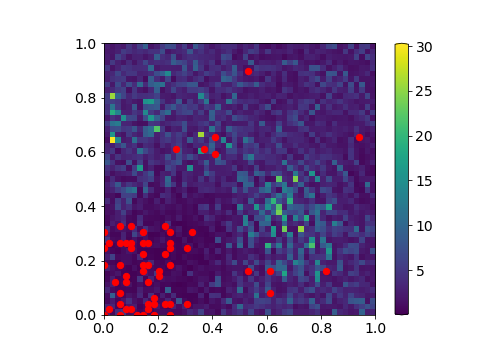}}
    \subfigure[Ring]{\includegraphics[width=50mm]{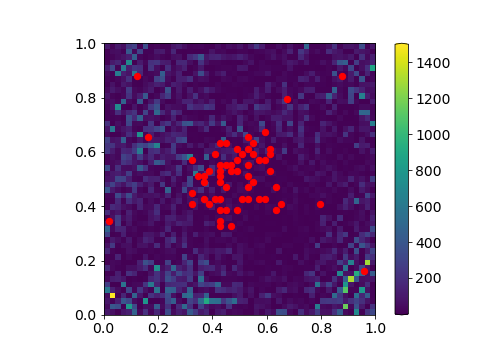}
        \label{fig:Gauss_Irregular_Ring_Emp}}
    \end{center}
    \caption{Complete maps of all MSE on $100$ realizations of a Gaussian process with truncated power law covariance (top) and Gaussian (bottom) covariance functions for the empirical Kriging predictor with different configurations of the observations' locations $s_1, \cdots, s_d$ (with $J = 4$, $N = 1681$, $d = 60$ and $\theta = 5$).}
    \label{fig:Irregular_Tot}
\end{figure}

The same procedure as in Section \ref{sec:num} is then applied for both covariance models, with the following setting: the number of training observations is equal to $n = 289$ (dyadic scale $J = 4$); the total number of sampled locations for the prediction is fixed to $d = 60$, where $50$ are taken in the sub-region of interest and the others $10$ in the remaining area of the domain; and the correlation length is fixed at $\theta = 5$. The mean and the standard deviation of all AMSE are presented in Table \ref{tab:all_Irregular} and the complete maps of all MSE are displayed in Figure \ref{fig:Irregular_Tot}, where \textit{Uniform} (U) stands for the selection of the observations' locations as before, using a random uniform procedure over the spatial domain $S$. For the truncated power law model, the results in Table \ref{tab:all_Irregular} (top) are similar when using the \textit{Uniform} procedure or the \textit{Ring} procedure and the errors are close for the theoretical and the empirical methods. In the \textit{Corner} case, the mean AMSE for the empirical Kriging is larger, and especially its standard deviation increases. Looking at the corresponding complete map in Figure \ref{fig:TPL_Irregular_NotReg_Emp}, it can be seen that the predictor seems to succeed for the point locations near the observations but fails at some locations far from any observed sample (with a large error, going up to $18$, as shown by the large error scale). This is a direct consequence of the fact that couples of point locations that are at a large distance from one another are under-represented in this setting.
When looking at the Gaussian model results in Table \ref{tab:all_Irregular} (bottom), it's obvious that this model is strongly linked to the configuration of the observations: the mean and the standard deviation for empirical Kriging are significantly larger than when the true covariance function is known. Indeed, let us observe that for the theoretical Kriging method, the mean and the standard deviation are more or less the same for the three configurations, whereas, for the empirical Kriging method, these two values rise abruptly when the observations are taken mainly in a circle. Thanks to Figure \ref{fig:Gauss_Irregular_Ring_Emp}, one can notice that the point locations that make the mean error explodes are located in the boundaries of the spatial domain, far from any observed sample (the maximum error value is more than $1400$). Other observations that can be made with these results are on the influence of the number $d$ of observed samples for the prediction step. When using the truncated power law model, the results for the \textit{Uniform} configuration in Table \ref{tab:all_Irregular} (top) are similar to the mean and standard deviation in Table \ref{tab:TPL_Gauss_J4_theta} (left) when $\theta = 5$. So, the size $d$ of observations does not seem to have an impact on the performance of the Kriging estimator. In contrast, for the Gaussian model, when the empirical version is used, the results significantly change between Table \ref{tab:all_Irregular} (bottom) when the configuration is \textit{Uniform} and Table \ref{tab:TPL_Gauss_J4_theta} (right) when $\theta = 5$, with a strong increase for both the mean and the standard deviation in the case where $d = 60$.

Therefore, it can be of interest to explore how the performance of Kriging can be affected by a variation in the observations' locations.

\subsection{Additional Covariance Models}

Based on several covariance models, fulfilling or not our assumptions, extra numerical experiments were performed to assess the validity of Theorem~\ref{thm:EGRisk}.

\begin{figure}[h]
    \centering
    \subfigure[Cubic]{\includegraphics[width=60mm]{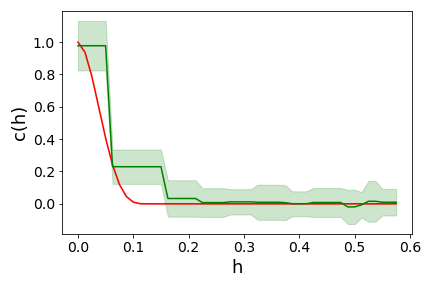}
        \label{fig:Cubic_cov_est}}
    \subfigure[Spherical]{\includegraphics[width=60mm]{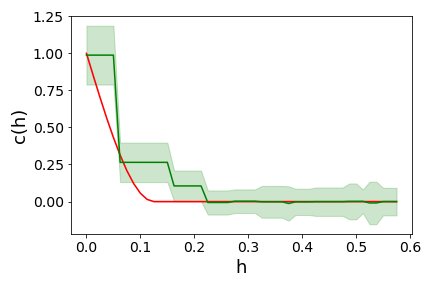}
        \label{fig:Spherical_cov_est}}
    \\
    \subfigure[Exponential]{\includegraphics[width=60mm]{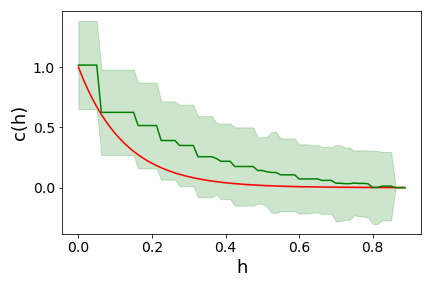}
        \label{fig:Exponential_cov_est}}
    \\
    \subfigure[Matern ($\nu = 3/2$)]{\includegraphics[width=60mm]{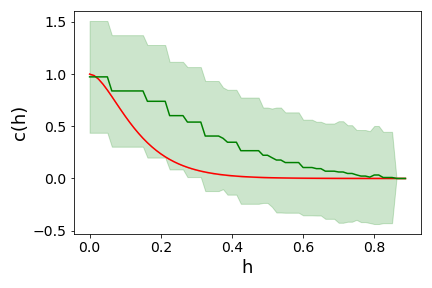}
        \label{fig:matern_1.5_cov_est}}
    \subfigure[Matern ($\nu = 5/2$)]{\includegraphics[width=60mm]{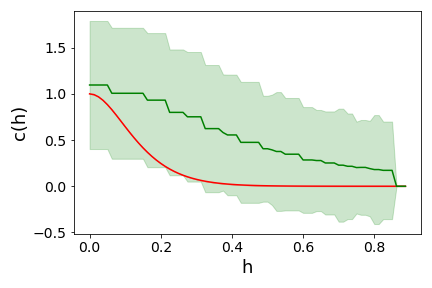}
        \label{fig:matern_2.5_cov_est}}
    \caption{Estimation of the cubic (top left), the spherical (top right), the exponential (center), the Matern with $\nu = 3/2$ (bottom left) and $\nu = 5/2$ (bottom right) covariance functions, on a dyadic grid at scale $J = 3$ ($n = 81$), with $\theta = 5$.}
    \label{fig:add_cov_est}
\end{figure}

Besides the covariance functions depicted in Section~\ref{sec:num}, the following covariance functions are considered (where $\theta \in \mathbb{R}_+^*$ is the correlation length):
\begin{itemize}
    \item the \textit{cubic} covariance function:
        \begin{equation}\label{eq:cubic}
            c : h \in [0,+\infty) \mapsto \left(1 - \left(7 \frac{h^2}{\theta^2} - \frac{35}{4}\frac{h^3}{\theta^3} + \frac{7}{2}\frac{h^5}{\theta^5} - \frac{3}{4}\frac{h^7}{\theta^7}\right) \right)\,\mathbb{I}\left\{h \leq \theta\right\}.
        \end{equation}
    \item the \textit{spherical} covariance function:
        \begin{equation}\label{eq:spher}
            c : h \in [0,+\infty) \mapsto \left(1 - \left(\frac{3}{2}\frac{h}{\theta} - \frac{1}{2}\frac{h^3}{\theta^3}\right) \right)\,\mathbb{I}\left\{h \leq \theta\right\}.
        \end{equation}
    \item the \textit{exponential} covariance function:
        \begin{equation}\label{eq:exp}
            c : h \in [0,+\infty) \mapsto \exp\left(-h/\theta\right).
        \end{equation}
    \item the \textit{Matern} covariance function with smoothness parameter $\nu$:
        \begin{equation}\label{eq:matern}
            c : h \in [0,+\infty) \mapsto \frac{2^{1-\nu}}{\Gamma(\nu)} \left(\sqrt{2 \nu} \frac{h}{\theta}\right)^{\nu} K_{\nu}\left(\sqrt{2 \nu} \frac{h}{\theta}\right),
        \end{equation}
        where $\Gamma$ is the gamma function and $K_{\nu}$ is the modified Bessel function of the second kind.
\end{itemize}

\begin{figure}[H]
    \centering
    \subfigure[$\theta = 2.5$]{\includegraphics[width=60mm, trim=0cm 0cm 1.6cm 0cm]{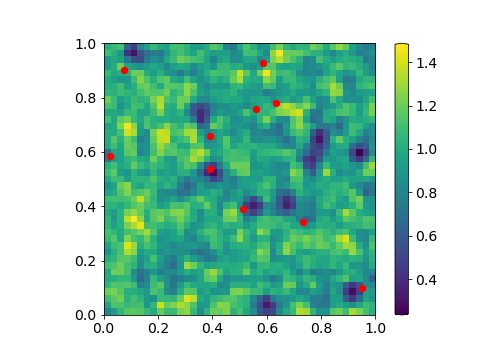}}
    \subfigure[$\theta = 5$]{\includegraphics[width=60mm, trim=0cm 0cm 1.6cm 0cm]{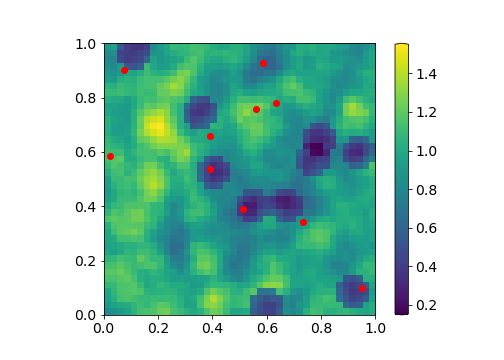}}
    \\
    \subfigure[$\theta = 7.5$]{\includegraphics[width=60mm, trim=0cm 0cm 1.6cm 0cm]{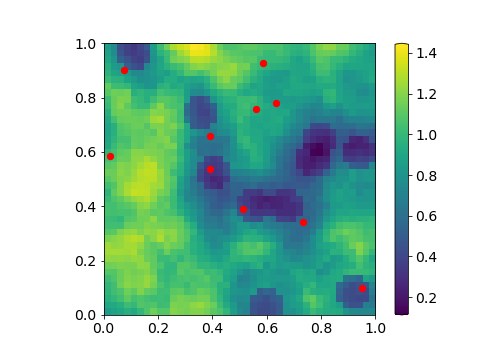}}
    \subfigure[$\theta = 10$]{\includegraphics[width=60mm, trim=0cm 0cm 1.6cm 0cm]{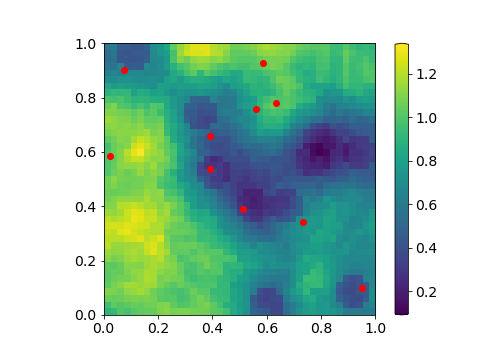}}
    \caption{MSE maps over $100$ realizations of a Gaussian process with cubic covariance function for the empirical Kriging predictor with different values of $\theta$ ($J = 3$ and $d = 10$).}
    \label{fig:Cubic_J3_Emp}
\end{figure}

\begin{remark}{\sc (Matern model)}
    When $\nu = p + \frac{1}{2}$ where $p$ is an integer, the Matern covariance function is a product of an exponential function and a polynomial function of order $p$.
\end{remark}

\begin{figure}[h]
    \centering
    \subfigure[$\theta = 2.5$]{\includegraphics[width=60mm, trim=0cm 0cm 1.6cm 0cm]{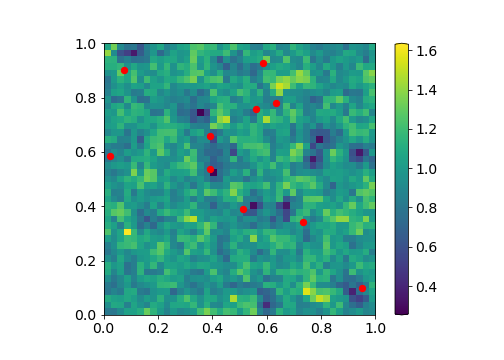}}
    \subfigure[$\theta = 5$]{\includegraphics[width=60mm, trim=0cm 0cm 1.6cm 0cm]{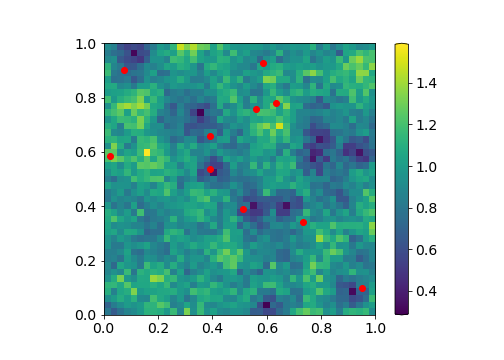}}
    \\
    \subfigure[$\theta = 7.5$]{\includegraphics[width=60mm, trim=0cm 0cm 1.6cm 0cm]{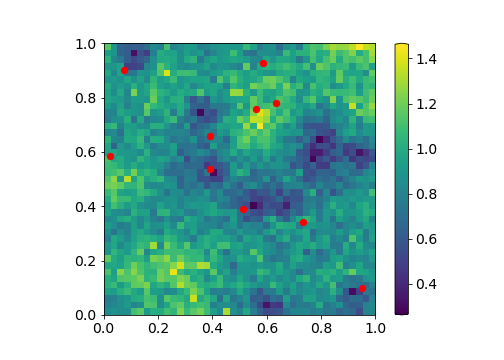}}
    \subfigure[$\theta = 10$]{\includegraphics[width=60mm, trim=0cm 0cm 1.6cm 0cm]{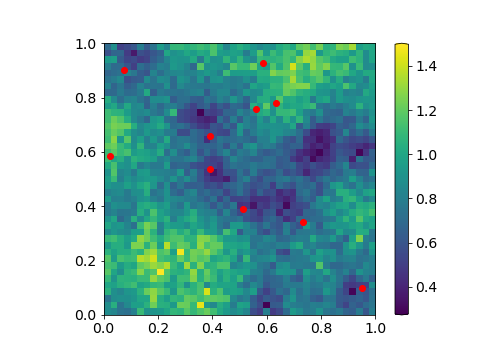}}
    \caption{MSE maps over $100$ realizations of a Gaussian process with spherical covariance function for the empirical Kriging predictor with different values of $\theta$ ($J = 3$ and $d = 10$).}
    \label{fig:Spherical_J3_Emp}
\end{figure}

Following this remark, we select the two following smoothness parameter values for the Matern model: when $\nu = \frac{3}{2}$, the covariance function becomes $c(h) = \left(1 + \sqrt{3}\frac{h}{\theta}\right) \exp\left(- \sqrt{3}\frac{h}{\theta}\right), \, \forall h \in [0,+\infty)$, and for $\nu = \frac{5}{2}$, $\forall h \in [0,+\infty), \, c(h) = \left(1 + \sqrt{5}\frac{h}{\theta} + \frac{5}{3} \left(\frac{h}{\theta}\right)^2\right) \exp\left(- \sqrt{5}\frac{h}{\theta}\right)$.

Note that the cubic and spherical covariance models satisfy all the assumptions involved in Theorem~\ref{thm:EGRisk}, whereas the exponential and Matern covariance functions do not verify Assumption~\ref{hyp:decorr}. We apply the same procedure as in Section~\ref{sec:num} for the five additional covariance models, with the same setting: the training dataset is composed of observations sampled on a dyadic grid at scale $J = 3$ ($n = 81$) and $d = 10$ observations for the prediction step. Firstly, as an illustration of the covariance estimation, Figure \ref{fig:add_cov_est} shows, for each model, the true covariance function in red and the mean of the estimated covariance function in green (with the corresponding mean standard deviation), on $100$ independent simulations. Observe that, for both the cubic (top left) and the spherical (top right) covariance functions, that satisfy Assumption~\ref{hyp:decorr}, the estimation is accurate and it successfully estimates the threshold after which the covariance is equal to zero. For the covariance functions that do not satisfy Assumption~\ref{hyp:decorr}, the exponential function is quite accurate but does not detect the correlation length, whereas for the Matern function, when $\nu = 5/2$, the estimation is less accurate than for the smaller value of the smoothness parameter. In particular, for these three covariance models, when the true covariance function tends to zero, the estimation is still considerably different from zero for an important number of lags and the mean standard deviation is large.

\begin{figure}[h]
    \centering
    \subfigure[$\theta = 2.5$]{\includegraphics[width=60mm, trim=0cm 0cm 1.6cm 0cm]{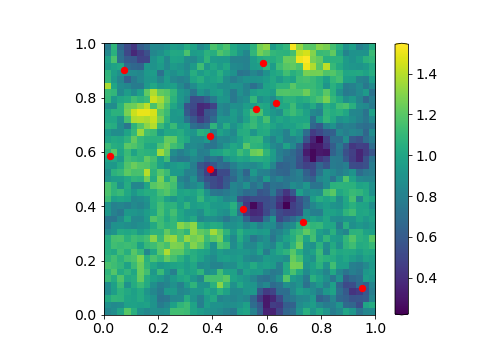}}
    \subfigure[$\theta = 5$]{\includegraphics[width=60mm, trim=0cm 0cm 1.6cm 0cm]{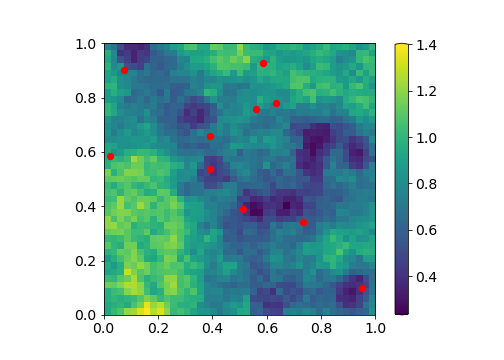}}
    \\
    \subfigure[$\theta = 7.5$]{\includegraphics[width=60mm, trim=0cm 0cm 1.6cm 0cm]{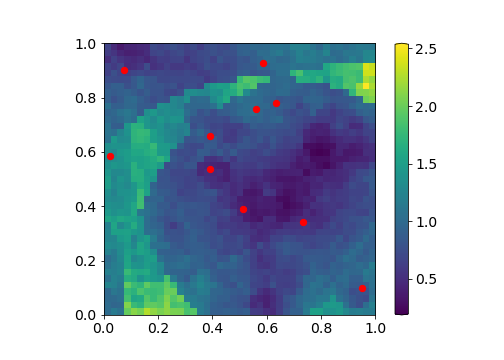}}
    \subfigure[$\theta = 10$]{\includegraphics[width=60mm, trim=0cm 0cm 1.6cm 0cm]{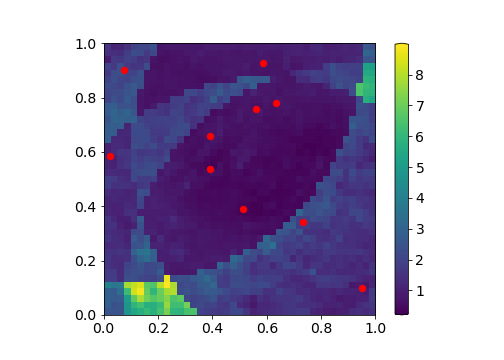}}
    \caption{MSE maps over $100$ realizations of a Gaussian process with exponential covariance function for the empirical Kriging predictor with different values of $\theta$ ($J = 3$ and $d = 10$).}
    \label{fig:Exponential_J3_Emp}
\end{figure}

As for the previous covariance models, the maps of all mean squared errors for the empirical Kriging predictor were computed over $100$ realizations of a Gaussian process, and the results are depicted in Figures~\ref{fig:Cubic_J3_Emp} to \ref{fig:Matern_2.5_J3_Emp}, with varying values for the correlation length $\theta \in \{2.5, 5, 7.5, 10\}$. For the cubic covariance model, it can be noticed in Figure~\ref{fig:Cubic_J3_Emp} that the MSE map seems to become smoother when $\theta$ increases. Still, as in the case of the spherical covariance model (see Figure~\ref{fig:Spherical_J3_Emp}), the complete maps are similar for all values of $\theta$ regarding the error scale and the allocation of the local area with small errors. For the three covariance functions that do not satisfy Assumption~\ref{hyp:decorr}, the same observations can be made: as $\theta$ grows, some border effects can be seen on the boundaries of the window of observation, and the error scale becomes larger. This lines up with the results obtained for the Gaussian covariance model.

\begin{figure}[H]
    \centering
    \subfigure[$\theta = 2.5$]{\includegraphics[width=60mm, trim=0cm 0cm 1.6cm 0cm]{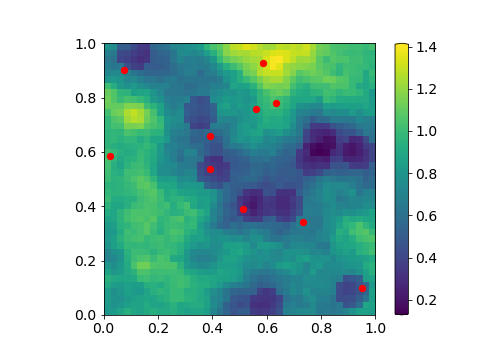}}
    \subfigure[$\theta = 5$]{\includegraphics[width=60mm, trim=0cm 0cm 1.6cm 0cm]{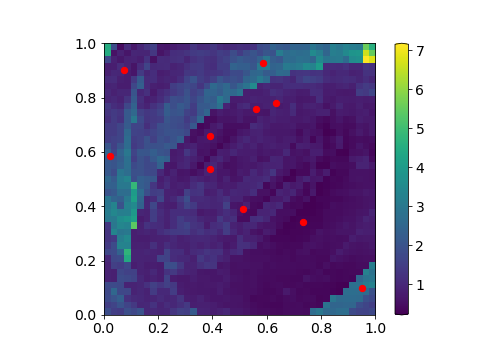}}
    \\
   \subfigure[$\theta = 7.5$]{\includegraphics[width=60mm, trim=0cm 0cm 1.6cm 0cm]{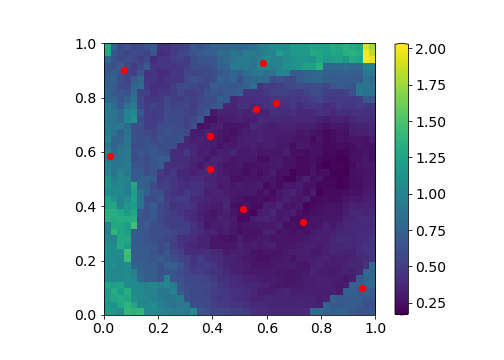}}
   \subfigure[$\theta = 10$]{\includegraphics[width=60mm, trim=0cm 0cm 1.6cm 0cm]{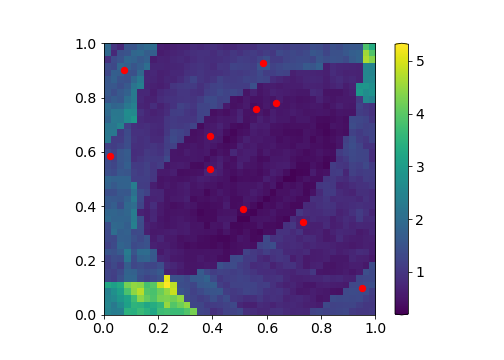}}
    \caption{MSE maps over $100$ realizations of a Gaussian process with Matern covariance ($\nu = 3/2$) for the empirical Kriging predictor with different values of $\theta$ ($J = 3$ and $d = 10$).}
    \label{fig:Matern_1.5_J3_Emp}
\end{figure}

\begin{figure}[H]
    \centering
   \subfigure[$\theta = 2.5$]{\includegraphics[width=60mm, trim=0cm 0cm 1.6cm 0cm]{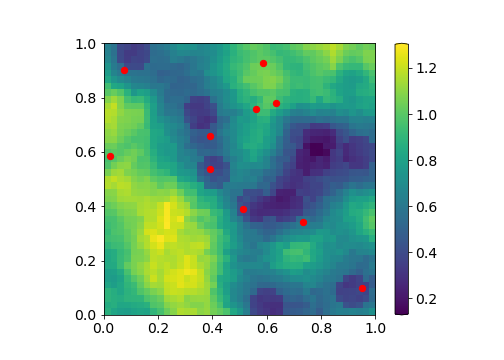}}
    \subfigure[$\theta = 5$]{\includegraphics[width=60mm, trim=0cm 0cm 1.6cm 0cm]{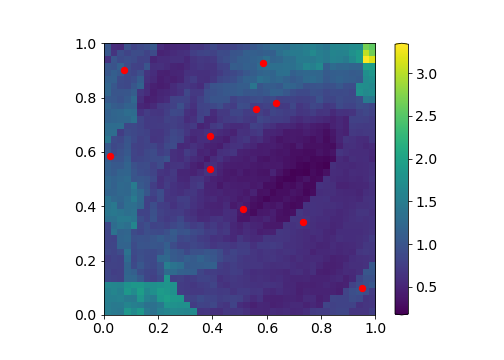}}
    \\
   \subfigure[$\theta = 7.5$]{\includegraphics[width=60mm, trim=0cm 0cm 1.6cm 0cm]{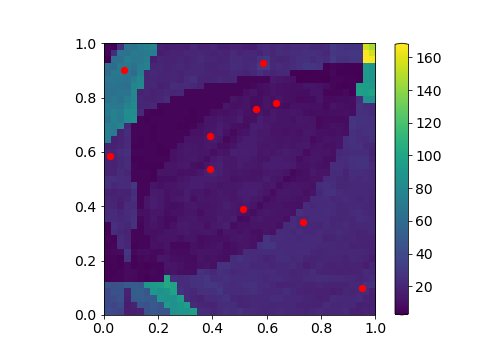}}
   \subfigure[$\theta = 10$]{\includegraphics[width=60mm, trim=0cm 0cm 1.6cm 0cm]{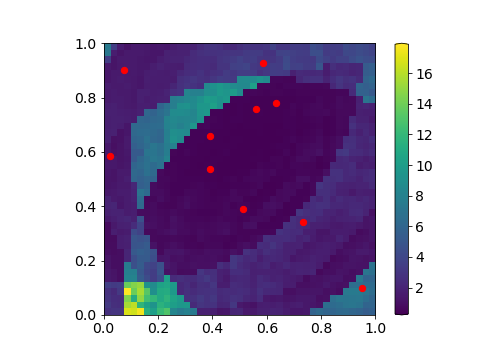}}
    \caption{MSE maps over $100$ realizations of a Gaussian process with Matern covariance ($\nu = 5/2$) for the empirical Kriging predictor with different values of $\theta$ ($J = 3$ and $d = 10$).}
    \label{fig:Matern_2.5_J3_Emp}
\end{figure}

Finally, the Average Mean Squared Error for both theoretical and empirical predictive mappings for all the covariance functions was calculated. The results (the mean and the standard deviation for the $100$ independent simulations for different values of $\theta$) are presented in Table \ref{tab:add_all_J3}. For the cubic and spherical models, we observe the same trend as for the truncated power law model: as the correlation length $\theta$ increases, for both the theoretical and empirical Kriging, the mean of the AMSE decreases while the standard deviation increases slowly, and the excess risk is small for all values of $\theta$. For the exponential covariance model, the mean and the standard deviation do not have a constant trend: first, the mean decreases while the standard deviation increases slightly, then they both increase more significantly, with a large standard deviation when $\theta = 10$. A significant increase of both the mean and standard deviation of the AMSE can also be observed for the Matern model, especially when $\nu = 5/2$: the mean and the standard deviation become large when $\theta = 7.5$ (see Table~\ref{tab:add_all_J3}).

\begin{table}[h]
  \centering
  \caption{Mean and standard deviation of the AMSE over $100$ independent simulations of a Gaussian process with cubic (top left), spherical (top right), exponential (center), Matern with $\nu = 3/2$ (bottom left) and $\nu = 5/2$ (bottom right) covariance functions for theoretical and empirical Kriging with different values of $\theta$ (with $J = 3$, $N = 1681$ and $d = 10$).}
  \vskip 0.15in
    \centering
    \begin{tabular}{ | c || c | c || c | c | }
        \hline
        \small{\texttt{CUB}}
        &
        \multicolumn{2}{| c ||}{ \small{\texttt{Theoretical}} } & \multicolumn{2}{| c |}{ \small{\texttt{Empirical}} }
        \\ \hline \hline
        $\theta$ & mean & std & mean & std
        \\ \hline
        $2.5$ & 0.979 & 0.091 & 0.983 & 0.088
        \\ \hline
        $5$ & 0.920 & 0.147 & 0.934 & 0.150
        \\ \hline
        $7.5$ & 0.847 & 0.232 & 0.869 & 0.241
        \\ \hline
        $10$ & 0.748 & 0.294 & 0.778 & 0.294
        \\ \hline
    \end{tabular}
    \hspace{0.3cm}
    \centering
    \begin{tabular}{ | c || c | c || c | c | }
        \hline
        \small{\texttt{SPH}}
        &
        \multicolumn{2}{| c ||}{ \small{\texttt{Theoretical}} } & \multicolumn{2}{| c |}{ \small{\texttt{Empirical}} }
        \\ \hline \hline
        $\theta$ & mean & std & mean & std
        \\ \hline
        $2.5$ & 0.984 & 0.079 & 0.997 & 0.081
        \\ \hline
        $5$ & 0.927 & 0.121 & 0.941 & 0.122
        \\ \hline
        $7.5$ & 0.870 & 0.188 & 0.890 & 0.195
        \\ \hline
        $10$ & 0.809 & 0.183 & 0.846 & 0.199
        \\ \hline
    \end{tabular}
        
    \vspace{0.3cm}
    
    \centering
    \begin{tabular}{ | c || c | c || c | c | }
        \hline
        \small{\texttt{EXP}}
        &
        \multicolumn{2}{| c ||}{ \small{\texttt{Theoretical}} } & \multicolumn{2}{| c |}{ \small{\texttt{Empirical}} }
        \\ \hline \hline
        $\theta$ & mean & std & mean & std
        \\ \hline
        $2.5$ & 0.890 & 0.178 & 0.923 & 0.184
        \\ \hline
        $5$ & 0.679 & 0.180 & 0.781 & 0.230
        \\ \hline
        $7.5$ & 0.548 & 0.158 & 0.922 & 1.976
        \\ \hline
        $10$ & 0.439 & 0.150 & 1.481 & 5.393
        \\ \hline
    \end{tabular}
  \\
  
  \vspace{0.3cm}
  
    \centering
    \begin{tabular}{ | c || c | c || c | c | }
        \hline
        \small{\texttt{$3/2$}}
        &
        \multicolumn{2}{| c ||}{ \small{\texttt{Theoretical}} } & \multicolumn{2}{| c |}{ \small{\texttt{Empirical}} }
        \\ \hline \hline
        $\theta$ & mean & std & mean & std
        \\ \hline
        $2.5$ & 0.791 & 0.308 & 0.769 & 0.281
        \\ \hline
        $5$ & 0.464 & 0.266 & 1.167 & 5.841
        \\ \hline
        $7.5$ & 0.260 & 0.174 & 0.548 & 0.793
        \\ \hline
        $10$ & 0.163 & 0.130 & 1.017 & 3.105
        \\ \hline
    \end{tabular}
    \hspace{0.3cm}
    \centering
    \begin{tabular}{ | c || c | c || c | c | }
        \hline
        \small{\texttt{$5/2$}}
        &
        \multicolumn{2}{| c ||}{ \small{\texttt{Theoretical}} } & \multicolumn{2}{| c |}{ \small{\texttt{Empirical}} }
        \\ \hline \hline
        $\theta$ & mean & std & mean & std
        \\ \hline
        $2.5$ & 0.791 & 0.269 & 0.756 & 0.249
        \\ \hline
        $5$ & 0.471 & 0.277 & 0.800 & 1.842
        \\ \hline
        $7.5$ & 0.236 & 0.191 & 20.945 & 179.206
        \\ \hline
        $10$ & 0.129 & 0.106 & 2.455 & 18.476
        \\ \hline
    \end{tabular}
  \label{tab:add_all_J3}
    \vskip 0.1in
\end{table}

The results on additional covariance functions lead to the same conclusions about the truncated power law and the Gaussian covariance functions: the predictive method may perform well, even if Assumption~\ref{hyp:decorr} is slightly violated.

\subsection{Anisotropic Covariance Function}

In this section, we study the role of the isotropy assumption for the covariance function (Assumption~\ref{hyp:stat_iso}). Based on the truncated power law and the Gaussian covariance models, we apply the same procedure as in Section~\ref{sec:num} to the case where anisotropic covariance functions are selected. The \texttt{gstools} library allows one to simulate an anisotropic covariance function with varying anisotropic ratios $\alpha$, where $\alpha = 1$ corresponds to the isotropy situation.

\begin{table}[h]
  \centering
  \caption{Mean and standard deviation of the AMSE over $100$ independent simulations of a Gaussian process with truncated power law (left) and Gaussian (right) covariance functions for theoretical and empirical Kriging with different values of the anisotropic ratio $\alpha$ (with $J = 3$, $N = 1681$, $d = 10$ and $\theta = 5$).}
  \vskip 0.15in
    \centering
    \begin{tabular}{ | c || c | c || c | c | }
        \hline
        \small{\texttt{TPL}}
        &
        \multicolumn{2}{| c ||}{ \small{\texttt{Theoretical}} } & \multicolumn{2}{| c |}{ \small{\texttt{Empirical}} }
        \\ \hline \hline
        $\alpha$ & mean & std & mean & std
        \\ \hline
        $0.9$ & 0.901 & 0.141 & 0.934 & 0.142
        \\ \hline
        $0.8$ & 0.925 & 0.139 & 0.951 & 0.135
        \\ \hline
        $0.7$ & 0.919 & 0.119 & 0.949 & 0.121
        \\ \hline
    \end{tabular}
  \hspace{0.18cm}
    \centering
    \begin{tabular}{ | c || c | c || c | c | }
        \hline
        \small{\texttt{GAUSS}}
        &
        \multicolumn{2}{| c ||}{ \small{\texttt{Theoretical}} } & \multicolumn{2}{| c |}{ \small{\texttt{Empirical}} }
        \\ \hline \hline
        $\alpha$ & mean & std & mean & std
        \\ \hline
        $0.9$ & 0.428 & 0.173 & 0.722 & 0.354
        \\ \hline
        $0.8$ & 0.486 & 0.186 & 0.750 & 0.315
        \\ \hline
        $0.7$ & 0.579 & 0.216 & 0.784 & 0.294
        \\ \hline
    \end{tabular}
    \label{tab:J3_anis}
\end{table}

The same setting is used in order to compare the results with those in Section~\ref{sec:num}: the covariance estimation is done thanks to a training dataset, observed at $n = 81$ sites ($J = 3$) and the prediction over the whole spatial domain is computed based on $d = 10$ observations. We fix the value of the correlation length at $\theta = 5$, and repeat the experiments for different degrees of anisotropy $\alpha \in \{0.9, 0.8, 0.7\}$.

\begin{figure}[h]
    \centering
    \subfigure[$\alpha = 0.9$]{\includegraphics[width=50mm]{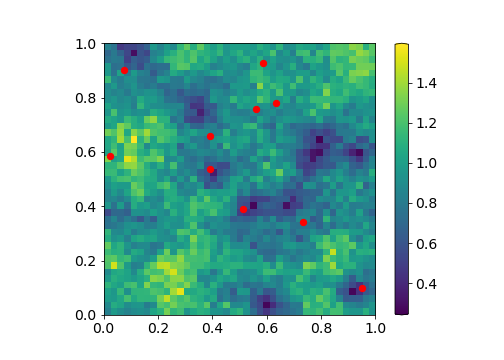}}
    \subfigure[$\alpha = 0.8$]{\includegraphics[width=50mm]{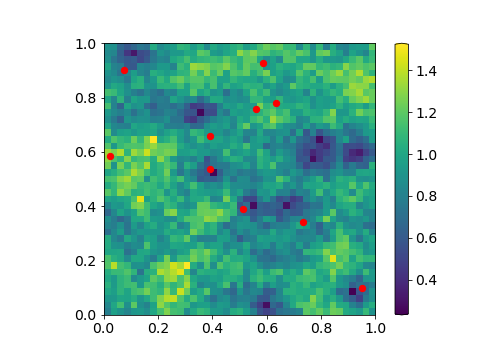}}
    \subfigure[$\alpha = 0.7$]{\includegraphics[width=50mm]{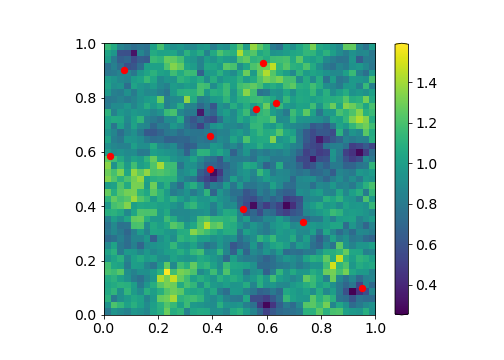}}
    \caption{MSE maps of over $100$ realizations of a Gaussian process with truncated power law covariance function for the empirical Kriging predictor with different values of the anisotropic ratio $\alpha$ ($J = 3$, $N = 1681$, $d = 10$ and $\theta = 5$).}
    \label{fig:TPL_J3_anis_Emp}
\end{figure}

The mean and standard deviation of the AMSE computed over the $100$ independent simulations of a Gaussian process, for both covariance models, are shown in Table~\ref{tab:J3_anis}. We observe that for both models, for the empirical Kriging prediction, the mean increases slightly when $\alpha$ decreases (so when the covariance function becomes more anisotropic), while the standard deviation decreases (see Table~\ref{tab:TPL_Gauss_J3_theta}, when $\theta = 5$ for a comparison with the isotropic case).

Figures~\ref{fig:TPL_J3_anis_Emp} and \ref{fig:Gauss_J3_anis_Emp} (for the truncated power law and the Gaussian model, respectively), show the complete MSE maps over $100$ realizations. It can be observed that the structure of the errors for both covariance models is similar to the maps obtained using an isotropic covariance function, with the same scale error and the same local area with small errors (see Figures~\ref{fig:TPL_J3_Emp_theta_5.0} and \ref{fig:Gauss_J3_Emp_theta_5.0}, respectively).

\begin{figure}[h]
    \centering
    \subfigure[$\alpha = 0.9$]{\includegraphics[width=50mm]{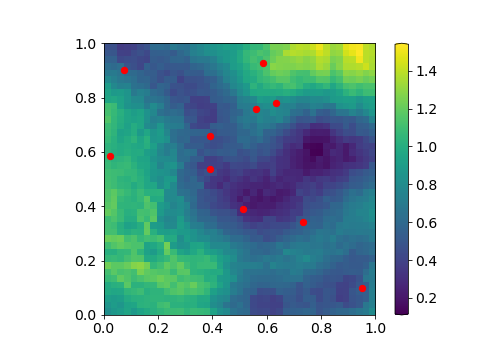}}
    \subfigure[$\alpha = 0.8$]{\includegraphics[width=50mm]{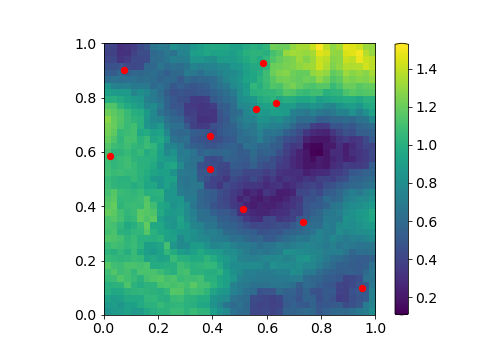}}
    \subfigure[$\alpha = 0.7$]{\includegraphics[width=50mm]{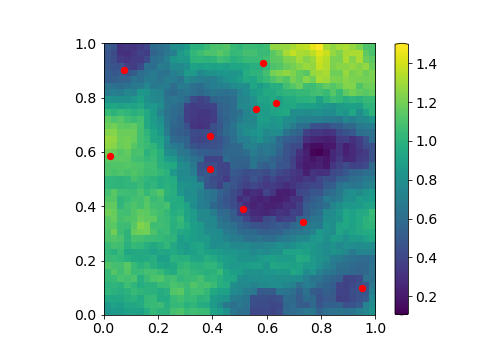}}
    \caption{MSE maps of over $100$ realizations of a Gaussian process with Gaussian covariance function for the empirical Kriging predictor  with different values of the anisotropic ratio $\alpha$ ($J = 3$, $N = 1681$, $d = 10$ and $\theta = 5$).}
    \label{fig:Gauss_J3_anis_Emp}
\end{figure}

These results, which show that the prediction methodology is robust with respect to slight departures from isotropy, encourage us to relax also the isotropic assumption, in some future work.

\subsection{Irregular Grid}\label{subsec:aux_irr_grid}

With the purpose of extending the theoretical results to a more general framework, we present the numerical results within a different setting: we consider the case where the simulations are done over an irregular grid. In this new setting, the realization of the random field used for the nonparametric covariance estimation (the training spatial dataset) is no longer observed at $n$ sites forming a dyadic grid. Instead of assuming that the training observations are made on a regular dyadic grid, we make the hypothesis that we have access only to a restricted number of these observations. The irregular grids are generated from regular grids using Bernoulli sampling, with varying probability $p$ ($\{0.8, 0.6, 0.4\}$) of observing a spatial site. The number of observed locations is $\{65, 50, 35\}$ (respectively).

\begin{table}[h]
  \centering
  \caption{Mean and standard deviation of the AMSE over $100$ independent simulations of a Gaussian process with truncated power law (left) and Gaussian (right) covariance functions for empirical Kriging with different probabilities $p$ for the Bernoulli sampling (with $N = 1681$, $d = 10$ and $\theta = 5$).}
  \vskip 0.15in
    \centering
    \begin{tabular}{ | c || c | c | }
        \hline
        \small{\texttt{TPL}} & \multicolumn{2}{| c |}{ \small{\texttt{Empirical}} }
        \\ \hline \hline
        $p$ & mean & std
        \\ \hline
        $0.8$ & 0.932 & 0.159
        \\ \hline
        $0.6$ & 0.934 & 0.156
        \\ \hline
        $0.4$ & 0.936 & 0.154
        \\ \hline
    \end{tabular}
  \hspace{0.18cm}
    \begin{tabular}{ | c || c | c | }
        \hline
        \small{\texttt{GAUSS}} & \multicolumn{2}{| c |}{ \small{\texttt{Empirical}} }
        \\ \hline \hline
        $p$ & mean & std
        \\ \hline
        $0.8$ & 0.686 & 0.320
        \\ \hline
        $0.6$ & 0.663 & 0.267
        \\ \hline
        $0.4$ & 0.687 & 0.370
        \\ \hline
    \end{tabular}
  \label{tab:all_irr_grid}
\end{table}

The estimation of the covariance function is computed as before, using Equation~\eqref{eq:emp_cov}. Since this framework could imply (likely but not surely) situations where for some $h$ previously observed on the complete dyadic grid, are not present anymore, we handle these cases by skipping the estimation of the covariance function for these values and simply applying the 1-NN estimator in the prediction step (as stated in subsection~\ref{subsec:nonpar_cov_est}). For the simple Kriging prediction, we use the same independent realization of $X$ observed at $d = 10$ sites. We present the results only for the truncated power law and the Gaussian covariance models, for $\theta = 5$: the mean and the standard deviation of the AMSE are displayed in Table~\ref{tab:all_irr_grid}. Note that, the sampled locations over the irregular grid are fixed for the $100$ replications of the experiment, as well as the $d$ locations for the prediction test. For both covariance models, we observe that, as fewer and fewer observations are selected, the AMSE and the standard deviation do not vary much from the regular grid situation. This shows that both models are robust against irregular sampling for the covariance estimation.

\begin{figure}[h]
    \centering
    \subfigure[$p = 0.8$]{\includegraphics[width=50mm]{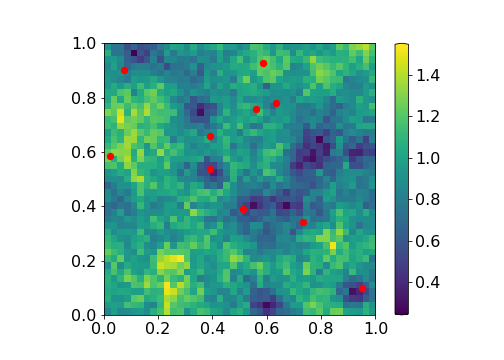}}
    \subfigure[$p = 0.6$]{\includegraphics[width=50mm]{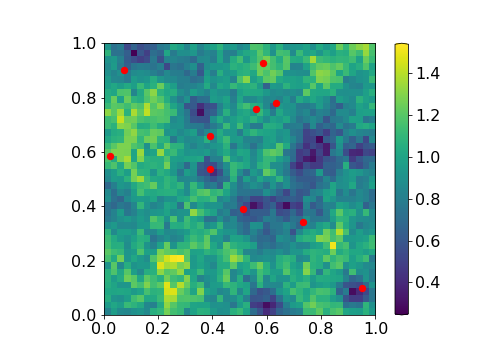}}
   \subfigure[$p = 0.4$]{\includegraphics[width=50mm]{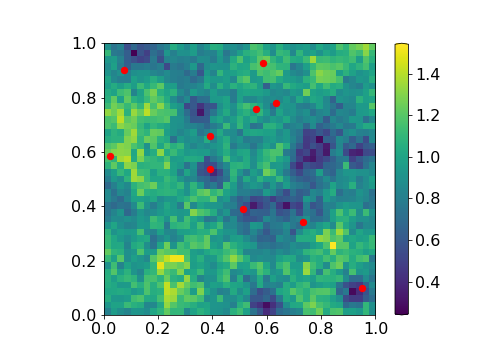}}
    \caption{MSE maps of over $100$ realizations of a Gaussian process with truncated power law covariance function for the empirical Kriging predictor with different probabilities $p$ for the Bernoulli sampling ($N = 1681$, $d = 10$ and $\theta = 5$).}
    \label{fig:TPL_J3_irr_grid_Emp}
\end{figure}

\begin{figure}[h]
    \centering
    \subfigure[$p = 0.8$]{\includegraphics[width=50mm]{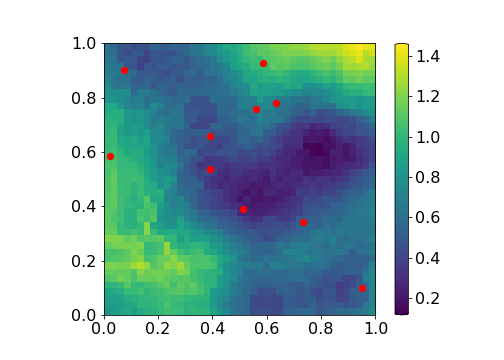}}
    \subfigure[$p = 0.6$]{\includegraphics[width=50mm]{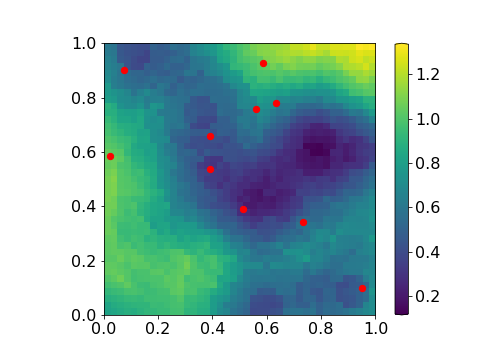}}
    \subfigure[$p = 0.4$]{\includegraphics[width=50mm]{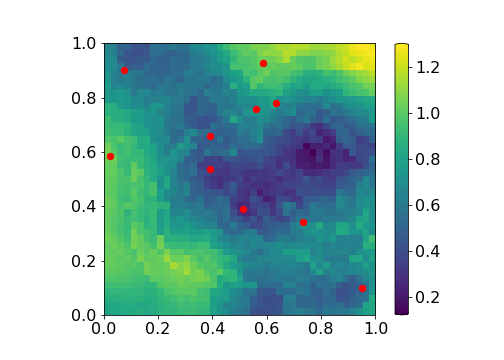}}
    \caption{MSE maps of over $100$ realizations of a Gaussian process with Gaussian covariance function for the empirical Kriging predictor with different probabilities $p$ for the Bernoulli sampling ($N = 1681$, $d = 10$ and $\theta = 5$).}
    \label{fig:Gauss_J3_irr_grid_Emp}
\end{figure}

Regarding the structure of these errors for the truncated power law covariance model, as it can be seen in the maps of the mean squared errors in Figure~\ref{fig:TPL_J3_irr_grid_Emp}, for all the values of $p$, the results are quite similar to the case where the observations are taken on a regular dyadic grid (see Figure~\ref{fig:TPL_J3_Emp_theta_5.0}). For the Gaussian model, for $p = 0.8$ and $p = 0.6$, we recognize the same errors' structure as in Figure~\ref{fig:Gauss_J3_Emp_theta_5.0} with some border effects. Still, when $p$ decreases, the errors seem to be expanded over the spatial domain.

As all previous results, the ones on the irregular sampling setup may be a motivation to extend the theoretical study to a more general framework, including new grid of observations.

\section{Additional Comments}
In this section, we first mention alternative statistical frameworks for the Kriging problem. Next, we underline the role of the technical assumptions involved in the analysis proposed in subsection \ref{subsec:nonpar_cov_est} and discuss possible avenues to relax some of these hypotheses in order to extend the main results of this paper to a more general framework.

\subsection{Alternative Frameworks}

As mentioned in the Introduction and in subsection \ref{subsec:kriging} (see Remark~\ref{rem:alt_framework}), other statistical frameworks for Kriging have already been studied. For example, existing studies have proposed to place oneself in the 'increasing domain' asymptotic (also referred to as the \textit{out-fill} setting), where the spatial domain $S$ under study becomes wider and wider as the number of observations $n$ grows and a minimum distance between neighbouring sampling locations is assumed, see \textit{e.g.} \citep{Mardia, Sherman}. Other researchers \citep{Hall, lahiri1999asymptotic, Lahiri_Cressie, putter2001} have also considered a hybrid setting, where a combination of in-fill and increasing domain asymptotics point of view is taken, often assuming that both the size of the spatial domain of the observations and the number of observations in each of its subsets grow with $n$.

Recall that, in \textit{simple Kriging}, one assumes that the mean of the spatial process is known. However, in most practical situations, the mean is unknown. \textit{Ordinary Kriging} is an interpolation method that does not require any knowledge of the mean and is suitable in these situations (see \textit{e.g.} \citep[Section 3]{Chiles.etal1999}). Furthermore, instead of the strong second-order stationarity assumption that is classically made, one can relax it to a weak stationarity assumption in such a way that only the variance is assumed to be constant over the spatial domain, while the mean can differ in a deterministic way and so present a spatial trend: this other framework is called \textit{Universal Kriging}. Another framework is \textit{Cokriging}, an interpolation method that uses additional observed variables, often correlated with each other and with the variable of interest, to improve the precision of the interpolation.

Note that in \citep{8645258}, non-asymptotic analysis has been carried out but in a much more restrictive statistical framework regarding practical assumptions: concentration analysis bounds have been derived when independent copies of the spatial process are assumed to be observed.

For asymptotic results, we refer interested readers to \citep[Section 3]{stein1999interpolation} for an overview.

Given the massive spatial datasets that are available, when seeking for flexible methods, we opt for a nonparametric approach rather than for a parametric one that requires the selection of a variogram model and the estimation of the unknown parameters (see \textit{e.g.} \citep{Zimmerman, zimmermancressie, Mller1999OptimalDF}). Moreover, one can notice that previous studies carried out nonparametric estimations of the covariance function (see \textit{e.g.} \citep{hall1994nonparametric, RePEc:spr:sistpr:v:11:y:2008:i:2:p:177-205, vershynin2010close, loukas17a}).

Regarding these last points, even if nonparametric approaches have already been documented, the non-asymptotic perspective embraced in the finite-sample analyses we have carried out in this paper, generates difficulties that had not been addressed before.

\subsection{Future Lines of Research}

The goal of this paper is to explain the main ideas to achieve generalization guarantees in the spatial context: to this purpose, we use several simplifying technical assumptions and discuss here possible extensions that will be studied in a future work.

As highlighted, first in the proofs of the main results and later in the Numerical Experiments (see Appendix~\ref{sec:add_num} and Section~\ref{sec:num}), even in the simplest framework, the analysis is far from straightforward. In the sketch of proof (and in the more detailed proofs in Appendix~\ref{sec:technical_proofs}), the importance of certain hypotheses is underlined by showing how some hypotheses are necessary for completing the steps of the proofs. Now our objective is to find which assumptions can be relaxed and with which consequences on the learning bounds.

For instance, Assumption~\ref{hyp:decorr} on the covariance function may seem restrictive but greatly simplifies the argument, making it more understandable. Indeed, relaxing it would require a much more technical analysis involving the decay rate of the covariance function. Furthermore, as highlighted in the numerical experiments, even if the Gaussian kernel fails to satisfy Assumption~\ref{hyp:decorr}, the empirical results encourage us to generalize our theoretical analysis to a more general framework. This remark can also be done for the two additional covariance models that do not satisfy Assumption~\ref{hyp:decorr}, and even tend to zero less quickly than the Gaussian model, namely the exponential and the Matern (when $\nu = 3/2$) models.

Another possible extension is to replace Assumption~\ref{hyp:smooth} by alternative smoothness hypotheses (\textit{e.g.} the mapping $h\in [0,\; 1-2^{-j_1}]\mapsto c(h)$ can be assumed of class $\mathcal{C}^{2}$), inducing a possibly different bias term in the excess risk bound.

Finally, other types of observation grids can be considered (like irregular ones), implying technical difficulties, for example when controlling the spectrum of the covariance matrix (see the proof of Proposition~\ref{prop:CI_var_lag}). This may also result in an estimation bias for the semi-variogram or may lead to defining different sets $N_{\varepsilon}(h)$ of neighbours like the set of pairs that are at distance more or less $h$ (with error $\varepsilon > 0$). For a first look at the results that one can obtain within this alternative framework, please refer to subsection~\ref{subsec:aux_irr_grid}, where irregular locations where considered under preferential sampling.

\end{document}